\DeclareMathOperator*{\argmin}{arg\,min}
\DeclareMathOperator*{\argmax}{arg\,max}
\algnewcommand{\InlineIf}[1]{\State\algorithmicif\ #1\ \algorithmicthen}
\algnewcommand{\EndInlineIf}{\unskip}
\algnewcommand{\InlineFor}[1]{\State\algorithmicfor\ #1\ \algorithmicdo}
\algnewcommand{\EndInlineFor}{\unskip}
\newtheorem{example}{Example}
\newtheorem{problem}{Problem}
\newcommand{\Anc}[2]{\textit{Anc}_{#2}(#1)}
\newcommand{\G}[0]{\mathcal{G}}
\newcommand{\F}[0]{\mathcal{F}}
\newcommand{\independent}{\perp\mkern-9.5mu\perp}
\theoremstyle{plain}
\newtheorem{theorem}{Theorem}[section]
\newtheorem{proposition}[theorem]{Proposition}
\newtheorem{lemma}[theorem]{Lemma}
\newtheorem{corollary}[theorem]{Corollary}
\theoremstyle{definition}
\newtheorem{definition}[theorem]{Definition}
\newtheorem{assumption}{Assumption}[section]
\newtheorem{remark}{Remark}[section]
\title{Causal Effect Identification in Uncertain Causal Networks}
\author{%
  Sina Akbari
    \\
  EPFL, Switzerland\\
  \texttt{sina.akbari@epfl.ch} \\
  \And
  Fateme Jamshidi \\
  EPFL, Switzerland \\
  \texttt{fateme.jamshidi@epfl.ch} \\
  \AND
  Ehsan Mokhtarian \\
  EPFL, Switzerland \\
  \texttt{ehsan.mokhtarian@epfl.ch} \\
  \And
  Matthew J. Vowels \\
  University of Lausanne, Switzerland \\
  \texttt{matthew.vowels@unil.ch } \\
  \And
  Jalal Etesami \\
  TUM, Germany \\
  \texttt{j.etesami@tum.de} \\
  \And
  Negar Kiyavash \\
  EPFL, Switzerland \\
  \texttt{negar.kiyavash@epfl.ch} \\
}
\begin{document}

\maketitle

\begin{abstract}
    Causal identification is at the core of the causal inference literature, where complete algorithms have been proposed to identify causal queries of interest. 
    The validity of these algorithms hinges on the restrictive assumption of having access to a correctly specified causal structure. 
    In this work, we study the setting where a probabilistic model of the causal structure is available. 
    Specifically, the edges in a causal graph exist with uncertainties 
    which may, for example, represent degree of belief from domain experts. 
    Alternatively, the uncertainty about an edge may reflect the confidence of a particular statistical test. 
    The question that naturally arises in this setting is: Given such a probabilistic graph and a specific causal effect of interest, what is the subgraph which has the highest plausibility and for which the causal effect is identifiable? We show that answering this question reduces to solving an NP-complete combinatorial optimization problem which we call the edge ID problem. 
    We propose efficient algorithms to approximate this problem and evaluate them against both real-world networks and randomly generated graphs. 
\end{abstract}

\section{Introduction}
A large proportion of questions of interest in various fields, including but not limited to psychology, social sciences, behavioral sciences, medical research, epidemiology, economy, etc., are causal in nature \citep{vanderLaan2011, Pearl2009, Bareinboim2020}. 
In order to estimate causal effects, the gold standard is performing controlled interventions and experiments. Unfortunately, such experiments can be prohibitively expensive, unethical, or impractical
\citep{Deaton2018, Glymour2019}. 
In contrast, non-experimental data are comparatively abundant, and no expensive interventions are required to generate such data. 
This has motivated the development of numerous techniques for understanding whether a causal query can be answered using observational data. Specifically, if a particular causal query is \emph{identifiable}, it means it can be expressed as a function of the observational distribution and thus estimated from observational data (at least in principle).

A significant body of the causal inference literature is dedicated to the identification problem \citep{Tian2002, Pearl2009, shpitser2006identification, Imbens1994, Pearl1993backdoor}. 
In particular, \cite{huang2006pearls} presented a complete algorithmic approach to decide the identifiability of a specific query and proved that Pearl's do calculus is complete. 
Furthermore, \cite{shpitser2006identification} provided graphical criteria to decide the identifiability based on the \emph{hedge} criterion.
However, all of these results hinge on the full specification of the causal structure, i.e., access to a correctly specified Acyclic Directed Mixed Graph (ADMG) that models the causal dynamics of the system. 
This requirement is restrictive in a number of ways.
(i) The causal identification problem is concerned with inference from the observational data, but the ADMG cannot be inferred from the observational distribution alone. 
(ii) structure learning methods rely heavily on statistical tests \citep{spirtes2000causation, colombo2012learning, akbari2021recursive}, which are prone to errors arising from lack of sufficient data and method-specific limitations \citep{Shah2020}, which can result in misspecification of the causal structure.

A notable work to address the second challenge is \cite{margaritis1999bayesian}, where the authors proposed a causal discovery method that assigns a probability to the existence of each edge, as opposed to making ``hard'' decisions based on a single or limited number of tests.
This is achieved by updating the posterior probability of a direct link between two variables, denoted as $X$ and $Y$, using conditional dependence tests that incorporate independent evidence.
The authors demonstrate that the posterior probability denoted as $p_i$, of a link between $X$ and $Y$ after performing $i$ dependence tests ($d_j$, where $j$ ranges from $1$ to $i$) is given by the following equation:
\begin{equation*}
    p_i = \frac{p_{i-1}d_i}{p_{i-1}d_i + (1-p_{i-1})(G+1-d_i)},
\end{equation*}
where $G$ represents a factor ranging from $0$ to $1$, and it can be interpreted as the measure of ``importance" or relevance associated with the truthfulness of each test, denoted as $d_i$.
Other closely linked work include \cite{baptista2023} and \cite{akbari2023learning}, where an optimal transport (OT)-based causal discovery approach is proposed that assigns a score to each edge of the graph denoting the strength of that edge.

While previous research, such as \cite{margaritis1999bayesian}, has focused on probabilistic learning of causal graphs, the problem of causal effect identification on these graphs remains unexplored.
This paper introduces a novel approach to address causal effect identification in a scenario where only a probabilistic model of the causal structure is available, as opposed to a complete specification of the structure itself.
For instance, an ADMG $\G$ is given along with probabilities assigned to each edge of $\G$. 
An example is shown in Figure~\ref{fig:1a}.
These probabilities could represent uncertainties arising from statistical tests or the strength of belief of domain experts concerning the plausibility of the existence of an edge. 
Under this setting, each ADMG on the set of vertices of $\G$ is assigned its own plausibility score.
Since the causal structure is not deterministic anymore, answering questions such as ``\emph{is the causal effect $P(Y\vert do(X))$ identifiable?}'' also becomes probabilistic in nature. One can compare the overall plausibility of different subgraphs in which the causal effect is identifiable and then select the graph which maximizes the plausibility. 
In this work, for a specific causal query $P(Y\vert do(X))$, we first answer the question, ``Which graph has the highest plausibility among those compliant with the probabilistic ADMG model that renders $P(Y\vert do(X))$ identifiable?'' 
The answer to this question allows us to quantify how confident we are about the causal identification task given the combination of the data at hand and the corresponding probabilistic model. 

As the causal identification task is carried out through an identification formula that is based on the causal structure, our second focus is on deriving an identification formula for a given causal query that holds with the highest probability.
This problem poses a greater challenge compared to the former, as a single identification formula can be valid for various graphs. 
Consequently, the probability that a particular identification formula is valid for a causal query becomes an aggregate probability across all graphs where this formula holds. 
We shall illustrate this point in more detail through Example \ref{example:1} in Section \ref{sec:prelim}.
The intricacy of characterizing all valid graphs for a given formula makes it an intractable and open problem.
To surmount this obstacle, we establish that if an identification formula is valid for a causal graph, it remains valid for all edge-induced subgraphs of that graph.
This allows us to form a surrogate problem (see Problem 2 in Section \ref{sec:problems}) that recovers a causal graph with the highest aggregated probability of its subgraphs.
The probability of the resulting graph provides a lower bound on the highest probable identification formula.

Both problems discussed in this work are aimed at evaluating the plausibility of performing causal identification for a specific query given a dataset and a non-deterministic model describing the causal structure.
To sum up, our main contributions are as follows.

\begin{enumerate}[leftmargin=*]
    \item We study the problem of causal identifiability in probabilistic causal models, where there are uncertainties about the existence of edges and whether a given causal effect is identifiable. 
    More precisely, we consider two problems: 1) finding the most probable graph that renders a desired causal query identifiable, and 2) finding the graph  with the highest aggregate probability over its edge-induced subgraphs that renders a desired causal query identifiable.
    
    \item We show that both aforementioned problems reduce to a special combinatorial optimization problem which we call the \emph{edge ID problem}.
    Moreover, we prove that the edge ID problem is NP-complete, and thus, so are both of the problems.
    
    \item
    We propose several exact and heuristic algorithms for the aforementioned problems and evaluate their performances through different experiments.
\end{enumerate}

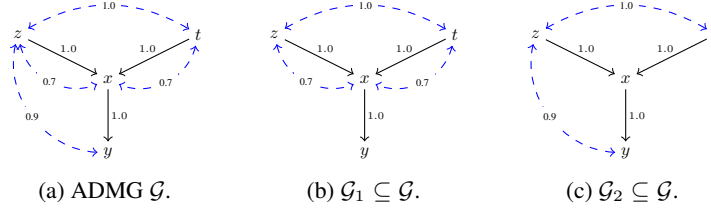
\begin{figure}[t] 
    \centering
    \tikzstyle{block} = [circle, inner sep=1.7pt, fill=black]
		\tikzstyle{input} = [coordinate]
		\tikzstyle{output} = [coordinate]

	\begin{subfigure}{0.24\textwidth}
	    \centering
	    \begin{tikzpicture}[scale=0.75, every node/.style={scale=0.65}]
            \tikzset{edge/.style = {->,> = latex'}}
            \node (X) at (0,0) {$x$};
            \node (Z) at (-1.6,0.8) {$z$};
            \node (T) at (1.6,0.8) {$t$};
            \node (Y) at (0,-1.3) {$y$};
            \node (tx) at ($(X)!0.5!(T)+ (-0.1, 0.15)$){\tiny $1.0$};
            \node (zx) at ($(Z)!0.5!(X)+ (0.1, 0.15)$){\tiny $1.0$};
            \node (xy) at ($(X)!0.5!(Y)+ (0.2, 0)$){\tiny $1.0$};
            
            \path[->]
            (Z) edge[style = {->}] (X);
            \path[->]
            (X) edge[style = {->}] (Y);
             \path[->]
            (T) edge[style = {->}] (X);
            
            \path[->]
            (Z) edge[blue,dashed, style = {<->}, bend right=50] node[fill=white, anchor=center, pos=0.5] {\color{black}\tiny 0.7} (X);
            \path[->]
            (Z) edge[blue,dashed, style = {<->}, bend right=50] node[fill=white, anchor=center, pos=0.5] {\color{black}\tiny 0.9} (Y);
            \path[->]
            (T) edge[blue,dashed, style = {<->}, bend left=50] node[fill=white, anchor=center, pos=0.5] {\color{black}\tiny 0.7} (X);
            \path[->]
            (Z) edge[blue,dashed, style = {<->}, bend left=30] node[fill=white, anchor=center, pos=0.5] {\color{black}\tiny 1.0} (T);
        \end{tikzpicture}
    \caption{ADMG $\G$.}
    \label{fig:1a}
	\end{subfigure}\hspace{0cm}
	\begin{subfigure}[b]{0.24\textwidth}
	    \centering
	    \begin{tikzpicture}[scale=0.75, every node/.style={scale=0.65}]
            \tikzset{edge/.style = {->,> = latex'}}
            \node (X) at (0,-0.) {$x$};
            \node (Z) at (-1.6,0.8) {$z$};
            \node (T) at (1.6,0.8) {$t$};
            \node (Y) at (0,-1.3) {$y$};
            \node (tx) at ($(X)!0.5!(T)+ (-0.1, 0.15)$){\tiny $1.0$};
            \node (zx) at ($(Z)!0.5!(X)+ (0.1, 0.15)$){\tiny $1.0$};
            \node (xy) at ($(X)!0.5!(Y)+ (0.2, 0)$){\tiny $1.0$};
            \path[->]
            (Z) edge[style = {->}] (X);
            \path[->]
            (X) edge[style = {->}] (Y);
            \path[->](T) edge[style = {->}] (X);
            
            \path[->]
            (Z) edge[blue,dashed, style = {<->}, bend right=50] node[fill=white, anchor=center, pos=0.5] {\color{black}\tiny 0.7} (X);
            \path[->]
            (T) edge[blue,dashed, style = {<->}, bend left=50] node[fill=white, anchor=center, pos=0.5] {\color{black}\tiny 0.7} (X);
            \path[->]
            (Z) edge[blue,dashed, style = {<->}, bend left=30] node[fill=white, anchor=center, pos=0.5] {\color{black}\tiny 1.0} (T);
        \end{tikzpicture}
    \caption{$\G_1\subseteq \G$.}
    \label{fig:1b}
	\end{subfigure}\hspace{0cm}
	\begin{subfigure}[b]{0.24\textwidth}
	    \centering
	    \begin{tikzpicture}[scale=0.75, every node/.style={scale=0.65}]
            \tikzset{edge/.style = {->,> = latex'}}
            \node (X) at (0,-0.) {$x$};
            \node (Z) at (-1.6,0.8) {$z$};
            \node (T) at (1.6,0.8) {$t$};
            \node (Y) at (0,-1.3) {$y$};
            \node (tx) at ($(X)!0.5!(T)+ (-0.1, 0.15)$){\tiny $1.0$};
            \node (zx) at ($(Z)!0.5!(X)+ (0.1, 0.15)$){\tiny $1.0$};
            \node (xy) at ($(X)!0.5!(Y)+ (0.2, 0)$){\tiny $1.0$};
            \path[->]
            (Z) edge[style = {->}] (X);
            \path[->]
            (X) edge[style = {->}] (Y);
             \path[->]
            (T) edge[style = {->}] (X);
            
            \path[->]
            (Z) edge[blue,dashed, style = {<->}, bend right=50] node[fill=white, anchor=center, pos=0.5] {\color{black}\tiny 0.9} (Y);
            \path[->]
            (Z) edge[blue,dashed, style = {<->}, bend left=30] node[fill=white, anchor=center, pos=0.5] {\color{black}\tiny 1.0} (T);
        \end{tikzpicture}
    \caption{$\G_2\subseteq \G$.}
    \label{fig:1c}
	\end{subfigure}
	
    \caption{(a) An example of a probabilistic ADMG $\G$ with corresponding edge probabilities. (b) and (c) are  two different subgraphs of $\G$ in which $Q[y]$ is identifiable.}
    \label{fig:example}
\end{figure}   

\section{Preliminaries}\label{sec:prelim}
We utilize small letters for variables and capital letters for sets of variables.
Calligraphic letters are used to denote graphs.
An acyclic directed mixed graph (ADMG) $\G=(V^\G,E_d^\G,E_b^\G)$ is defined as an acyclic graph on the vertices $V^\G$, where $E_d^\G \subseteq V^\G \times V^\G$ and $E_b^\G \subseteq \binom{V^\G}{2}$ are the set of directed and bidirected edges among the vertices, respectively.
With slight abuse of notation, if $e\in E_d ^\G \cup E_b^\G$, we write $e\in\mathcal{G}$.
We use $\G'\subseteq\G$ when $\G'$ is an edge-induced subgraph of $\G$, i.e., $\G'=(V^{\G'},E_d^{\G'},E_b^{\G'})$, where $V^{\G'}=V^\G$ and $E_i^{\G'}\subseteq E_i^\G$ for $i\in\{b,d\}$.
We denote by $\G[X]$ the vertex-induced subgraph of $\G$ over the subset of vertices $X\subseteq V^\G$.
For a set of vertices $X$, we denote by $\Anc{X}{\G}$ the set of vertices in $\G$ that have a directed path to $X$.
Note that $X\subseteq \Anc{X}{\G}$.
Let $P_X(Y)$ be a shorthand for $P(Y\vert do(X))$, and $P^M(\cdot)$ denote the distribution of variables described by the causal model $M$.

\begin{definition}[Identifiability \cite{Pearl2009}]
    Given a causal ADMG $\G=(V^\G,E_d^\G,E_b^\G)$, and two disjoint subsets of variables $X,Y\subseteq V^\G$, the causal effect of $X$ on $Y$, denoted by $P_X(Y)$, is identifiable in $\G$ if $P^{M_1}_X(Y)=P^{M_2}_X(Y)$ for any two models $M_1$ and $M_2$ that induce $\G$ and $P^{M_1}(V^\G)=P^{M_2}(V^\G)>\!0$.
\end{definition}

\begin{definition}[Valid identification formula]\label{def:idformula}
    For a causal ADMG $\G$ over variables $V^\G$ and a causal query $P_X(Y)$, we say a functional $\F$ defined on the probability space over $V^\G$ is a valid identification formula for $P_X(Y)$ in $\G$ if $P^{M_1}_X(Y)=P^{M_2}_X(Y)=\mathcal{F}(P^{M_1}(V^\G))=\mathcal{F}(P^{M_2}(V^\G))$ for any two models $M_1$ and $M_2$ that induce $\G$ and $P^{M_1}(V^\G)=P^{M_2}(V^\G)>0$.
\end{definition}

For any query $P_X(Y)$, let $[\mathcal{G}]_{Id(P_X(Y))}$ denote the set of subgraphs of $\mathcal{G}$ in which $P_X(Y)$ is identifiable (note that if $\mathcal{G}$ is complete graph, $[\mathcal{G}]_{Id(P_X(Y))}$ is the set of all graphs in which $P_X(Y)$ is identifiable.)
We denote by $Q[Y]$ the causal effect of $V\!  \setminus\! Y$ on $Y$, i.e., $Q[Y]\!=\!P(Y\vert do(V\! \setminus\! Y))$.
\begin{definition}[District]
    For ADMG $\G=(V^\G,E_d^\G,E_b^\G)$, let $\G_\leftrightarrow$ denote the edge-induced subgraph of $\G$ over its bidirected edges.
    $X\subseteq V^\G$ is a district (aka c-component) in $\G$ if $\G_\leftrightarrow[X]$ is connected.
\end{definition}
\begin{definition}[Hedge \cite{shpitser2006identification}]\label{def:hedge}
    Let $\G$ be an ADMG, and $Y\subsetneq X$ be two subsets of its vertices, where $Y$ is a district in $\G[Y]$.
    Vertices $X$ form a hedge for $Q[Y]$ if $X$ is a district in $\G[X]$ and $\Anc{Y}{\G[X]}=X$\footnote{We showed in our previous work \cite{akbari2022minimum} that this intuitive definition is equivalent to the standard definition of hedge in \cite{shpitser2006identification}.}.
\end{definition}
\begin{definition}[Maximal hedge \cite{akbari2022minimum}]\label{def:mh}
    For ADMG $\G$ with set vertices $Y$, let $X$ be the union of all hedges formed for $Q[Y]$.
    Graph $\G[X]$, denoted by \textbf{MH}$(\G,Y)$, is called the maximal hedge for $Q[Y]$.
\end{definition}
As an example, both sets $\{t,x\}$ and $\{z,x\}$ form a hedge for $Q[x]$ in $\G$ in Figure \ref{fig:1a}, and $\G[\{x,z,t\}]$ is the maximal hedge for $Q[x]$.

\subsection{Problem setup}\label{sec:problems}
Let $\mathcal{G}=(V^\G,E_d^\G,E_b^\G)$ be an ADMG, where $V^\G$ is the set of vertices each representing an observed variable of the system, $E_d^\G$ is the set of directed edges, and $E_b^\G$ is the set of bidirected edges among $V^\G$.
We know \textit{a priori} that the true ADMG describing the system is an edge-induced subgraph of $\mathcal{G}$\footnote{Note that in case such information is not available, one can simply take $\G$ to be a complete graph over both its directed and bidirected edges, as every graph is a subgraph of the complete graph.
Even when a certain $\mathcal{G}$ is available but the investigator is uncertain about it, the complete graph can serve as a conservative choice to eliminate uncertainty.}, and we are given a probability map that indicates for each subgraph of $\mathcal{G}$ such as $\mathcal{G}_s$, with what probability $\mathcal{G}_s$ is the true causal ADMG of the system.
We denote this probability as $P(\mathcal{G}_s)$.
For instance, if edge probabilities $p_e$ are assumed to be mutually independent, $P(\mathcal{G}_s)$ takes the form:
\begin{equation}\label{eq:prob}
    P(\G_s) = \prod_{e\in \G_s}p_e \prod_{e\notin \G_s}(1-p_e). 
\end{equation}
In what follows, we will refer to $P(\mathcal{G}_s)$ simply as the probability of the ADMG $\G_s$.
The first problem of our interest is formally defined as follows.
\begin{problem} \label{pb:1}
We consider the problem of finding the most probable edge-induced subgraph of $\mathcal{G}$, in which the causal effect $Q[Y]$\footnote{As we shall discuss in Remark \ref{rem:qy}, our results are not limited to causal queries of the form $Q[Y]$.
They are applicable to general causal queries of the form $P_X(Y)$
as long as the set of ancestors of $Y$ are known.
} is identifiable.
That is, the goal is to find the ADMG $\G^*$ defined by
\begin{equation}\label{eq:p1}
    \mathcal{G}^*:=\argmax_{\substack{\G_s\subseteq\G, \G_s\in [\mathcal{G}]_{Id(Q[Y])}}}P(\G_s).
\end{equation}
\end{problem}
We will prove in Lemma \ref{lem:subg} that if $Q[Y]$ is identifiable in $\G$, then it is also identifiable in every edge-induced subgraph of $\G$. In other words, if $\G$ is a feasible solution to the above optimization problem, so are all its edge-induced subgraphs. 
Furthermore, the same identification functional that is valid w.r.t. $\G$, is also valid w.r.t. every subgraph of $\G$.
~Let us illustrate this first on an example.

\begin{example}\label{example:1}
Consider the ADMG in Figure \ref{fig:1a}.
With the given edge probabilities and assuming independence among the edge probabilities, the subgraph of $\G$ illustrated in Figure \ref{fig:1b} has probability $0.7\times 0.7\times 0.1=0.049$, whereas the subgraph of Figure \ref{fig:1c} has probability $0.3\times0.3\times0.9=0.081$ (see Eq.~\eqref{eq:prob}).
If we were to solve Problem \ref{pb:1}, we would choose $\G_2$ over $\G_1$, as it has a higher probability.
Now consider identification formulas in $\G_1$ and $\G_2$, respectively:
\[\begin{split}
    &\mathcal{F}_1: Q[Y] = P(Y\vert X),\quad\quad\mathcal{F}_2: Q[Y] = \sum_{Z,T}P(Y\vert X,Z,T)P(Z,T).
\end{split}\]
\normalsize
$\mathcal{F}_1$ is a valid identification formula for any edge-induced subgraph of $\G_1$ (see Lemma \ref{lem:subg}).
Analogously, $\mathcal{F}_2$ is valid for all edge-induced subgraphs of $\G_2$.
If we consider the aggregate probability of the subgraphs of $\G_1$ and $\G_2$, i.e.,
\[
\begin{split}
  &\sum_{\hat{\G}\subseteq\G_1}P(\hat{\G})=1-0.9=0.1,\quad  \text{vs.}\quad\sum_{\hat{\G}\subseteq\G_2}P(\hat{\G})=(1-0.7)\times(1-0.7)=0.09,
\end{split}
\]
then we should prefer choosing $\G_1$ over $\G_2$, as its identification formula $\mathcal{F}_1$  is more likely to be valid than  $\mathcal{F}_2$ considering the fact that for all its subgraphs, the identification functional $\mathcal{F}_1$ is still valid.
\end{example}

\emph{Plausibility} of a certain identification functional $\mathcal{F}$ is the sum of the probabilities of all graphs in which $\mathcal{F}$ is valid given the query of interest.
Finding the most plausible identification formula for a given query requires computing the plausibility of all formulae. 
Since the space of all formulae is intractable, alternatively, we can enumerate all valid formulae for a given graph which changes the problem's search space to the space of all graphs.
However, this is yet another challenging and, to the best of our knowledge, an open problem.
Instead, we proposed a surrogate problem that yields a lower bound for the most plausible identification formula and identifies a causal graph where this formula is valid.
To do so, we use the result of Lemma \ref{lem:subg} that shows when an identification functional is valid in a causal graph, it is also valid in all its edge-induced subgraphs.

\begin{problem}\label{pb:2}
Consider the problem of finding the edge-induced subgraph $\mathcal{H}^*$ of $\G$ with a maximum aggregate probability of its subgraphs, in which $Q[Y]$ is identifiable.
Formally,
\begin{equation}\label{eq:p2}
\mathcal{H}^*:=\argmax_{\substack{\G_s\subseteq\G, \G_s\in [\mathcal{G}]_{Id(Q[Y])}}}\sum_{\hat{\G}\subseteq\G_s}P(\hat{\G}).
\end{equation}
\end{problem}
In other words, we are looking for a causal graph $\mathcal{H}^*$ with the maximum aggregate probability of its subgraphs among the graphs in $[\G]_{Id(Q[Y])}$, i.e., the graphs in which $Q[Y]$ is identifiable.
Running an identification algorithm on $\mathcal{H}^*$ yields an identification formula for $Q[Y]$, which is valid at least with the aggregate probability of the subgraphs of $\mathcal{H}^*$.
Therefore, Problem \ref{pb:2} is a surrogate for recovering the identification formula with the highest plausibility.
In the sequel, for simplicity, we study Problems \ref{pb:1} and \ref{pb:2} under the following assumption.
However, as proved in Appendix \ref{apx:corr}, our results are valid in a more general setting where we allow only for perfect negative or positive correlations among the edges.
An example of a perfect negative correlation between two edges is that both cannot exist simultaneously.
Appendix \ref{apx:corr2} discusses the significance of this generalization.
\begin{assumption}\label{ass:indep}
    The edges of $\G$ are  mutually independent, i.e., the probability of a subgraph $\G_s$ of $\G$ is of the form~\eqref{eq:prob}.
\end{assumption}

\begin{remark}\label{rem:qy}
It is noteworthy that our results are not limited to causal queries of the form $Q[Y]=P(Y\vert do(V^\G \setminus Y))$.
They can be applied to general causal queries of the form $P_X(Y)$
if the set $\Anc{Y}{\G\setminus X}$ is known.
This is because
the causal query $P_X(Y)$ can be expressed as $\sum_{\Anc{Y}{\G\setminus X}\setminus Y}Q[\Anc{Y}{\G\setminus X}]$, where $Anc_{\G\setminus X}(Y)$ is the set of ancestors of $Y$ in $\G$ after removing the vertices of $X$.
Furthermore, $P_X(Y)$ is identifiable in $\G$ if and only if $Q[\Anc{Y}{\G\setminus X}]$ is identifiable in $\G$ \cite{tian2002testable, shpitser2006identification, jaber2019causal}.
Note that the assumption that $Anc_{\G\setminus X}(Y)$ is known is not equivalent to precluding uncertainty on the directed edges (as in the case of fixing the edge probabilities to 0 or 1), but it rather imposes a perfect correlation type of constraint.
Consider for instance the three graphs of Figure \ref{fig:remark}, where all of them share the same set $Anc_{\G\setminus X}(Y)=\{z,t\}$.
In fact, knowing this set forces a constraint of the type that if the edge $z\to y$ does not exist, the path $z \to t \to y$ must. 
\end{remark}

\section{Reduction to edge ID problem and establishing complexity}\label{sec:complex}
We begin this section with the following lemma, to which we referred before.
Thereafter, we discuss the hardness of the two problems considered in this work.
\begin{restatable}{Lemma}{lemsubg}\label{lem:subg}
     For any causal query $P_X(Y)$ and ADMG $\G$, if $\mathcal{F}$ is a valid identification formula for $P_X(Y)$ in $\G$ (Def.~\ref{def:idformula}), then $\mathcal{F}$ is a valid identification formula for $P_X(Y)$ in any $\G'\subseteq\G$.
\end{restatable}
All proofs are presented in Appendix \ref{apx:proofs}.
In what follows, we first formally define the edge ID problem and then show the equivalence of Problems \ref{pb:1} and \ref{pb:2} to the edge ID problem under Assumption \ref{ass:indep}.
\begin{definition}[Edge ID problem]
    For ADMG $\G=(V^\G,E_d^\G,E_b^\G)$, a set of non-negative edge weights $W_{\G}=\{w_e\geq 0 \vert e\in\G\}$, and a causal query $Q[Y]$ for $Y\subseteq V^\G$, the objective of the edge ID problem is to find the set of edges $E^*\subseteq E_d^\G \cup E_b^\G$ with minimum aggregated weight (cost), such that $Q[Y]$ is identifiable in the graph resulting from removing $E^*$ from $\G$.
    Formally, 
    \begin{equation} \label{eq:edge}
        \begin{aligned}
            &E^*:=\argmin_{E\subseteq E_d^\G \cup E_b^\G} \sum_{e\in E} w_e,\quad
            \mathbf{s.t.}\quad  \G'=(V^\G ,E_d^\G \setminus E, E_b^\G\setminus E)\in [\G]_{Id(Q[Y])}.
        \end{aligned}
    \end{equation}
    That is, the cost of removing a set of edges from $\G$ is the sum of the weights of each individual edge.
\end{definition}

The following result unifies the two problems considered in this work by establishing their equivalence to the edge ID problem.
This is done by transforming Problems \ref{pb:1} and \ref{pb:2} with multiplicative objectives into the edge ID problem that has an additive objective.

\begin{restatable}{lemma}{lemeq}\label{lem:eq}
    Under Assumption \ref{ass:indep}, Problem \ref{pb:1} is equivalent to the edge ID problem with the edge weights chosen to be the log propensity ratios, i.e., $w_e=\max\{0,\log(\frac{p_e}{1-p_e})\}$, $\forall e\in\G$.
    Moreover, Problem \ref{pb:2} is equivalent to the edge ID problem with the choice of weights $w_e=-\log(1-p_e)$, $\forall e\in\G$.
    That is, an instance of Problems \ref{pb:1} and \ref{pb:2} can be reduced to an instance of the edge ID problem in polynomial time, and vice versa.
\end{restatable}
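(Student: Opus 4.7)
The plan is to convert both multiplicative problems into additive edge-removal problems by taking logarithms, so that each becomes a matter of choosing which edges of $\G$ to delete at minimum total cost subject to $Q[Y]$ being identifiable in the residual graph. Throughout, I will write $E = (E_d^\G\cup E_b^\G)\setminus(E_d^{\G_s}\cup E_b^{\G_s})$ for the set of removed edges, so that $\G_s = \G \setminus E$.

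For Problem \ref{pb:1}, Assumption \ref{ass:indep} gives
\[
\log P(\G_s) \;=\; \sum_{e\in\G}\log p_e \;-\; \sum_{e\in E}\log\frac{p_e}{1-p_e},
\]
so maximizing $P(\G_s)$ is equivalent to minimizing $\sum_{e\in E}\log(p_e/(1-p_e))$ subject to $\G\setminus E\in[\G]_{Id(Q[Y])}$. The main subtlety — the step I expect to require the most care — is that these per-edge coefficients are negative whenever $p_e<1/2$, while the edge ID problem demands non-negative weights. I would close this gap via Lemma \ref{lem:subg}: for any feasible $E$, enlarging it by any edge with $p_e<1/2$ (i) preserves identifiability, since removing more edges only yields a further edge-induced subgraph, and (ii) strictly decreases the objective. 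Hence every optimum may be taken to contain all such edges a priori, after which the objective reads $\sum_{e\in E}w_e$ with $w_e=\max\{0,\log(p_e/(1-p_e))\}\geq 0$. This matches the edge ID instance exactly, up to an additive constant that does not affect the argmin.

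For Problem \ref{pb:2}, I would first carry out the short calculation that, under the independence assumption,
\[
\sum_{\hat{\G}\subseteq\G_s}P(\hat{\G}) \;=\; \Pr\bigl[\hat{\G}\subseteq\G_s\bigr] \;=\; \prod_{e\in\G\setminus\G_s}(1-p_e),
\]
because the event $\{\hat{\G}\subseteq\G_s\}$ is precisely the intersection of the independent events that every edge of $\G\setminus\G_s$ is absent. Taking logarithms, maximizing this aggregate becomes minimizing $\sum_{e\in E}\bigl[-\log(1-p_e)\bigr]$; here each per-edge cost is already non-negative, so no truncation step is needed and we obtain the edge ID instance with $w_e=-\log(1-p_e)$ directly.

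Finally, for the reverse reductions I would simply invert each formula per edge: given arbitrary $w_e\geq 0$, set $p_e=e^{w_e}/(1+e^{w_e})$ for Problem \ref{pb:1} (automatically placing $p_e\geq 1/2$ so that the $\max$ is inactive and $w_e=\log(p_e/(1-p_e))$) and $p_e=1-e^{-w_e}$ for Problem \ref{pb:2}. Both transformations are per-edge and hence polynomial in the input size, so all three problems are polynomial-time equivalent. The only substantive causal-structure input beyond bookkeeping is the single appeal to Lemma \ref{lem:subg} used to ``pre-remove'' low-probability edges in Problem \ref{pb:1}; everything else is logarithmic reparameterization.
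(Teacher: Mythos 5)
Your proposal is correct and follows essentially the same route as the paper's proof: the log reparameterization of both objectives, the use of Lemma \ref{lem:subg} to argue that edges with $p_e<1/2$ can be removed without loss (so the truncated weights $\max\{0,\log(p_e/(1-p_e))\}$ are justified), the identity $\sum_{\hat{\G}\subseteq\G_s}P(\hat{\G})=\prod_{e\notin\G_s}(1-p_e)$ for Problem \ref{pb:2}, and the per-edge inverse maps $p_e=e^{w_e}/(1+e^{w_e})$ and $p_e=1-e^{-w_e}$ for the reverse reductions. The only differences are cosmetic (you take logs before pruning low-probability edges and justify the aggregate-probability identity probabilistically rather than by explicit summation), so nothing further is needed.
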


As we mentioned earlier, the equivalence of these three problems can be established in more general settings than what is described under Assumption \ref{ass:indep}.
We refer the interested reader to Appendix \ref{apx:corr} for a discussion on one such setting.
The following result shows no polynomial-time algorithm for solving these three problems exists unless P $=$ NP.
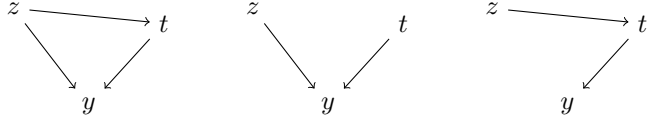
\begin{figure}[t] 
    \centering
    \tikzstyle{block} = [circle, inner sep=1.7pt, fill=black]
		\tikzstyle{input} = [coordinate]
		\tikzstyle{output} = [coordinate]

	\begin{subfigure}[b]{0.2\textwidth}
	    \centering
	    \begin{tikzpicture}[scale=1, every node/.style={scale=1}]
            \tikzset{edge/.style = {->,> = latex'}}
            \node (Z) at (-1,0.8) {$z$};
            \node (T) at (1,0.6) {$t$};
            \node (Y) at (0,-.5) {$y$};
            \path[->]
            (Z) edge[style = {->}] (Y);
            \path[->]
            (T) edge[style = {->}] (Y);
            \path[->]
            (Z) edge[style = {->}] (T);
        \end{tikzpicture}
	\end{subfigure}\hspace{.3cm}
	\begin{subfigure}[b]{0.2\textwidth}
	    \centering
	    \begin{tikzpicture}[scale=1, every node/.style={scale=1}]
            \tikzset{edge/.style = {->,> = latex'}}
            \node (Z) at (-1,0.8) {$z$};
            \node (T) at (1,0.6) {$t$};
            \node (Y) at (0,-.5) {$y$};
            \path[->]
            (Z) edge[style = {->}] (Y);
            \path[->]
            (T) edge[style = {->}] (Y);
            
        \end{tikzpicture}
	\end{subfigure}\hspace{.3cm}
	\begin{subfigure}[b]{0.2\textwidth}
	    \centering
	    \begin{tikzpicture}[scale=1, every node/.style={scale=1}]
            \tikzset{edge/.style = {->,> = latex'}}
            \node (Z) at (-1,0.8) {$z$};
            \node (T) at (1,0.6) {$t$};
            \node (Y) at (0,-.5) {$y$};
            \path[->]
            (T) edge[style = {->}] (Y);
            \path[->]
            (Z) edge[style = {->}] (T);
        \end{tikzpicture}
	\end{subfigure}
	
    \caption{Three different graphs that share the same set $Anc_{\mathcal{G}}(\{y\})=\{z,t\}$.}
    \label{fig:remark}
\end{figure}  

\begin{restatable}{theorem}{thmhardness}\label{thm:hardness}
    The edge ID problem is NP-complete. 
\end{restatable}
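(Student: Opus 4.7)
The plan is to establish NP-completeness in the standard two steps: membership in NP, then NP-hardness via polynomial reduction from a known NP-hard problem. I will treat the natural decision version of the edge ID problem: given $(\G,W_\G,Y,k)$, does there exist $E\subseteq E_d^\G\cup E_b^\G$ with $\sum_{e\in E}w_e\le k$ such that $\G'=(V^\G,E_d^\G\setminus E,E_b^\G\setminus E)\in[\G]_{Id(Q[Y])}$?

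\textbf{Membership in NP.} A nondeterministic algorithm guesses $E$; the certificate has polynomial size. Verification runs the complete identification algorithm of \cite{huang2006pearls,shpitser2006identification} on $\G'$, which decides identifiability of $Q[Y]$ in polynomial time (equivalently, one checks via the maximal hedge of Definition \ref{def:mh} that $\textbf{MH}(\G',Y)=\G'[Y]$). Summing the weights in $E$ and comparing with $k$ is also polynomial, so the problem lies in NP.

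\textbf{NP-hardness.} I would reduce from \textsc{Minimum Vertex Cover} on an undirected graph $H=(V_H,E_H)$ with budget $k$. The construction takes $Y=\{y\}$ and, for every $v\in V_H$, introduces a vertex $v$ together with a low-weight ``switch" edge (weight $1$) whose removal represents placing $v$ in the cover. For every edge $\{u,w\}\in E_H$, I would add a small gadget of auxiliary vertices and directed/bidirected edges, all assigned a prohibitively large weight $M$ (so no optimal solution ever removes them), designed so that the gadget together with the subgraph around $u,w,y$ forms a hedge for $Q[y]$ that can be destroyed only by removing at least one of the two switch edges for $u$ or $w$. By choosing $M$ larger than $|V_H|$, the optimal edge ID cost equals the size of a minimum vertex cover of $H$. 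Correctness in both directions reduces to checking via Definition \ref{def:hedge} that (i) each edge gadget produces a genuine hedge when both incident switches are intact, and (ii) no ``spurious" hedges cross different gadgets, so that destroying the incident switches suffices to identify $Q[y]$ globally. Both directions then follow: a vertex cover of size $k$ yields a removal set of total weight $k$, and conversely any feasible removal set of weight at most $k<M$ must consist entirely of switch edges and must hit every edge-gadget, hence corresponds to a vertex cover of size at most $k$.

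\textbf{Main obstacle.} The hardest part is the gadget design and the ``no spurious hedge" argument. Hedges are defined by the joint structure of bidirected districts and directed ancestry within vertex-induced subgraphs, so gadgets attached to different edges of $H$ can in principle combine (through shared vertices like $y$ and the $v$'s) to form new districts or new ancestral routes that create hedges not associated with any single edge of $H$. The delicate step is to design the auxiliary vertices and edge orientations so that, in the composed graph, the set of hedges for $Q[y]$ is exactly the union of per-edge gadget hedges, and each such hedge is broken precisely by removing the switch of an incident vertex in $H$. Once this graphical invariant is proven using the district/ancestry characterization of Definition \ref{def:hedge} (and the maximal hedge viewpoint of Definition \ref{def:mh}), the equivalence of the two optimization values follows routinely, completing the reduction.
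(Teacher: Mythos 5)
Your overall strategy (NP membership via the polynomial-time ID algorithm, plus NP-hardness via a reduction from \textsc{Minimum Vertex Cover}) matches the paper's in spirit, and the NP-membership half is essentially identical to the paper's one-line argument. But your hardness half takes a different route, and that route is where the gap lies. The paper does \emph{not} build vertex-cover gadgets directly inside an edge ID instance. Instead it constructs a reduction from the minimum-cost intervention problem (MCIP) to edge ID via a vertex-splitting transformation $\mathcal{T}_1$: each non-$Y$ vertex $x$ is split into $x^1$ (carrying the directed edges) and $x^2$ (carrying the bidirected edges), joined by a directed edge $(x^2,x^1)$ and a bidirected edge $\{x^1,x^2\}$ whose weight equals the intervention cost $C(x)$, with all remaining edges given infinite weight. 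Proposition \ref{prp:mcipedgeid} then proves that hedges for $Q[Y]$ in the transformed graph correspond exactly to hedges in the original, using the observation that $x^1$ and $x^2$ can only appear together in any hedge (one is the unique child, the other the unique bidirected neighbour). Since \cite{akbari2022minimum} already reduces vertex cover to MCIP, composing the two reductions finishes the proof. The paper thereby outsources precisely the step you identify as your ``main obstacle.''

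The gap in your proposal is concrete: the edge gadget is never constructed. You state the properties it must have --- that each gadget forms a genuine hedge for $Q[y]$ destroyable only by removing one of the two incident switch edges, and that gadgets do not combine through shared vertices to form spurious hedges --- but these properties are the entire content of the hardness proof, and you correctly observe that hedges are global objects (districts plus ancestry in vertex-induced subgraphs) that can easily merge across gadgets sharing $y$. Without an explicit construction and a proof of the ``no spurious hedge'' invariant, the reduction cannot be checked in either direction; in particular, the reverse direction (a cheap removal set yields a vertex cover) depends entirely on knowing that every hedge is hit by some switch, which is exactly the unproven claim. If you want to salvage your direct route, the most economical fix is to take the known vertex-cover-to-MCIP gadget from \cite{akbari2022minimum} and push it through a vertex-splitting transformation of the kind the paper uses; the resulting composite gadget is then correct by the two established correspondences rather than by a new ad hoc argument.
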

Theorem \ref{thm:hardness} is established through a reduction from another NP-complete problem, namely the \emph{minimum-cost intervention problem} (MCIP) \cite{akbari2022minimum,akbari2023experimental}, which we shall discuss in Section \ref{sec:mcip}.
Theorem \ref{thm:hardness} is a key result
which shows the hardness of recovering the most plausible graph in which a specified causal effect of interest is identifiable.

\begin{corollary}
Problems \ref{pb:1} and \ref{pb:2} are NP-complete under Assumption \ref{ass:indep}.
\end{corollary}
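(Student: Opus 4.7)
My plan follows the two-step template for NP-completeness.

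For \textbf{membership in NP}, given a candidate edge set $E\subseteq E_d^\G\cup E_b^\G$ and a cost budget $B$, a verifier can in polynomial time build $\G'=(V^\G, E_d^\G\setminus E, E_b^\G\setminus E)$, invoke a polynomial-time complete identification algorithm (such as Tian's ID or the Shpitser--Pearl complete algorithm) on $(\G', Q[Y])$ to confirm $\G'\in[\G]_{Id(Q[Y])}$, and check $\sum_{e\in E} w_e\le B$. Hence edge ID lies in NP.

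For \textbf{NP-hardness}, I would reduce from the minimum-cost intervention problem (MCIP) \cite{akbari2022minimum}, in which one is given an ADMG $\G$, a query $Q[Y]$, and vertex costs $c_v\ge 0$ for $v\in V^\G\setminus Y$, and must find a minimum-cost set $S\subseteq V^\G\setminus Y$ whose intervention (removing every edge incident to vertices of $S$) makes $Q[Y]$ identifiable. From an MCIP instance I would construct a polynomial-size edge ID instance via a vertex-splitting gadget: for each $v\in V^\G\setminus Y$, introduce a duplicate $v'$; reroute every directed edge $u\to v$ originally incoming to $v$ so that it enters $v'$ instead, and reroute every bidirected edge $u\leftrightarrow v$ so that its endpoint at $v$ becomes $v'$ (with the convention $y':=y$ for $y\in Y$); then add a directed edge $v'\to v$ together with a bidirected edge $v\leftrightarrow v'$. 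Assign weight $c_v$ to each edge $v'\to v$ and weight $W > \sum_v c_v$ to every other edge of the resulting graph $\G^\dagger$.

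Because $W$ exceeds the total achievable MCIP cost, no optimal edge-ID solution can afford to remove any $W$-weight edge, so every optimum consists solely of edges of the form $v'\to v$; these are in bijection with subsets $S\subseteq V^\G\setminus Y$, and the aggregate weight equals $\sum_{v\in S} c_v$. Cutting $v'\to v$ eliminates the only directed out-edge of $v'$, stripping $v'$ from $\Anc{Y}{\G^\dagger\setminus E}$ while leaving $v$ with none of its original ancestor-feeding or confounding connections (now held by $v'$). The auxiliary bidirected edge $v\leftrightarrow v'$, although itself too expensive to delete, plays a crucial role: it ensures that whenever $v\notin S$, the pair $(v,v')$ lies in a single c-component of $\G^\dagger$, so that hedges containing $v$ in $\G$ lift to hedges containing $\{v,v'\}$ in $\G^\dagger$, exactly as required to match the behavior of intervention on $\G$.

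The \textbf{main obstacle} will be to rigorously establish this hedge correspondence, i.e., that a set $S$ destroys all hedges for $Q[Y]$ after intervention in $\G$ if and only if removing $\{v'\to v : v\in S\}$ destroys all hedges for $Q[Y]$ in $\G^\dagger$. This demands a careful case analysis on how districts (bidirected-connected sets) and ancestor sets of $Y$ decompose across each paired $(v,v')$: for $v\notin S$, the surviving edges $v'\to v$ and $v\leftrightarrow v'$ together act as a bottleneck gluing $v$ and $v'$ into a single virtual vertex from the standpoint of Def.~\ref{def:hedge}; for $v\in S$, the isolated $v'$ leaves $\Anc{Y}{\cdot}$ entirely (its sole directed out-edge having been removed) and cannot participate in any hedge, while $v$ itself is left as a hedge-source with no incoming or confounding structure, just as intervention would require. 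Once this correspondence is pinned down, the reduction is polynomial, cost-preserving, and faithful, so the NP-hardness of MCIP transfers to the edge ID problem, and combined with NP membership this yields NP-completeness.
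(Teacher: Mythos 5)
Your proposal in substance re-proves Theorem \ref{thm:hardness} (NP-completeness of the edge ID problem), and for that part it follows essentially the paper's own route: NP membership via the ID algorithm as a polynomial-time verifier, and NP-hardness via a reduction from MCIP using a vertex-splitting gadget in which the two copies of each vertex are forced to co-occur in every hedge, so that deleting the single cheap edge between them simulates intervening on that vertex. Your gadget is a mirror image of the paper's transformation $\mathcal{T}_1$: the paper routes all directed edges through $x^1$ and all bidirected edges through $x^2$ and places the cost $C(x)$ on the bidirected edge $\{x^1,x^2\}$, whereas you route incoming directed and bidirected edges through $v'$ and place the cost on the directed edge $v'\to v$. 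The co-occurrence argument ($v'$'s only directed out-edge points to $v$; $v$'s only bidirected edge points to $v'$) is the same key idea underlying Proposition \ref{prp:mcipedgeid}, and the hedge correspondence you flag as the ``main obstacle'' is exactly what that proposition verifies, so this plan would go through. (One definitional nit: intervening on $S$ in MCIP removes the \emph{incoming} edges of $S$, not all incident edges; this does not affect your hedge argument, since either way an intervened vertex cannot lie in a hedge.)

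The genuine gap is that the statement to be proved concerns Problems \ref{pb:1} and \ref{pb:2}, and your argument never returns to them. Establishing that edge ID is NP-complete does not by itself establish hardness of Problems \ref{pb:1} and \ref{pb:2}: you have a reduction \emph{from} those problems \emph{to} edge ID in mind, but hardness transfers in the opposite direction. You still need (i) NP membership of Problems \ref{pb:1} and \ref{pb:2} themselves --- a certificate is a subgraph $\G_s$, verified by running the ID algorithm on $\G_s$ and comparing $P(\G_s)$, respectively $\prod_{e\notin\G_s}(1-p_e)$, against a threshold --- and (ii) a polynomial-time reduction from edge ID \emph{to} each of Problems \ref{pb:1} and \ref{pb:2}, which is the ``vice versa'' direction of Lemma \ref{lem:eq}: map the weights back to probabilities via $p_e=\exp(w_e)/(1+\exp(w_e))$ for Problem \ref{pb:1} and $p_e=1-\exp(-w_e)$ for Problem \ref{pb:2}. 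The paper's corollary is precisely the combination of Lemma \ref{lem:eq} (in both directions) with Theorem \ref{thm:hardness}; your proposal supplies the theorem but omits the lemma's role, so as written it does not yet prove the stated corollary.
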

It is noteworthy that the size of the problem depends on the number of vertices of $\G$, i.e., $\vert V^\G \vert$, and the number of edges of $\G$ with finite weight, i.e., $\vert E^\G \vert=\vert E_d^\G\vert + \vert E_b^\G\vert$.
Since the ID algorithm (function ID of \cite{shpitser2006identification}) runs in time $\mathcal{O}(\vert V^\G \vert^2)$, the brute-force algorithm that tests the identifiability of $Q[Y]$ in every edge-induced subgraph of $\G$ and chooses the one with the minimum weight of deleted edges runs in time $\mathcal{O}(2^{\vert E^\G \vert}\vert V^\G \vert^2)$.
In the next Section, we present various algorithmic approaches for solving or approximating the solutions to these problems.

\section{Algorithmic approaches}\label{sec:alg}
We first present a recursive approach for solving the edge ID problem in Section \ref{sec:exact}, described in Algorithm \ref{alg:brute}.
Since the problem itself is NP-complete, Algorithm \ref{alg:brute} runs in exponential time in the worst case.
In Section \ref{sec:heu}, we present heuristic approximations of the edge ID problem, which runs in cubic time in the worst case.
These heuristics can also be used as a pre-process to reduce the runtime of Alg.~\ref{alg:brute} by providing an upper bound that can be fed into Alg.~\ref{alg:brute} to prune the search space.
Finally, in Section \ref{sec:mcip}, we present a reduction of edge ID to yet another NP-complete problem, namely minimum-cost intervention problem \cite{akbari2022minimum}, which allows us to use the algorithms designed for that problem to solve edge ID.
Our simulations in Section \ref{sec:exp} evaluate these approaches against each other.
\subsection{Recursive exact algorithm}\label{sec:exact}
This approach is described in Algorithm \ref{alg:brute}.
The inputs to the algorithm are an ADMG $\mathcal{G}$ along with edge weights $W_\G$, a set of vertices $Y$ corresponding to the causal query $Q[Y]$, an upper bound $\omega^{ub}$ on the aggregate weight (cost) of the optimal solution, and a threshold $\omega^{th}$, an upper bound on the acceptable cost of a solution.
The closer $\omega^{ub}$ is to the optimal cost, the quicker Algorithm \ref{alg:brute} will find the solution.
If no upper bound is known, the algorithm can be initiated with $\omega^{ub}=\infty$.
However, we shall discuss a few approaches to find a good upper bound $\omega^{ub}$ in the following Section.
Note that when $\omega^{th}=0$, Algorithm \ref{alg:brute} will output the optimal solution. 
Otherwise, as soon as a feasible solution with weight less than $\omega^{th}$ is found, the algorithm terminates (line 13).

\setlength{\textfloatsep}{8pt}
\begin{algorithm}[h]
    \caption{Recursive Algorithm for edge ID.}
    \label{alg:brute}
    \begin{algorithmic}[1]
        \Function {\textbf{EDGEID}}{$\mathcal{G}, Y,W_{\G}, \omega^{ub}, \omega^{th}$}
            \State $\mathcal{H}\gets \textbf{MH}(\mathcal{G}, Y)$
            \InlineIf{$\mathcal{H}=\G[Y]$}
                \Return $(True, \emptyset)$
            \EndInlineIf    
            \State $\textit{ID}, E_{min} \gets False, \emptyset$
            \While{True}
                \State $e \gets$ The edge of $\mathcal{H}$ with minimum weight
                \InlineIf{$w_e=\infty$ \textbf{or} $w_e>\omega^{ub}$}
                    \Return $(\textit{ID}, E_{min})$
                \EndInlineIf
                \State $(\textit{id},E) \gets \textbf{EDGEID}(\mathcal{H}\setminus e, Y, W_\G\setminus\{w_e\}, \omega^{ub}-w_{e}, \omega^{th}-w_e)$
                \If{$\textit{id}=True$}
                    \State $\omega_E\gets w_e+\sum_{e_{j} \in E} w_{e_{j}}$
                    \State $\textit{ID} \gets True$,\quad $\omega^{ub}\gets \omega_E$
                    \State  $E_{min} \gets E \cup \{e\}$
                    \InlineIf{$\omega^{ub} \leq \omega^{th}$} 
                        \Return $(\textit{ID}, E_{min})$
                    \EndInlineIf    
                \EndIf  
                \State Update $w_e\gets\infty$ in $W_\G$
            \EndWhile
        \EndFunction
    \end{algorithmic}
\end{algorithm}

The algorithm begins with calling subroutine \textbf{MH} in line 2, which constructs the maximal hedge for $Q[Y]$, denoted by $\mathcal{H}$. 
We discuss this subroutine in detail in Appendix \ref{apx:MH}. 
Throughout the rest of the algorithm, we only consider the edges in $\mathcal{H}$, as the other edges do not alter the identifiability.
If there is no hedge formed for $Q[Y]$, i.e., $\mathcal{H}=\G[Y]$, there is no need to remove any edges from $\G$ and the effect is already identified.
Otherwise, we remove the edge with the lowest weight ($e$) from $\mathcal{H}$ and recursively call the algorithm to find the solution after removing the edge $e$, unless the weight of $e$ is already higher than the upper bound $\omega^{ub}$, which means no feasible solutions exist for the provided upper bound.
Whenever a feasible solution is found, the upper bound $\omega^{ub}$ is updated to the lowest weight among all the solutions weights discovered so far.
This, in turn, helps the algorithm prune the exponential search space during the next iterations to reduce the runtime.
As soon as a solution with a weight less than the acceptable threshold, i.e., $\omega^{th}$, is found, the algorithm returns the solution.
Otherwise,  $w_e$ is updated to infinity so that it never gets removed (line 14).
This is due to the fact that we have already explored all the solutions involving $e$.

\subsection{Heuristic algorithms}\label{sec:heu}
In this section, we present two heuristic algorithms for approximating the solution to the edge ID problem. 
These algorithms can also be used to efficiently find the upper bound $\omega^{ub}$, which could be fed as an input to Algorithm \ref{alg:brute}.
\setlength{\textfloatsep}{8pt}
\begin{algorithm}[h]
    \caption{Heuristic algorithm for edge ID.}
    \label{alg:heur1}
    \begin{algorithmic}[1]
        \Function {\bfseries HEID}{$\mathcal{G}, Y,W_{\G}$}
            \State $\G' \gets \textbf{MH}(\mathcal{G}, Y)$,
            \State $Z \gets \{z \in V^{\G'} \vert \exists y \in Y: \{z,y\} \in E_{b}^{\G'} \}\setminus Y$,
            \State $\mathcal{H} \gets$ induced subgraph of $\G'$ on its directed edges.
            \State $W_\mathcal{H}\gets \{ w_{e} \in W_{\G} \vert e \in \mathcal{H} \}$, $V^\mathcal{H} \gets V^\mathcal{H}\cup\{y^*,z^*\}$
            \InlineFor{$z\in Z$}
                $E^\mathcal{H}\gets E^\mathcal{H}\cup (z^*,z)$, $W_\mathcal{H}\gets W_\mathcal{H}\cup\{w_{(z^*,z)}=\sum_yw_{\{z,y\}}\}$
            \EndInlineFor
            \InlineFor{$y\in Y$}
                $E^\mathcal{H}\gets E^\mathcal{H}\cup (y,y^*)$, $ W_\mathcal{H}\gets W_\mathcal{H}\cup\{w_{(y,y^*)}= \infty \}$
            \EndInlineFor
            \State $E \gets MinCut(\mathcal{H},  W_{\mathcal{H}}, z^*, y^*)$
            \State\Return $(E, \sum_{e \in E} w_e)$
        \EndFunction
    \end{algorithmic}
\end{algorithm}

Let $Z=\{z \in V^{\G} \vert \:\exists\: y \in Y: \{z,y\} \in E_{b}^{\G} \}\setminus Y$ denote the set of vertices that have at least one common bidirected edge with a vertex in $Y$.
Any hedge formed for $Q[Y]$ contains at least one vertex of $Z$. 
As a result, to eliminate all the hedges formed for $Q[Y]$, it suffices to ensure that none of the vertices in $Z$ appear in such a hedge.
To this end, for any $z\in Z$, it suffices to either remove all the bidirected edges between $z$ and $Y$ or eliminate all the directed paths from $z$ to $Y$.
The problem of eliminating all directed paths from $Z$ to $Y$ can be cast as a minimum cut problem between $Z$ and $Y$ in the edge-induced subgraph of $\G$ over its directed edges.
To add the possibility of removing the bidirected edges between $Z$ and $Y$, we add an auxiliary vertex $z^*$ to the graph and draw a directed edge from $z^*$ to every $z\in Z$ with weight $w=\sum_{y\in Y}w_{\{z,y\}}$, i.e., the sum of the weights of all bidirected edges between $z$ and $Y$.
Note that $z$ can have bidirected edges to multiple vertices in $Y$.
We then solve the minimum cut problem for $z^*$ and $Y$.
If an edge between $z^*$ and $z\in Z$ is included in the solution to this minimum cut problem, it is mapped to remove all the bidirected edges between $z$ and $Y$ in the main problem.
Note that we can run the algorithm on the maximal hedge formed for $Q[Y]$ in $\G$ rather than $\G$ itself.
This heuristic is presented as Algorithm \ref{alg:heur1}.

An analogous approach which goes through solving an undirected minimum cut on the edge-induced subgraph of $\G$ over its bidirected edges is presented in Algorithm \ref{alg:heur2} in Appendix \ref{apx:heuristic}. 
As mentioned earlier, these algorithms can be used either as standalone algorithms to approximate the solution to the edge ID problem, or as a pre-processing step to find an upper bound $\omega^{ub}$ for Algorithm \ref{alg:brute}.

{
\paragraph{Complexity and performance.} The aforementioned heuristic algorithms can be broken down into two primary components: constructing an instance of the minimum-cut problem, and solving the latter.
Both algorithms follow similar steps for constructing the minimum-cut problem.
For reference, Alg.~\ref{alg:heur1} starts with identifying the maximal hedge formed for $Q[Y]$, which is cubic-time in the worst case \cite{akbari2022minimum}, followed by forming a directed graph $\mathcal{H}$ over the vertices of the maximal hedge.
Forming this graph, which inherits its vertices and directed edges from $\mathcal{G}$, is linear in time.
Lastly, there exist several minimum-cut-maximum-flow algorithms that exhibit cubic-time performance.
Therefore, our heuristic algorithms have a cubic runtime in the worst case.
On the other hand, given the hardness result we provided in the previous section, and in light of the fact that there is a polynomial-time reduction from the minimum hitting set problem to MCIP \cite{akbari2023experimental}, the edge ID problem is NP-hard to approximate within a factor better than logarithmic \cite{feige1998threshold}.
Since our heuristic algorithms run in polynomial time, they are expected to be suboptimal by at least a logarithmic factor in the worst case.
However, as we shall see in our simulations, both algorithms achieve near-optimal results on random graphs.
}
\subsection{Alternative approach: reduction to MCIP}\label{sec:mcip}
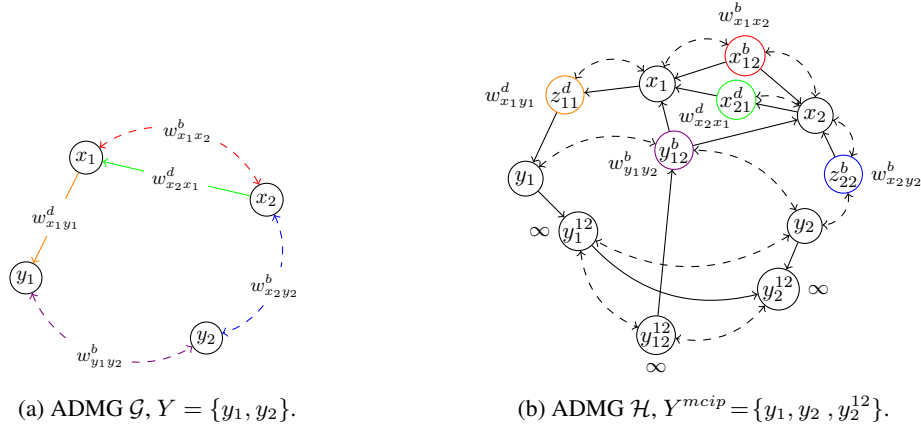
\begin{figure}[t] 
    \centering
    \tikzstyle{block} = [circle, inner sep=1.4pt,  draw=black]
		\tikzstyle{input} = [coordinate]
		\tikzstyle{output} = [coordinate]

	\begin{subfigure}[b]{0.48\textwidth}
	    \centering
	    \begin{tikzpicture}[scale=0.8, every node/.style={scale=0.8}]
            \tikzset{edge/.style = {->,> = latex'}}
            
             \node[block] (x1) at (0,0) {$x_1$};
            \node[block] (x2) at (3,-0.7) {$x_2$};
            \node[block] (y1) at (-1,-2) {$y_1$};
            \node[block] (y2) at (2,-3) {$y_2$};
            
            \path[->]
            (x2) edge[green, style = {->}] node[fill=white, anchor=center, pos=0.5] {\color{black}\small $w_{x_{2}x_{1}}^{d}$} (x1);
            \path[->]
            (x1) edge[ orange, style = {->}] node[fill=white, anchor=center, pos=0.5] {\color{black}\small $w_{x_{1}y_{1}}^{d}$} (y1);
            \path[->]
            (x1) edge[red,dashed, style = {<->}, bend left=50]node[fill=white, anchor=center, pos=0.5] {\color{black}\small $w_{x_{1}x_{2}}^{b}$}(x2);
            \path[->]
            (x2) edge[blue,dashed, style = {<->}, bend left=50]node[fill=white, anchor=center, pos=0.5] {\color{black}\small $w_{x_{2} y_{2}}^{b}$}(y2);
            \path[->]
            (y1) edge[violet,dashed, style = {<->}, bend right=50]node[fill=white, anchor=center, pos=0.5] {\color{black}\small $w_{y_{1}y_{2}}^{b}$}(y2);
        \end{tikzpicture}
    \caption{ADMG $\G$, $Y=\{y_1,y_2\}$.}
    \label{fig:ex_edgeidtomcip1}
	\end{subfigure}\hfill
	\begin{subfigure}[b]{0.48\textwidth}
	    \centering
	    \begin{tikzpicture}[scale=0.7, every node/.style={scale=0.87}]
            \tikzset{edge/.style = {->,> = latex'}}
            \node[block] (x1) at (0,0.3) {$x_1$};
            \node[block] (x2) at (3,-0.3) {$x_2$};
            \node[block] (y1) at (-2.5,-1.5) {$y_1$};
            \node[block] (y2) at (2.8,-2.4) {$y_2$};
            \node[block, style={inner sep=-0.3pt, color=green}] (x21') at ($(x1)!0.5!(x2)$) {\color{black} $x_{21}^{d}$};
            \node[below left= -.7em and -.7em of x21' ](){\small $w_{x_2 x_1}^{d}$};
            \node[block, style={inner sep=-0.2pt}, label=left:{\small $\infty$}] (y1') at ($(1,-1)+(y1)$) {$y_{1}^{12}$};
            \node[block, style={inner sep=.3pt},label=right:{\small $\infty$}] (y2') at ($(-.5,-1.2)+(y2)$) {$y_{2}^{12}$};

            \path[->]
            (x1) edge[ draw=none, bend left=50]node[block, color=red, style={inner sep=-0.2pt},][label={\small $w_{x_{1}x_{2}}^{b}$}](x12)[fill=white, anchor=center, pos=0.5] {\color{black} $x_{12}^{b}$}(x2);
            \path[->]
            (x2) edge[ draw=none, bend left=50]node[block, color=blue, style={inner sep=-0.2pt},][label=right: {\small $w_{x_{2}y_{2}}^{b}$}](z22)[fill=white, anchor=center, pos=0.5] {\color{black} $z_{22}^{b}$}(y2);
            
            \path[->]
            (y1) edge[ draw=none, bend left=35]node[block,  style={inner sep=-0.2pt},](y12)[color=violet,fill=white, anchor=center, pos=.5] {\color{black} $y_{12}^{b}$}(y2);
            \node[below left= -0.8em and -.3em of y12 ](){\small $w_{y_1 y_2}^{b}$};
            \path[->]
            (y1') edge[ draw=none, bend right=80]node[block, style={inner sep=-0.2pt},][label=below: {\small $\infty$}](y12'')[fill=white, anchor=center, pos=0.5] {\color{black} $y_{12}^{12}$}(y2');
            
            \path[->]
            (x1) edge[ draw=none, bend right=50]node[block, color=orange, style={inner sep=-0.2}][label=left: {\small $w_{x_{1}y_{1}}^{d}$}](z11)[fill=white, anchor=center, pos=0.5] { \color{black} $z_{11}^{d}$}(y1);

            \path[->]
            (x12) edge[ style = {->}](x1);
            \path[->]
            (x12) edge[ style = {->}](x2);
            \path[->]
            (x1) edge[style = {->}](z11);
            \path[->]
            (z11) edge[style = {->}](y1);
            \path[->]
            (z22) edge[ style = {->}](x2);
            \path[->]
            (y12) edge[ style = {->}](x2);
            \path[->]
            (y12) edge[ style = {->}](x1);
            \path[->]
            (x2) edge[ style = {->}](x21');
            \path[->]
            (x21') edge[ style = {->}](x1);
            \path[->]
            (y12'') edge[style = {->}](y12);
            \path[->]
            (y1) edge[style = {->}](y1');
            \path[->]
            (y2) edge[style = {->}](y2');
            \path[->]
            (y1') edge[style = {->}, bend right= 30](y2');

            \path[->]
            (x12) edge[dashed, style = {<->}, bend right=50](x1);
            \path[->]
            (x12) edge[dashed, style = {<->}, bend left=50](x2);
            \path[->]
            (z22) edge[dashed, style = {<->}, bend right=50](x2);
            \path[->]
            (z22) edge[dashed, style = {<->}, bend left=50](y2);
            \path[->]
            (y2) edge[dashed, style = {<->}, bend right=30](y12);
            \path[->]
            (y1) edge[dashed, style = {<->}, bend left=30](y12);
            \path[->]
            (x1) edge[dashed, style = {<->}, bend right=50](z11);
            \path[->]
            (x2) edge[dashed, style = {<->}, bend right=20](x21');
            \path[->]
            (y1') edge[dashed, style = {<->}, bend right=30](y12'');
            \path[->]
            (y2') edge[dashed, style = {<->}, bend left=30](y12'');
            \path[->]
            (y2) edge[dashed, style = {<->}, bend left= 30](y1');
            
        \end{tikzpicture}
    \caption{ADMG $\mathcal{H}$, $Y^{mcip}\!=\!\{y_1,y_2$ $,y_2^{12}\}$.}
    \label{fig:ex_edgeidtomcip2}
	\end{subfigure}
    \caption{Reduction from edge ID to MCIP.}
    \label{fig:edge id to mcip}
\end{figure} 

\label{sec:reductiontoMCIP}
As an alternative approach to the algorithms discussed so far, we present a reduction of the edge ID problem to another NP-complete problem, i.e., the minimum-cost intervention problem (\emph{MCIP}) introduced in \cite{akbari2022minimum}. 
This reduction allows us to exploit algorithms designed for MCIP to solve our problems. 
The formal definition of MCIP is as follows.
\begin{definition}[MCIP]
    Suppose $\G=(V^\G,E_d^\G,E_b^\G)$ is an ADMG, $C:V^\G \to\mathbbm{R}^{\geq0}$ is a cost function mapping each vertex of $\G$ to a non-negative cost, and $Y\subseteq V^\G$.
    The objective of the minimum-cost intervention problem for identifying the causal effect $Q[Y]$ is to find the subset $A\subseteq V^\G$ with the minimum aggregate cost such that $Q[Y]$ is identifiable after intervening on the set $A$.
\end{definition}

The reduction from edge ID to MCIP is based on a transformation from ADMG $\mathcal{G}$ to another ADMG $\mathcal{H}$, where each edge in $\G$ is represented by a vertex in $\mathcal{H}$.
This transformation is based on the causal query $Q[Y]$, and it maps the identifiability of $Q[Y]$ in $\G$ to the identifiability of  $Q[Y^{mcip}]$ in $\mathcal{H}$, where $Y^{mcip}$ is a set of vertices in $\mathcal{H}$.
This transformation satisfies the following property; 
removing a set of edges $E^*$ in $\G$ makes $Q[Y]$ identifiable if and only if intervening on the corresponding vertices of $E^*$ in $\mathcal{H}$ makes $Q[Y^{mcip}]$ identifiable.
More precisely, after this transformation, solving the edge ID problem for $Q[Y]$ in $\mathcal{G}$ is equivalent to solving MCIP for $Q[Y^{mcip}]$ in $\mathcal{H}$.
The complete details of this transformation can be found in Appendix \ref{apx:edgeidmcip}.
An example of this reduction is shown in Figure \ref{fig:edge id to mcip}, where $Q[\{y_1,y_2\}]$ in $\G$ (Figure \ref{fig:ex_edgeidtomcip1}) is mapped to $Q[\{y_1,y_2,y^{12}_2\}]$ in $\mathcal{H}$ (Figure \ref{fig:ex_edgeidtomcip2}), where $\{y_1, y_2, y^{12}_2\}$ is a district, and the set of all vertices of $\mathcal{H}$ forms a hedge for it.
The vertices of $\mathcal{H}$ corresponding to each edge in $\G$ are indicated with the same color and have the same weight (cost).
We assign infinity cost to them to avoid intervening on the remaining vertices in $\mathcal{H}$.
It is straightforward to see that the solution to the edge ID problem in $\G$ with the query $Q[Y=\{y_1,y_2\}]$ would be to remove the edge with the lowest weight. This is because after removing any edge in $\G$, no hedge remains for $Q[Y]$.
Similarly, in $\mathcal{H}$, the solution to MCIP with the query $Q[Y^{mcip}=\{y_1,y_2,y_2^{12}\}]$ is to intervene on the vertex with the lowest cost among $Z=\{z_{11}^d,x_{21}^d,x_{12}^b, y_{12}^b,z_{22}^b\}$. This is because after intervening on any vertex in $Z$, no hedge remains for $Q[Y^{mcip}]$.
The following result establishes the link between the edge ID problem in $\G$ and MCIP in $\mathcal{H}$.
\begin{restatable}{proposition}{prpreduction}\label{prp:reduc}
    There exists a polynomial-time reduction from edge ID to MCIP and vice versa.
\end{restatable}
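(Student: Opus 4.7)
The proposition has two directions, and I would handle each through an explicit polynomial-time construction, relying on the hedge characterization of identifiability (Definitions \ref{def:hedge}, \ref{def:mh}) to verify correctness.

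\textbf{Direction 1 (edge ID $\le_p$ MCIP).} Here I would formalize the transformation previewed in Figure \ref{fig:edge id to mcip} and Section \ref{sec:reductiontoMCIP}. Given an instance $(\mathcal{G}, W_\G, Y)$ of edge ID, I construct an ADMG $\mathcal{H}$ by installing, for each edge $e$ of $\G$, a small ``edge gadget'' whose single internal vertex $v_e$ carries cost $w_e$; the rest of the vertices of $\mathcal{H}$ (copies of the original vertices and auxiliary vertices needed to wire up the gadgets) are assigned cost $\infty$, so that any optimal MCIP solution intervenes only on gadget vertices. The set $Y^{mcip}$ is defined so that (i) $Y^{mcip}$ is a district in $\mathcal{H}[Y^{mcip}]$, and (ii) the union of all hedges for $Q[Y^{mcip}]$ in $\mathcal{H}$ corresponds, via the gadgets, exactly to the union of hedges for $Q[Y]$ in $\G$. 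The correctness step is a biconditional: removing a set $E^* \subseteq E_d^\G \cup E_b^\G$ in $\G$ kills every hedge for $Q[Y]$ iff intervening on the corresponding proxy set $\{v_e : e \in E^*\}$ in $\mathcal{H}$ kills every hedge for $Q[Y^{mcip}]$. By construction both costs coincide, so optimal solutions map to optimal solutions. The construction size is linear in $|V^\G| + |E_d^\G| + |E_b^\G|$, hence polynomial.

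\textbf{Direction 2 (MCIP $\le_p$ edge ID).} Given an instance $(\mathcal{G}, C, Y)$ of MCIP, the plan is a vertex-splitting construction. For each vertex $v \in V^\G \setminus Y$ with finite cost $c_v$, replace $v$ by two vertices $v^{in}, v^{out}$ joined by a single directed edge $v^{in} \to v^{out}$ of weight $c_v$; reroute every directed edge entering $v$ and every bidirected edge incident to $v$ so that it attaches to $v^{in}$, and every directed edge leaving $v$ so that it leaves $v^{out}$; assign weight $\infty$ to all of these rerouted edges and to all remaining edges of $\G$. Vertices in $Y$ are left unsplit. This produces an edge-ID instance $(\mathcal{G}', W_{\G'}, Y)$ of size polynomial in the input. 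The correctness argument mirrors Direction 1: removing the gadget edge $v^{in} \to v^{out}$ disconnects every incoming influence on $v$ from every outgoing influence, which is precisely the graphical effect of $do(v)$ on hedges containing $v$. Hence intervening on $A \subseteq V^\G \setminus Y$ in $\G$ breaks all hedges for $Q[Y]$ iff cutting the corresponding gadget edges in $\G'$ breaks all hedges for $Q[Y]$ in $\G'$, with equal aggregate costs.

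\textbf{Main obstacle.} The routine part is the polynomial-time bookkeeping; the substantive part is proving the hedge-preserving biconditional in each direction. Concretely, I must show that every hedge in the target graph ``lifts'' to a hedge in the source graph through the gadget correspondence and, conversely, that no spurious hedge is created by the auxiliary structure (the extra proxy vertices in $\mathcal{H}$ or the split halves $v^{in}, v^{out}$ in $\G'$). This is where the infinity weights/costs on non-gadget elements and the careful choice of $Y^{mcip}$ do the real work: they force any minimum-cost solution to operate only on gadget components, and they ensure that the districts and ancestral sets entering the hedge definition line up on the two sides. Once this correspondence is proved for a single gadget, the argument extends to arbitrary solutions by linearity of the cost and by the fact that hedge-breaking is monotone under edge/vertex removal (Lemma \ref{lem:subg}).
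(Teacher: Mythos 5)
Your Direction~2 gadget does not work as stated. In Definition~\ref{def:hedge} a hedge $X$ must simultaneously be a district in $\G[X]$ (connected through bidirected edges) and satisfy $\Anc{Y}{\G[X]}=X$, i.e.\ every vertex of $X$ must reach $Y$ by a directed path that stays \emph{inside} $X$. In your split, all bidirected edges incident to $v$ are rerouted to $v^{in}$, so $v^{out}$ has no bidirected edges at all and can never belong to any district of size greater than one, hence never to any hedge. But the only child of $v^{in}$ is $v^{out}$, so $v^{in}$ has no directed path to $Y$ inside any candidate hedge that excludes $v^{out}$. Consequently neither copy of $v$ can ever participate in a hedge, even before any edge is cut: the transformation destroys every hedge passing through $v$ and can render $Q[Y]$ identifiable in $\G'$ when it is not identifiable in $\G$ with no intervention at all. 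The paper's transformation $\mathcal{T}_1$ avoids exactly this trap by giving each copy a single role (the $x^1$ copies carry all original directed edges, the $x^2$ copies all original bidirected edges) and tying the pair together with \emph{both} a bidirected edge $\{x^1,x^2\}$ of cost $C(x)$ and an infinite-cost directed edge $(x^2,x^1)$; Proposition~\ref{prp:mcipedgeid} then shows $x^1$ and $x^2$ appear in a hedge only simultaneously, and deleting the one cheap bidirected edge kills both. If you want to keep a vertex-splitting flavour, you must add such a bidirected link between the two halves and make that link (not a directed one) the removable, cost-carrying element.

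Your Direction~1 sketch points in the same direction as the paper's $\mathcal{T}_2$, but it hides the genuinely hard case: edges whose both endpoints lie in $Y$, especially bidirected ones. A proxy vertex $v_e$ placed ``on'' such an edge cannot be put into $Y^{mcip}$ (then it could not be intervened upon and would alter the query) nor simply left outside (then removing it does not reproduce the effect of deleting the edge on the districts of $\G[Y]$, which determine which sets $Y^*$ can root a hedge). The paper resolves this with an entire auxiliary layer $V^{\mathcal{H}}_{bot}$ of vertices $y_k^{ij}$, $y_{kl}^{ij}$ and an enlarged target set $Y^{mcip}$ containing the vertices $y_j^{ij}$, which is also why the transformation costs $\mathcal{O}(\vert V^\G\vert+\vert E_d^\G\vert+\vert E_b^\G\vert+\vert Y\vert^3)$ rather than the linear bound you claim. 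Without an explicit treatment of the $Y$-internal edges, the biconditional you identify as the ``substantive part'' cannot be established.
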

{In particular, the time complexity of the transformation from edge ID to MCIP scales linearly in size of $\mathcal{G}$, and is cubic in $\vert Y\vert$.
More precisely, the time complexity is $\mathcal{O}(\vert V^\mathcal{G}\vert + \vert E_d^\mathcal{G}\vert
+\vert E_b^\mathcal{G}\vert+\vert Y\vert^3)$. For further details, refer to the construction in Appendix \ref{apx:edgeidmcip}.}
 

\section{Experiments}\label{sec:exp}
We evaluate the proposed heuristic algorithms~\ref{alg:heur1} (HEID-1) and \ref{alg:heur2} (HEID-2), as well as the exact algorithm~\ref{alg:brute} (EDGEID), where the upper-bound $\omega^{ub}$ for EDGEID is set to be the minimum cost found by HEID-1 or -2. Furthermore, given the reduction of the edge ID problem to the MCIP problem described in Section~\ref{sec:reductiontoMCIP}, we also evaluate the two approximation and one exact algorithms from \cite{akbari2022minimum} (MCIP-H1, MCIP-H2, and MCIP-exact, respectively). 
Experimental results are provided for Problem \ref{pb:1}, and analogous results for Problem \ref{pb:2} are provided in Appendix \ref{apx:pb2}.
All experiments were carried out on an Intel i9-9900K CPU running at 3.6GHz\footnote{To replicate our findings, our Python implementations are accessible via \url{https://github.com/SinaAkbarii/Causal-Effect-Identification-in-Uncertain-Causal-Networks} and \url{https://github.com/matthewvowels1/NeurIPS_testbed_ADMGs}.}.

\paragraph{Simulations: } The algorithms are evaluated for graphs with between 5 and 250 vertices. For a given number of vertices, we uniformly sample 50 ADMG structures from a library of graphs which are non-isomorphic to each other. Edges for each of these 100 graphs are sampled with probability of $\log(n)/n$, where $n$ is the number of (observable) vertices, to impose sparsity (thus pragmatically reducing the search space). For each graph, we sample directed and bidirected edge probabilities $p_e$ uniformly between 0.51 and 1.0\footnote{Note that we do not consider edge probabilities less than 0.5 as from Lemma \ref{lem:eq}, such edges would be mapped to edges with 0 weight in the equivalent edge ID problem, which can always be removed at the beginning.}.
The problem is then converted into edge ID according to Lemma \ref{lem:eq}. 
The vertices in the graphs are topologically sorted and the outcome $Y$ is selected to be the last vertex in the topological ordering. 
We then check whether a solution exists in principle by removing all finite cost edges and checking for identifiability. 
If not, a new graph is sampled to avoid evaluating the algorithms on graphs with no solution. 
For each of these 50 probabilistic ADMGs, we run the algorithms and record the resulting runtime and the associated cost of the solution. 
If the runtime exceeds 3 minutes, we abort and log that the algorithm has failed to find a solution. 

\begin{figure}[t]
    \centering
    \begin{subfigure}[b]{0.325\textwidth}
        \centering
        \includegraphics[width=1.1\textwidth]{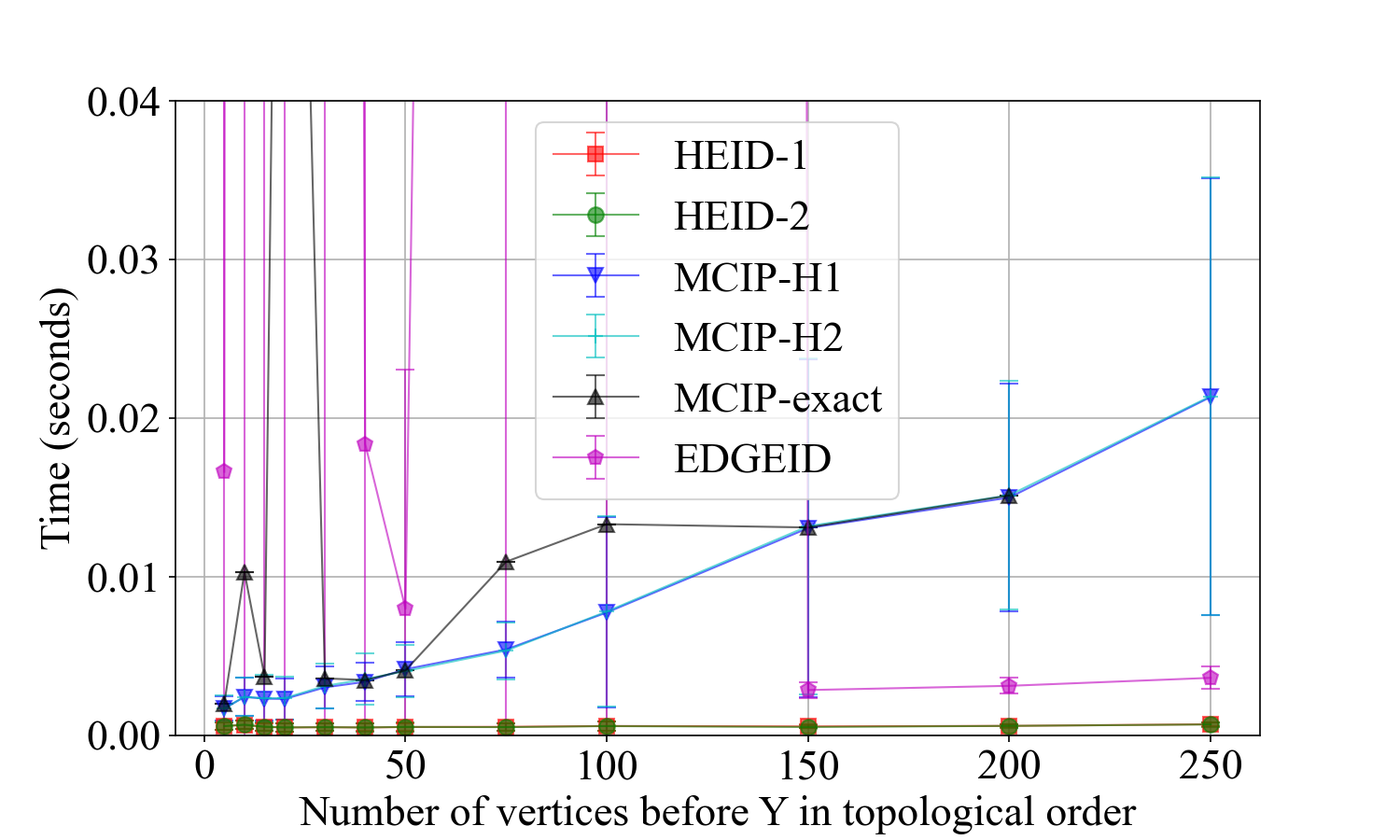}
        \caption[Network2]%
        {{\small Runtimes.}}    
        \label{fig:runtimes}
    \end{subfigure}
    \begin{subfigure}[b]{0.325\textwidth}  
        \centering 
        \includegraphics[width=1.1\textwidth]{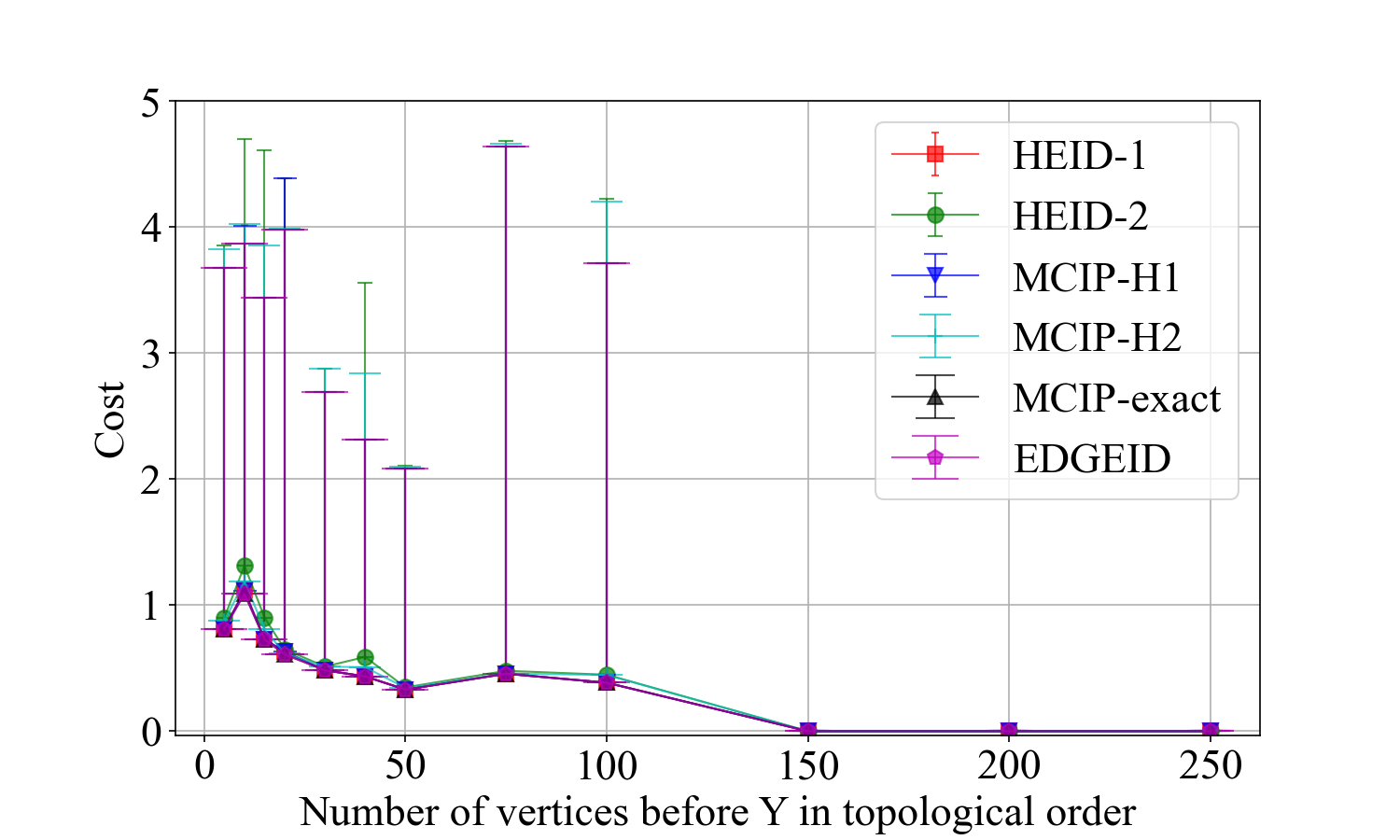}
        \caption[]%
        {{\small Solution costs.}}   
        \label{fig:costs}
    \end{subfigure}
    \begin{subfigure}[b]{0.325\textwidth}   
        \centering 
        \includegraphics[width=1.1\textwidth]{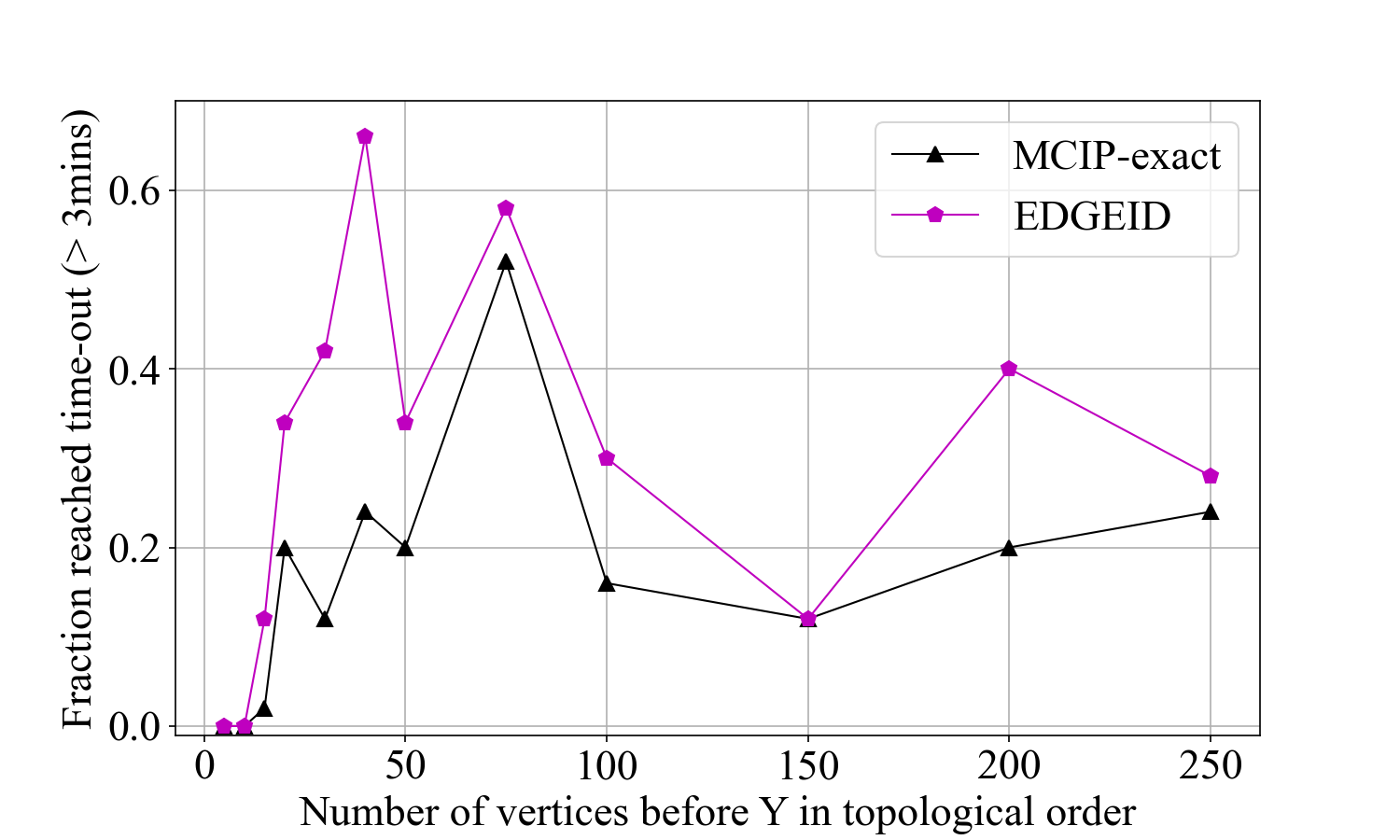}
        \caption[]%
        {{\small Fraction runtime exceeded 3 min.}}
        \label{fig:runtimexceeded}
    \end{subfigure}
    \caption{Experimental results for runtime, solution costs, fraction of graphs for which no solution was found, and fraction of graphs for which runtime limit of 3 minutes was exceeded. Error bars for runtime and cost graphs indicate 5th and 95th percentiles. Best viewed in color.}
    \label{fig:allresults}
\end{figure}

Results are presented in Figure~\ref{fig:allresults}. 
Runtimes and costs are shown for the subset of graphs for which all algorithms found a solution (to facilitate comparison). 
Runtimes for each algorithm are shown in Fig.~\ref{fig:runtimes}, where it can be seen that our proposed HEID-1 and HEID-2 heuristic algorithms have negligible runtime, followed by the MCIP variants. 
Interestingly, the exact algorithm EDGEID outperformed the MCIP algorithms on larger graphs, for which the transformation time from the edge ID problem to the MCIP increases with the size of the graph. 
In contrast, EDGEID had large runtime variance which depended heavily on the specifics of the graph under evaluation, particularly for graphs with fewer vertices. 
The costs for each graph are shown in Fig.~\ref{fig:costs}, and here we see, as expected, the lowest cost is achieved by the two exact algorithms, EDGEID and MCIP-exact, followed closely by the heuristic algorithms. 
Figure~\ref{fig:runtimexceeded} shows the fraction of evaluations for which the runtime exceeded 3 minutes (applicable to the exact algorithms). 
In general, and owing to the sparsity penalty in our graph generation mechanism, the cost of identified solutions falls with the number of vertices. 
However, among the exact algorithms, EDGEID, exceeds the 3-minute runtime more often than the MCIP-Exact, regardless of the number of vertices and despite the fact that EDGEID is quicker at finding a solution when it does so. 
Overall, HEID-1 was the most consistent in finding a solution, having a short runtime, and achieving a close to optimal cost.

\paragraph{Real-world graphs: } We also apply the algorithms to four real-world datasets. 
The first `Psych' (22 nodes \& 70 directed edges) concerns the putative structure from a causal discovery algorithm Structural Agnostic Model \cite{Kalainathan2020} using data collected as part of the Health and Relationships Project \cite{Umberson2015}. 
The other three `Barley' (48 nodes \& 84 directed edges), `Water' (32 nodes \& 66 directed edges), and `Alarm' (37 nodes \& 46 directed edges) come from the bnlearn python package \cite{bnlearn}. 
For all four graphs, and as with the simulations described above, we introduce bidirected edges with a sparsity constraint of $\log(n)/n$, and simulate expert domain knowledge by randomly assigning directed and bidirected edge probabilities between 0.51 and 1. 
The outcome $Y$ is selected to be the last vertex in the topological ordering. 
For these results, we provide the runtime (limited to 500 seconds) and cost, as well as the ratio of graph plausibility before and after selecting a subgraph in which the effect is identifiable $P(\hat{\mathcal{G}}^*)/ P(\mathcal{G})$. 
This ratio is 1.0 if the effect is identifiable in the original graph and decreases according to the plausibility of an identified subgraph.

Results are shown in Table~\ref{tab:realworld}. In cases where MCIP-exact and/or EDGEID did not exceed the runtime limit, it can be seen that HEID-2 and MCIP-H2 achieved equivalent to optimal cost and ratio. Runtimes for MCIP variants exceeded the HEID variants owing to the required transformation. EDGEID timed out on all but the Alarm structure, whereas MCIP-exact only timed out on the Psych structure, indicating that the MCIP-exact is more consistent (this also corroborates Figure~\ref{fig:runtimexceeded}). 

\begin{table}[t]
\centering
\caption{Time (seconds), cost, and ratio $P(\hat{\mathcal{G}}^*)/P(\mathcal{G})$ for seven algorithms over four real-world datasets. A dash - indicates the maximum runtime (500 seconds) exceeded.}
  \resizebox{\textwidth}{!}{\begin{tabular}{llrclrclrclrc}
    \toprule
   \multirow{2}{*}{\textbf{Algorithm}} &  \multicolumn{3}{c}{\textbf{Psych}} &\multicolumn{3}{c}{\textbf{Barley}} & \multicolumn{3}{c}{\textbf{Alarm}} & \multicolumn{3}{c}{\textbf{Water}}
   \\
      & {Time} & {Cost} & {Ratio} & {Time} & {Cost} & {Ratio} & {Time} & {Cost} & {Ratio} & {Time} & {Cost} & {Ratio}\\ \midrule
    HEID-1 & 0.0019 & 2.648 & 0.07 &    0.0026 & 0.081   & 0.92    & 0.0004 & 0.0 & 1.0 & 0.0019 & 1.02 & 0.36 \\
    HEID-2 & 0.0019 & 1.806 & 0.16 &    0.0026 & 0.081   & 0.92    & 0.0003 & 0.0 & 1.0 & 0.0017 & 0.42 & 0.66 \\
    MCIP-H1 & 0.0136 & 2.648 & 0.07 &   0.0140 & 0.081   & 0.92    & 0.0027 & 0.0 & 1.0 & 0.0124 & 1.02 & 0.36 \\
    MCIP-H2 & 0.0133 & 1.806 & 0.16 &   0.0131 & 0.081   & 0.92    & 0.0029 & 0.0 & 1.0 & 0.0113 & 0.42 & 0.66 \\
    MCIP-exact & -   & -     & -    &   0.0099 &   0.081 & 0.92    & 0.0028 & 0.0 & 1.0 & 0.0221 & 0.42 & 0.66 \\
    EDGEID & -       & -     & -    & -        &   -     & -       & 0.0005 & 0.0 & 1.0 & -      & {-} & {-} \\
    \bottomrule
  \end{tabular}}
  \label{tab:realworld}
\end{table}

\section{Concluding remarks}
Researchers in causal inference are often faced with graphs for which the effect of interest is not identifiable. 
It is common to identify a target effect by assuming ignorability. 
A less drastic and more reasonable approach would be to relax this assumption by identifying the most plausible subgraph, given uncertainty about the structure as we suggested in this work. 
We presented a number of algorithms for finding the most probable/plausible probabilistic ADMG in which the target causal effect is identifiable. 
We provided an analysis of the complexity of the problem, and an experimental comparison of runtimes, solution costs, and failure rates. 
We noted that our heuristic algorithms, Alg.~\ref{alg:heur1} and Alg.~\ref{alg:heur2} performed remarkably well across all metrics. 
In terms of limitations, we made the assumption that the edges in $\mathcal{G}$ are mutually independent (Assumption \ref{ass:indep}). 
Future work should explore the case where this assumption does not hold. 
{ In this work, we studied the problem of point identification of the target effect. We envision that a similar methodology can be applied to partial identification.}
Finally, it is worth noting that the external validity of the derived subgraph (i.e., whether or not the subgraph is correctly specified with respect to the corresponding real-world process) is not guaranteed.
As such, practitioners using such approaches are encouraged to do so with caution, particularly for research involving human participants.

\section*{Acknowledgements}
This research was in part supported by the Swiss National Science Foundation under NCCR Automation, grant agreement 51NF40\_180545 and Swiss SNF project 200021\_204355 /1.

\bibliographystyle{plain}
\bibliography{Main}



\clearpage
\appendix
\onecolumn
\begin{center}
    \bfseries\Large Appendix
\end{center}
The appendices are organized as follows.
Formal proofs of the results stated in the main text are presented in Section \ref{apx:proofs}.
In Section \ref{apx:MH}, we describe the algorithm to recover the maximal hedge formed for a certain query (Def.~\ref{def:mh}), which is used as a subroutine of Algorithm \ref{alg:brute}.
A generalization of Assumption \ref{ass:indep} is discussed in Section \ref{apx:corr}.
Section \ref{apx:heuristic} provides further details of the heuristic algorithms discussed in the main text.
Further evaluations and experimental conditions for our proposed algorithms are presented in Section \ref{apx:exp}.

\begin{table}[ht]
        \caption{Table of notations} \label{tab: notation}
        \centering
        \begin{tabular}{c| c} 
            \hline 
            \textbf{Symbol} & \textbf{Description}  \\
            \hline
            $V^\G$ & Vertices of $\G$ \\
            $E_b^\G$ & The set of bidirected edges of $\G$\\
            $E_d^\G$ & The set of directed edges of $\G$\\
            $\Anc{X}{\G}$ & Ancestors of $X$ in $\G$\\
            $\mathcal{M}(\G)$ & The set of the all compatible models with $\G$\\
            $p_e$ &   Probability of edge $e$ \\
            $w_e$ &  Weight of edge $e$ \\
            $P_X(Y)$ & Causal effect of $X$ on $Y$ \\
            \hline 
        \end{tabular}
\end{table}

\section{Formal proofs}\label{apx:proofs}
We begin with presenting the proofs of Lemma \ref{lem:subg} and Lemma \ref{lem:eq}.
Proofs of Theorem \ref{thm:hardness} and Proposition \ref{prp:reduc} appear at the end of Sections \ref{apx:mcipedgeid} and \ref{apx:edgeidmcip}, respectively.

\lemsubg*
\begin{proof}
    Let $\mathcal{H}\subseteq\G$ be an arbitrary edge-induced subgraph of $\G$.
    Let $\mathcal{F}$ be an identification formula for $P_X(Y)$ in $\mathcal{G}$, i.e., for any model $M$ that induces $\G$, 
    \begin{equation}\label{eq:prp1proof1}
        P_X^M(Y)=\mathcal{F}(P^M(V^\G)).
    \end{equation}
    By definition, $P_X(Y)$ is identifiable in $\G$.
    As a result, there exists and identification formula such as $\mathcal{F}'$ that can be derived for $P_X(Y)$ in $\G$, using a sequence of do calculus rules and basic probability manipulations.
    Note that this means for any model $M$ that induces $\G$, 
    \begin{equation}\label{eq:prp1proof2}
        P_X^M(Y)=\mathcal{F}'(P^M(V^\G)).
    \end{equation}
    Note that an immediate corollary of Equations \ref{eq:prp1proof1} and \ref{eq:prp1proof2} is that for any model $M$ that induces $\G$,
    \begin{equation}\label{eq:prp1proof3}
        \mathcal{F}(P^M(V^\G))=\mathcal{F}'(P^M(V^\G)).
    \end{equation}
    Now, we first show that this sequence of actions (combination of do calculus rules and probability manipulations) is valid in $\mathcal{H}$.
    Note that the basic probability manipulations are graph-independent.
    It only suffices to show that any applied do calculus rule w.r.t. $\G$ can also be applied w.r.t. $\mathcal{H}$.
    The validity conditions of all three do calculus rules are based on certain d-separations.
    As a result, it suffices to show that if a d-separation relation is valid in $\G$, it is also valid in $\mathcal{H}$.
    To do so, it suffices to show that if all paths between $Z_1$ and $Z_2$ are blocked in $\G$ given $W$, they are blocked in $\mathcal{H}$ too, for arbitrary disjoint sets of vertices $Z_1, Z_2, W\subseteq V^\G$.
    Take an arbitrary path, $p$, between $Z_1$ and $Z_2$ in $\mathcal{H}$. 
    Since $\mathcal{H}\subseteq\G$, this path exists in $\G$.
    Since $Z_1$ and $Z_2$ are d-separated given $W$ in $\G$, the path $p$ is blocked by $W$.
    As a result, any path between $Z_1$ and $Z_2$ in $\mathcal{H}$ is blocked by $W$.
    Therefore, any do-calculus rule applied in $\G$, can also be applied in $\mathcal{H}$.
    Hence, $\mathcal{F}'$ is a valid identification formula for $P_X(Y)$.
    That is, for any model $M$ that induces $\mathcal{H}$,
    \begin{equation}\label{eq:prp1proof4}
        P_X^M(Y)=\mathcal{F}'(P^M(V^\mathcal{H})).
    \end{equation}
    Now note that any model $M$ that induces $\mathcal{H}$, i.e., is compatible with $\mathcal{H}$, is also compatible with $\G$.
    Also, $V^\G=V^\mathcal{H}$.
    As a result, from Equations \ref{eq:prp1proof3} and \ref{eq:prp1proof4}, we know that for any model $M$ that induces $\mathcal{H}$,
    \[  
        P_X^M(Y)=\mathcal{F}(P^M(V^\mathcal{H})).
    \]
    By definition, $\mathcal{F}$ is a valid identification formula for $P_X(Y)$ in $\mathcal{H}$.
\end{proof}
\lemeq*
\begin{proof}
    \emph{Problem 1.}
    First consider an arbitrary graph $\G_1 \in [\mathcal{G}]_{Id(Q[Y])}$ such that $\G_1$ has an edge $e$ with $p_e < 1/2$.
    Let $G_2$ denote the graph $G_1$ after removing $e$.
    Lemma \ref{lem:subg} implies that $\G_2 \in [\mathcal{G}]_{Id(Q[Y])}$.
    According to Equation \ref{eq:prob}, we have $P(\G_2) = \frac{1-p_e}{p_e} P(\G_1) > P(\G_1)$ (since $p_e<1/2$).
    As a result, the solution $\G^*$ to Problem \ref{pb:1} (Eq.~\ref{eq:p1}) has no edges with probability less than $1/2$.
    We can therefore rewrite Problem \ref{pb:1} as:
    \begin{equation*}
        \begin{aligned}
             &\mathcal{G}^*:=\argmax_{\substack{\G_s\subseteq\G, \\ \G_s\in [\mathcal{G}]_{Id(Q[Y])}}}P(\G_s)=\argmax_{\substack{\G_s\subseteq\G, \\ \G_s\in [\mathcal{G}]_{Id(Q[Y])}}}P(\G_s)\quad     \mathbf{s.t.}\quad \forall e\in \G_s: \: p_e \geq \frac{1}{2}.\\
        \end{aligned}
    \end{equation*}
    Or equivalently, we can always assume that we start with a graph $\G$ that has no edges with probability less than $1/2$, as otherwise we can remove all of those edges and the problem does not change.
    This indeed is equivalent to choosing weight (cost) 0 for those edges in the equivalent edge ID problem.
    Now assuming that the edges have probability at least $1/2$,
    \begin{equation*}
        \begin{aligned}
             \mathcal{G}^*&=\argmax_{\substack{\G_s\subseteq\G, \\ \G_s\in [\mathcal{G}]_{Id(Q[Y])}}}P(\G_s)\\
             &=\argmax_{\substack{\G_s\subseteq\G, \\ \G_s\in [\mathcal{G}]_{Id(Q[Y])}}}\log(P(\G_s))\\
            &=\argmax_{\substack{\G_s\subseteq\G, \\ \G_s\in [\mathcal{G}]_{Id(Q[Y])}}}\log(\prod_{e\in \G_s}p_e \prod_{e\notin \G_s}(1-p_e))\\
            &=\argmax_{\substack{\G_s\subseteq\G, \\ \G_s\in [\mathcal{G}]_{Id(Q[Y])}}} \sum_{e\in \G_s} \log(p_e)+ \sum_{e\notin \G_s}\log(1-p_e))\\
            &=\argmax_{\substack{\G_s\subseteq\G, \\ \G_s\in [\mathcal{G}]_{Id(Q[Y])}}} \sum_{e\in \G_s} \log(p_e)+ \sum_{e\notin \G_s}\log(1-p_e)) + \sum_{e\in \G_s}\log(1-p_e))-\sum_{e\in \G_s}\log(1-p_e)) \\
        \end{aligned}
    \end{equation*}
    Since $\sum_{e\notin \G_s}\log(1-p_e))+ \sum_{e\in \G_s}\log(1-p_e))$ is a constant value that does not depend on $\G_s$, it can be ignored in the maximization and we have:
    \begin{equation*}
        \begin{aligned}
             &\mathcal{G}^*=\argmax_{\substack{\G_s\subseteq\G, \\ \G_s\in [\mathcal{G}]_{Id(Q[Y])}}} \sum_{e\in \G_s} \log(p_e)-\sum_{e\in \G_s}\log(1-p_e)) \\
             &=\argmax_{\substack{\G_s\subseteq\G, \\ \G_s\in [\mathcal{G}]_{Id(Q[Y])}}} \sum_{e\in \G_s} \log(\frac{p_e}{1-p_e)}) \\
             &=\argmin_{\substack{\G_s\subseteq\G, \\ \G_s\in [\mathcal{G}]_{Id(Q[Y])}}} \sum_{e\notin \G_s} \log(\frac{p_e}{1-p_e)}). \\
        \end{aligned}
    \end{equation*}
    From the formulation above, it is clear that if we assign the weight $w_e=\log(\frac{p_e}{1-p_e})$ to each edge $e\in E^\G$, we will have an instance of the edge ID problem.
    Note that for edges with probability higher than $1/2$, $\log(\frac{p_e}{1-p_e})\geq 0$, and this assignment of edge weights satisfies the positivity requirement.
    For the opposite direction, note that the procedure explained above is reversible by the choice of probabilities $p_e=\frac{\exp{(w_e)}}{1+\exp{(w_e)}}$, which is a value between $1/2$ and $1$. 
    
    \emph{Problem 2.}
    First note that under Assumption \ref{ass:indep}, for any graph $\G_s$, 
    \[\sum_{\hat{\G}\subseteq\G_s}P(\hat{\G})=\prod_{e\notin \G_s}(1-p_e)[\sum_{\hat{E}\subseteq E^{\G_s}}\prod_{e\in\hat{E}}p_e\prod_{e\notin\hat{E}}(1-p_e)]=\prod_{e\notin \G_s}(1-p_e).\]
    This is because the inner summation goes over all the possible subsets of $E^{\G_s}$, and the summation adds up to 1.
    Therefore, we can rewrite Problem \ref{pb:2} (Eq.~\ref{eq:p2})as
    \[
        \begin{aligned}
            \mathcal{H}^*&=\argmax_{\substack{\G_s\subseteq\G,\\\G_s\in [\mathcal{G}]_{Id(Q[Y])}}}\sum_{\hat{\G}\subseteq\G_s}P(\hat{\G})\\
            &=\argmax_{\substack{\G_s\subseteq\G,\\\G_s\in [\mathcal{G}]_{Id(Q[Y])}}}\prod_{e\notin \G_s}(1-p_e)\\
            &=\argmax_{\substack{\G_s\subseteq\G,\\\G_s\in [\mathcal{G}]_{Id(Q[Y])}}}\log(\prod_{e\notin \G_s}(1-p_e))\\
            &=\argmax_{\substack{\G_s\subseteq\G,\\\G_s\in [\mathcal{G}]_{Id(Q[Y])}}}\sum_{e\notin \G_s}\log(1-p_e)\\
            &=\argmin_{\substack{\G_s\subseteq\G,\\\G_s\in [\mathcal{G}]_{Id(Q[Y])}}}\sum_{e\notin \G_s}-\log(1-p_e).
        \end{aligned}
    \]
    With the same reasoning as before, assigning the weights $w_e=-\log(1-p_e)$ to each edge $e\in E^\G$, we end up with an instance of the edge ID problem.
    Note that again $0\leq-\log(1-p_e)\leq\infty$.
    It is noteworthy that this procedure is also reversible with the choice of edge probabilities $p_e = 1-\exp{(-w_e)}$, which reduces the edge ID problem to an instance of Problem \ref{pb:2}.
    Again note that $0\leq1-\exp{(-w_e)}\leq 1$ for any non-negative $w_e$.
\end{proof}

\subsection{Reduction from MCIP to edge ID}\label{apx:mcipedgeid}
\thmhardness*
First note that edge ID is in the class of NP problems, since the ID algorithm \cite{shpitser2006identification} can verify a solution in polynomial time.
To prove Theorem \ref{thm:hardness}, we first present a polynomial-time reduction from MCIP to the edge ID problem.
It has been shown that the minimum vertex cover problem\footnote{We later showed that there is a polynomial-time reduction from the minimum hitting set problem to MCIP, resulting in stronger hardness results \cite{akbari2023experimental}.} can be reduced to MCIP in polynomial time \cite{akbari2022minimum}.
Combining the two reductions, we show that there exists a polynomial-time reduction from the minimum vertex cover problem to the edge ID problem.
Since the minimum vertex cover problem is known to be NP-complete \cite{karp1972reducibility}, it follows that the edge ID problem is also NP-complete.

We propose the following reduction from MCIP to the edge ID problem.
Assume we want to solve MCIP given ADMG $\G= (V^\G, E_d^\G, E_b^\G)$, query $Q[Y]$, and the intervention costs $C(v)$ for $v\in V^\G$.
We construct a graph, denoted by $\mathcal{H}=\mathcal{T}_1(\G,Y)$, through the following steps.
\begin{enumerate}[label=\alph*.,leftmargin=*]
    \item For every vertex $ x \in V ^{\G}\setminus Y$, add two vertices $x^1, x^2$ to $V^{\mathcal{H}}$.
    \item For any bidirected edge $\{ x,z \} \in E_b^{\G}$ where $x \in V^{\G} \setminus Y$ and $z \in V^{\G}$, add the bidirected edge $\{ x^2, z^2\}$ to $E_b^{\mathcal{H}}$.
    \item For any directed edge $(x,z) \in E_d^{\G}$ where $x \in V^{\G} \setminus Y$ and $z \in V^{\G}$, add the directed edge $(x^1, z^1)$ to $E_d^{\mathcal{H}}$.
    \item For any bidirected edge $\{y_1, y_2 \} \in E_b^{\G}$ where $y_1,y_2\in Y$, add the bidirected edge $\{y_1, y_2 \}$ to $E_b^{\mathcal{H}}$.
    \item For every $x^1, x^2 \in V^{\G} \setminus Y$, draw the two edges $\{x^1, x^2\} \in E_b^{\mathcal{H}}$ and $(x^2, x^1) \in E_d^{\mathcal{H}}$. 
    Furthermore, the weight of $\{x^1, x^2\}$ is $C(x)$.
    \item The costs of the all other edges in $\mathcal{H}$ are assigned to be infinite.
\end{enumerate}
With abuse of notation, for any vertex $x\in V^\G\setminus Y $, we define $\mathcal{T}_1(x)=\{x^2,x^1\}\in E_b^\mathcal{H}$, where $\{x^2,x^1\}$ is the bidirected edge in $\mathcal{H}$ that corresponds to $x$ in $\G$, and inherits the same weight (cost).
\begin{example} 
\begin{figure}[t] 
    \centering
    \tikzstyle{block} = [circle, inner sep=1.3pt, fill=black]
		\tikzstyle{input} = [coordinate]
		\tikzstyle{output} = [coordinate]

	\begin{subfigure}[b]{0.24\textwidth}
	    \centering
	    \begin{tikzpicture}
            \tikzset{edge/.style = {->,> = latex'}}
            \node[label={\small $c_x$}] (x) at (0,0) {$x$};
            \node (z)[label= right: {\small $c_z$}] at (1.5,-0.7) {$z$};
            \node (y_1) at (-0.5,-1.5) {$y_1$};
            \node (y_2) at (1,-1.5) {$y_2$};
            
            \draw[->] (z) to (x);
            \draw[->] (x) to (y_1);
            \path[->]
            (y_1) edge[blue,dashed, style = {<->}, bend right=50](y_2);
            \path[->]
            (z) edge[blue,dashed, style = {<->}, bend left=50] (y_2);
            \path[->]
            (x) edge[blue,dashed, style = {<->}, bend left=50] (z);
        \end{tikzpicture}
    \caption{The model $\G$ with costs on each vertex}
    \label{fig:ex_MCIP to edge id1}
	\end{subfigure}\hspace{1cm}
	\begin{subfigure}[b]{0.24\textwidth}
	    \centering
	    \begin{tikzpicture}
            \tikzset{edge/.style = {->,> = latex'}}
            \node (x1) at (0,0) {$x^1$};
            \node (z1) at (1.5,-0.7) {$z^1$};
            \node (y_1) at (-0.5,-1.5) {$y_1$};
            \node (y_2) at (1,-1.5) {$y_2$};
            
            \node (x2) at (2,0) {$x^2$};
            \node (z2) at (3.5,-0.7) {$z^2$};

            \draw[->] (z1) to (x1);
            \draw[->] (x1) to (y_1);
             \draw[->] (x2) to (x1);
            \draw[->] (z2) to (z1);
            \path[->]
            (y_1) edge[blue,dashed, style = {<->}, bend right=50]node[fill=white, anchor=center, pos=0.5] {\color{black}\tiny $\infty$} (y_2);
            \path[->]
            (z2) edge[blue,dashed, style = {<->}, bend left=50]node[fill=white, anchor=center, pos=0.5] {\color{black}\tiny $\infty$}  (y_2);
            \path[->]
            (x2) edge[blue,dashed, style = {<->}, bend left=50]node[fill=white, anchor=center, pos=0.5] {\color{black}\tiny $\infty$}  (z2);
             \path[->]
            (x2) edge[blue,dashed, style = {<->}, bend right=50] node[fill=white, anchor=center, pos=0.5] {\color{black}\tiny $c_x$} (x1);
            \path[->]
            (z2) edge[blue,dashed, style = {<->}, bend right=20] node[fill=white, anchor=center, pos=0.5] {\color{black}\tiny $c_z$} (z1);
            \path[->]
            (z1) edge[style = {->}] node[fill=white, anchor=center, pos=0.5] {\color{black}\tiny $\infty$} (x1);
            \path[->]
            (z2) edge[style = {->}] node[fill=white, anchor=center, pos=0.5] {\color{black}\tiny $\infty$} (z1);
            \path[->]
            (x1) edge[style = {->}] node[fill=white, anchor=center, pos=0.5] {\color{black}\tiny $\infty$} (y_1);
            \path[->]
            (x2) edge[style = {->}] node[fill=white, anchor=center, pos=0.5] {\color{black}\tiny $\infty$} (x1);
        \end{tikzpicture}
    \caption{The model $\G$ with costs on each edge}
    \label{fig: ex_MCIP to edge id2}
	\end{subfigure}\hspace{1cm}
    \caption{Reduction of MCIP to edge ID}
    \label{fig:mciptoeid}
\end{figure}
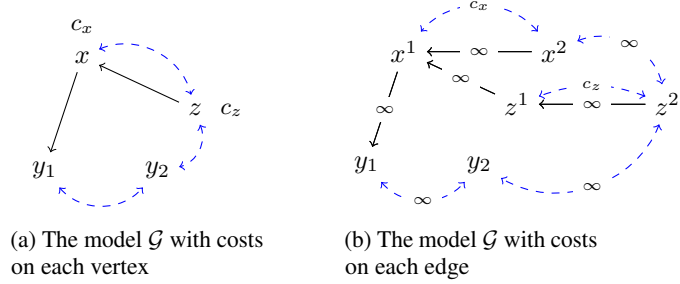   

Consider graph $\G$ in Figure \ref{fig:ex_MCIP to edge id1}.
Vertices $x$ and $z$ are mapped to $x^1,x^2$, and $z^1,z^2$, respectively.
Both a directed and a bidirected edge are drawn between these pairs.
The bidirected edge $\{x^1,x^2\}$ is assigned the weight $C(x)=c_x$, and the bidirected edge $\{z^1,z^2\}$ is assigned the weight $C(z)=c_z$.
Infinite weights are assigned to the rest of the edges in $\mathcal{H}$ (Figure \ref{fig: ex_MCIP to edge id2}).
\end{example}

\begin{proposition}\label{prp:mcipedgeid}
    Suppose $\G'$ is an ADMG, $Y\subseteq V^{\G'}$ is a set of its vertices such that $Y$ is a district in $\G'[Y]$, and $\mathcal{H}'=\mathcal{T}_1(\G', Y)$. 
    Consider $X\subseteq V^{\G'}\setminus Y$ as an arbitrary subset of vertices of $\G'$, and define $\G=\G'[V^{\G'} \setminus X]$.
    Let  $E_b''= \{e \in E_b^{\mathcal{H}'} \vert \exists v\in X , e=\mathcal{T}_1(v) \}$ and define $E_b^{\mathcal{H}}= E_b^{\mathcal{H}'} \setminus E_b''$.
    Let $\mathcal{H}$ be the edge-induced subgraph of $\mathcal{H}'$ defined as $\mathcal{H}=(V^{\mathcal{H}'}, E_d^\mathcal{H}, E_b^\mathcal{H})$.
    $Q[Y]$ is identifiable in $\G$ if and only if $Q[Y]$ is identifiable in $\mathcal{H}$. 
\end{proposition}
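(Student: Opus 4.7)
The plan is to prove the identifiability equivalence via the hedge criterion. Since $Q[Y]$ is identifiable in an ADMG if and only if no hedge is formed for $Q[Y]$ in that graph, it suffices to establish a bijection between hedges for $Q[Y]$ in $\G$ and hedges for $Q[Y]$ in $\mathcal{H}$. The natural mapping to try is $S \mapsto \widetilde{S} := Y \cup \{v^1 : v \in S \setminus Y\} \cup \{v^2 : v \in S \setminus Y\}$, defined for any $S \subseteq V^\G$ with $Y \subsetneq S$.

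For the forward direction, I would assume $S$ is a hedge for $Q[Y]$ in $\G$ and verify that $\widetilde{S}$ meets the two conditions of Definition~\ref{def:hedge} in $\mathcal{H}$. For the district condition, the crucial observation is that every $v \in S \setminus Y$ lies in $V^\G = V^{\G'} \setminus X$, hence $v \notin X$ and the bidirected edge $\{v^1,v^2\}$ is present in $\mathcal{H}$; together with step (b) of the construction (which preserves the bidirected structure of $\G$ among the '2' copies), this shows that $\mathcal{H}[\widetilde{S}]_\leftrightarrow$ is connected iff $\G[S]_\leftrightarrow$ is connected. For the ancestor condition, I would use that the only outgoing directed edge at $v^2$ in $\mathcal{H}$ is $(v^2,v^1)$ (step e), and that directed edges between '1' copies mirror the directed structure of $\G$ (step c). Consequently $v^2$ reaches $Y$ via a directed path inside $\mathcal{H}[\widetilde{S}]$ exactly when $v$ reaches $Y$ via a directed path inside $\G[S]$, so $\Anc{Y}{\mathcal{H}[\widetilde{S}]} = \widetilde{S}$ iff $\Anc{Y}{\G[S]} = S$.

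The converse direction is more delicate because, a priori, a hedge $H$ for $Q[Y]$ in $\mathcal{H}$ need not be of the form $\widetilde{S}$. The plan is to establish two structural lemmas. First, if $v^1 \in H$ for some $v \notin Y$, then connectedness of $\mathcal{H}[H]_\leftrightarrow$ forces both $v \notin X$ (otherwise $\{v^1,v^2\} \notin E_b^\mathcal{H}$ and $v^1$ would be isolated in the bidirected subgraph, contradicting $|H|>1$) and $v^2 \in H$. Second, if $v^2 \in H$ for $v \notin Y$, then the ancestor condition forces $v^1 \in H$, since $(v^2,v^1)$ is the only outgoing directed edge at $v^2$. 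These two facts together imply that $H$ is the union of $Y$ with pairs $\{v^1,v^2\}$ ranging over some subset $S \setminus Y \subseteq V^\G \setminus Y$, so $H = \widetilde{S}$, and the argument of the forward direction then shows $S$ is a hedge for $Q[Y]$ in $\G$.

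The main obstacle is the converse, specifically ruling out pathological hedges in $\mathcal{H}$ that contain one of $v^1,v^2$ without the other; the two structural lemmas above are precisely what is needed, and they rely critically on the fact that removing $E_b''$ (rather than removing vertices) is exactly what captures the effect of deleting the vertex set $X$ in $\G'$, because the '1' copies of vertices in $X$ become bidirected-isolated in $\mathcal{H}$ and so cannot appear in any nontrivial district together with $Y$. Once the bijection $S \leftrightarrow \widetilde{S}$ is established, the hedge criterion of Definition~\ref{def:hedge} immediately yields the stated equivalence.
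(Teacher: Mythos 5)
Your proposal is correct and follows essentially the same route as the paper's proof: both reduce identifiability to the (non)existence of hedges, establish the key structural fact that a hedge in $\mathcal{H}$ contains $v^1$ if and only if it contains $v^2$ (using exactly the two observations that $\{v^1,v^2\}$ is the unique bidirected edge at $v^1$ and $(v^2,v^1)$ is the unique outgoing directed edge at $v^2$), and then transfer the district and ancestor conditions back and forth through the pairing $S \leftrightarrow Y\cup S^1\cup S^2$. The paper phrases this as a contrapositive and splits the verification slightly differently (noting that the $Z^1$ vertices are bidirected leaves so connectivity lives on $Z^2\cup Y$, and that $Z^2$ vertices are parentless so ancestry lives on $Z^1\cup Y$), but the substance is identical to yours.
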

\begin{proof}
We prove the contrapositive, i.e., $Q[Y]$ is not identifiable in $\G$ iff $Q[Y]$ is not identifiable in $\mathcal{H}$.
Note that by construction, $Y$ is a district in both $\G[Y]$ and $\mathcal{H}[Y]$.
That is, it suffices to show that there exists a hedge formed for $Q[Y]$ in $\G$ iff there exists a hedge formed for $Q[Y]$ in $\mathcal{H}$.

To this end, we first prove the following claim.
Let $W\in V^\mathcal{H}$ form a hedge for $Q[Y]$.
If $x^1\in W$ for some $x\in V^{\mathcal{G}'}$, then $x^2\in W$ and vice versa.
That is, the two vertices $x^1$ and $x^2$ corresponding to the same vertex $x$ in $V^{\G'}$ appear only simultaneously in any hedge.
To see this, note that by construction, $x^1$ is the only child of $x^2$.
By definition of hedge, if $x^2\in W$, then it has a directed path to $Y$ within $\mathcal{H}[W]$, and this path can only go through $x^1$.
For the other direction, note that $x^1$ has only one bidirected edge, which is with $x^2$.
Again, by definition of hedge, if $x^1\in W$, then it has a bidirected path to $Y$ within $\mathcal{H}[W]$, and this path can only go through $x^2$.
Hence, in the sequel, when there is a hedge $W$ formed for $Q[Y]$ in $\mathcal{H}$, we will without loss of generality assume that
there exists a set of variables $Z\subseteq V^{G'}$ such that $W=Z^1\cup Z^2\cup Y$, where $Z^1=\{z^1\vert z\in Z\}$ and $Z^2=\{z^2\vert z\in Z\}$.

\textit{If part.}
    Let $W=Z^1 \cup Z^2 \cup Y$ form a hedge for $Q[Y]$ in $\mathcal{H}$.
    First note that since none of the bidirected edges between $Z^1$ and $Z^2$ are removed in $\mathcal{H}$, by construction, all vertices $Z$ are present in $\G$, i.e., $Z\subseteq V^\G$.
    Now we show that $Z\cup Y$ forms a hedge for $Q[Y]$ in $\G$. 
    To this end, we prove $\G[Z \cup Y]$ is a district and $Z \cup Y = \Anc{Y}{\G[Z \cup Y]}$. 
    First note that any vertex in $Z^1$ has only one bidirected edge to a vertex in $Z^2$.
    That is, if we consider the edge-induced subgraph of $\mathcal{H}[W]$ over its bidirected edges, vertices of $Z^1$ are leaf nodes.
    As a result, $Z^2\cup Y$ must be connected in this graph.
    That is, $Z^2\cup Y$ is a district in $\mathcal{H}[Z^2\cup Y]$.
    This implies by construction of $\mathcal{H}$ that $\G[Z\cup Y]$ is a single district.
    With a similar reasoning, note that vertices in $Z^2$ have no parents.
    As result, $Z^1\cup Y=\Anc{Y}{\mathcal{H}[Z^1\cup Y]}$ (since the directed paths cannot go through $Z^2$).
    Again, by construction, the edge-induced subgraph of $\G[Z\cup Y]$ over its directed edges is a copy of $\mathcal{H}[Z^1\cup Y]$.
    As a result, $Z \cup Y = \Anc{Y}{\G[Z\cup Y]}$.
 
\textit{Only if part.}
    Let $Z\cup Y$ form a hedge for $Q[Y]$ in $\G$, where $Z\subseteq V^\G\setminus Y$.
    Define $Z^1=\{z^1\vert z\in Z\}$ and $Z^2=\{z^2\vert z\in Z\}$.
    We show that $Z^1\cup Z^2\cup Y$ forms a hedge for $Q[Y]$ in $\mathcal{H}$.
    First, by definition of hedge, $\Anc{Y}{\G[Z\cup Y]}= Z\cup Y$.
    Since the edge-induced subgraph of $\mathcal{H}[Z^1\cup Y]$ is a copy of $\G[Z\cup Y]$ by construction, we know $\Anc{Y}{\G[Z^1\cup Y]}= Z^1\cup Y$.
    Further, each vertex $z^2\in Z^2$ is a parent of $z^1\in Z^1$.
    As a result, $\Anc{Y}{\G[Z^1\cup Z^2\cup Y]}= Z^1\cup Z^2\cup Y$.
    Now it suffices to show that $Z^1\cup Z^2\cup Y$ is a district in $\mathcal{H}[Z^1\cup Z^2\cup Y]$.
    By definition of hedge, $Z\cup Y$ is a district in $\G[Z\cup Y]$.
    By construction of $\mathcal{H}$, exactly the same bidirected edges (and therefore bidirected paths) exist in $\mathcal{H}[Z^2\cup Y]$.
    Therefore, $Z^2\cup Y$ is a district in $\mathcal{H}[Z^2\cup Y]$.
    Now note that by construction of $\mathcal{H}'$, each vertex $z^1\in Z^1$ has a bidirected edge to $z^2\in Z^2$.
    And by definition of $\G$ and $\mathcal{H}$, since the vertices $Z$ exist in $\G$, none of these edges are removed in $\mathcal{H}$.
    As a result, $Z^1\cup Z^2\cup Y$ is a district in $\mathcal{H}[Z^1\cup Z^2\cup Y]$, which completes the proof.
    
\end{proof}

\begin{proof}[Proof of Theorem \ref{thm:hardness}]
A polynomial-time reduction from MCIP to the edge ID problem follows immediately from Proposition \ref{prp:mcipedgeid}.
MCIP is shown to be NP-complete \cite{akbari2022minimum}.
As a result, the edge ID problem is Np-complete.
\end{proof}

\subsection{Reduction from edge ID to MCIP}\label{apx:edgeidmcip}
\prpreduction*
To prove Proposition \ref{prp:reduc}, we begin with presenting a transformation $\mathcal{T}_2(\G,Y)$ which is in the core of reduction from edge ID to MCIP.

Suppose we want to solve the edge ID problem given ADMG $\G=(V^\G,E_d^\G,E_b^\G)$, query $Q[Y]$, and edge weights $ W_\G=\{w_e\vert e\in\G\}$.
Let $X=V^\G\setminus Y$ denote the set of vertices of $\G$ excluding $Y$.
We define the transformation $(\mathcal{H}, Y^{mcip})=\mathcal{T}_2(\G, Y)$ where $\mathcal{H}=(V^\mathcal{H}, E_d^\mathcal{H}, E_b^\mathcal{H})$ is an ADMG and $Y^{mcip}\subseteq V^\mathcal{H}$ as follows.
Note that $V^\mathcal{H}$ will consist of two disjoint set of vertices, namely $V^{\mathcal{H}}_{top}$ and $V^{\mathcal{H}}_{bot}$, i.e., 
$V^{\mathcal{H}}=V^{\mathcal{H}}_{top}\cup V^{\mathcal{H}}_{bot}$.
\begin{enumerate}[label=\alph*.,leftmargin=*]
    \item Begin with $V^\mathcal{H}_{top}=V^\mathcal{H}_{bot}=\emptyset, Y^{mcip}=\emptyset$. 
    For any vertex $v\in V^\G$, add a vertex $v$ to $V^\mathcal{H}_{top}$ with cost $C(v)=\infty$.
    If $v\in Y$, add $v$ to $Y^{mcip}$.
    \item For any directed edge $(v_i, v_j)\in E_d^{\G}$ with weight $w_{ij}^{d}$ in $\G$, add a new vertex $v_{ij}^{d}$ to $V^\mathcal{H}_{top}$, with cost $C(v_{ij}^{d})=w_{ij}^{d}$, where
    \begin{equation*}
                v_{ij}^{d} = 
                \begin{cases}
                    x_{ij}^{d} & \text{ if } v_i , v_j \in X,  \\
                    
                    z_{ij}^{d} & \text{ if } v_i \in Y \text{ or }  v_j \in Y, \\
                    
                    y_{ij}^{d} & \text{ if both } v_i , v_j \in Y.
                \end{cases}
    \end{equation*}
    Draw directed edges $(v_i, v_{ij}^{d})$ and $(v_{ij}^{d}, v_j)$. 
    Further, draw a bidirected edge between $v_i$ and $v_{ij}^{d}$.
    \item For any bidirected edge $\{x_i,x_j\}\in E_b^\G$ with weight $w_{ij}^{b}$, add a new vertex, $x_{ij}^{b}$ to $V^\mathcal{H}_{top}$ with cost $C(x_{ij}^{b})=w_{ij}^{b}$.
    Add two bidirected edges $\{x_i, x_{ij}^{b}\}$ and $\{x_j, x_{ij}^{b}\}$.
    Further, draw two directed edges $(x_{ij}^{b}, x_i)$ and $(x_{ij}^{b}, x_j)$ in $\mathcal{H}$.
    \item For any bidirected edge $\{x_i, y_j\}$ with weight $w_{ij}^{b}$, add a new vertex $z_{ij}^{b}$ to $V^\mathcal{H}_{top}$ with cost $C(z_{ij}^{b})=w_{ij}^{b}$.
    Draw bidirected edges $\{z_{ij}^{b}, x_i\}$ and $\{z_{ij}^{b}, y_j\}$. 
    Then draw a directed edge from $z_{ij}^{b}$ to $x_i$.
    \item For any bidirected edge between $\{y_i, y_j\}\in E_b^\G$ with weight $w_{ij}^{b}$ in $\G$, add a new vertex, $y_{ij}^{b}$ to $V^\mathcal{H}_{top}$ with cost $C(y_{ij}^{b})=w_{ij}^{b}$.
    Draw bidirected edges $\{y_{ij}^{b}, y_i\}$ and $\{y_{ij}^{b}, y_j\}$.
    Further, for any $x\in X$, draw a directed edge from $y_{ij}^{b}$ to $x$.
    \item Let $y_1\prec...\prec y_k$ denote a topological ordering among vertices of $Y$.
    For every pair $\{y_i,y_j\}$ of vertices of $Y$, where $i<j$, add vertices $y_{i}^{ij}, y_{i+1}^{ij}, \dots, y_{j}^{ij}$ to $V^\mathcal{H}_{bot}$.
    Add $y_{j}^{ij}$ to $Y^{mcip}$.
    Draw the directed edges $(y_{k}, y_k^{ij})$ for every $i\leq k\leq j$. 
    Draw the directed edges $(y_k^{ij}, y_i^{ij})$ for every $i<k<j$, and the directed edge $(y_i^{ij}, y_j^{ij})$.
    Draw a bidirected edge between $y_{j}$ and $y_{i}^{ij}$.
    Further, for any bidirected edge $\{y_k,y_l\}\in E_b^\G$ where $i\leq k,l\leq j$, add a new vertex $y_{kl}^{ij}$ to $V^\mathcal{H}_{bot}$, draw two bidirected edges $\{y_{kl}^{ij}, y_{k}^{ij}\}$ and $\{y_{kl}^{ij}, y_{l}^{ij}\}$, and a directed edge $(y_{kl}^{ij}$, $y_{ij}^{b})$.
    The costs of the all of the vertices in $V^\mathcal{H}_{bot}$ are infinite.
\end{enumerate}
With abuse of notation, for any bidirected edge $e_{ij}^b=\{v_i,v_j\}\in E_b^\G$ and any directed edge $e_{ij}^d=(v_i,v_j)\in E^\G_d$, we define $\mathcal{T}_2(e_{ij}^b)=v_{ij}^b$ and $\mathcal{T}_2(e_{ij}^d)=v_{ij}^d$, respectively, where $v_{ij}^b,v_{ij}^d\in V^\mathcal{H}$ are the vertices representing their corresponding edges.

We will utilize the following results to prove Proposition \ref{prp:reduc}.
More precisely, Lemmas 2 through 9 are used to prove Proposition \ref{prp:edgeidmcip}, which in turn is used to prove Proposition \ref{prp:reduc}.
\begin{lemma}\label{lem:vertexdistrict}
    Suppose $\G$ is an ADMG, $Y$ is a set of its vertices, and $(\mathcal{H}, Y^{mcip})=\mathcal{T}(\G, Y)$.
    Each vertex $y\in Y^{mcip}$ is a district in $\mathcal{H}$.
\end{lemma}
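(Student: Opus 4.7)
The plan is to verify the claim directly from the Definition of district, applied to the singleton subset $\{y\}$ of $V^\mathcal{H}$. The only ingredients needed are the explicit description of $Y^{mcip}$ coming from the construction $\mathcal{T}_2$, together with the standard convention that a single-vertex graph is connected. There is no substantive combinatorial obstacle here; the lemma is a hygiene statement that certifies each $y \in Y^{mcip}$ as a valid input to the district-based definitions used downstream.

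First I would confirm that every $y \in Y^{mcip}$ is a vertex of $\mathcal{H}$. The set $Y^{mcip}$ is populated only in two places in the recipe for $(\mathcal{H}, Y^{mcip}) = \mathcal{T}_2(\G, Y)$: step (a) inserts each $v \in Y$ simultaneously into $V^\mathcal{H}_{top}$ and into $Y^{mcip}$, and step (f) inserts the auxiliary vertex $y_j^{ij}$ into $V^\mathcal{H}_{bot}$ and into $Y^{mcip}$ for every pair $(i,j)$ with $i<j$ in the topological ordering of $Y$. In both cases the newly added element is first placed into $V^\mathcal{H} = V^\mathcal{H}_{top} \cup V^\mathcal{H}_{bot}$, so $Y^{mcip} \subseteq V^\mathcal{H}$.

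Second I would apply the district definition to the singleton $\{y\}$. By the Definition of district in the preliminaries, $X \subseteq V^\mathcal{H}$ is a district in $\mathcal{H}$ exactly when the bidirected-edge-induced subgraph $\mathcal{H}_\leftrightarrow[X]$ is connected. Setting $X = \{y\}$, the induced subgraph $\mathcal{H}_\leftrightarrow[\{y\}]$ has vertex set $\{y\}$ and empty edge set, because any bidirected edge of $\mathcal{H}$ requires two distinct endpoints both lying in $X$. A one-vertex graph is connected by convention, and therefore $\{y\}$ is a district in $\mathcal{H}$ for every $y \in Y^{mcip}$.

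The only thing to watch for is the singleton-connectedness convention; beyond that the argument is immediate from the construction. The role of the lemma in the broader argument is to legitimize each $y \in Y^{mcip}$ as an admissible choice of the ``$Y$'' argument in the hedge and maximal-hedge definitions (Def.~\ref{def:hedge} and Def.~\ref{def:mh}), which the subsequent lemmas invoke to relate hedges in $\G$ for $Q[Y]$ to hedges in $\mathcal{H}$ for $Q[Y^{mcip}]$ and thereby to establish the correctness of the reduction from edge ID to MCIP.
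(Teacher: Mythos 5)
There is a genuine gap here: you have proved a vacuous reading of the statement, not the fact the paper actually needs and proves. Under the paper's Definition of district, \emph{every} singleton subset of vertices of \emph{every} ADMG is trivially a district, so your argument would render the lemma contentless. But look at how the lemma is invoked in the proof of Proposition~\ref{prp:edgeidmcip}: ``From Lemma~\ref{lem:vertexdistrict}, this hedge is formed for $Q[y']$ for some $y'\in Y^{mcip}$,'' and later ``from Lemma~\ref{lem:vertexdistrict} it is a district of $\mathcal{H}[Y^{mcip}]$.'' What is being used is that the singletons $\{y\}$, $y\in Y^{mcip}$, are the \emph{maximal} c-components of the induced subgraph $\mathcal{H}[Y^{mcip}]$, so that non-identifiability of $Q[Y^{mcip}]$ reduces to a hedge formed for a single $Q[y']$. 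Equivalently, the substantive claim is that no bidirected edge of $\mathcal{H}$ joins two members of $Y^{mcip}$ (and hence no bidirected path within $Y^{mcip}$ connects two of them). Your singleton-connectedness argument does not establish this, and without it the downstream decomposition of the query $Q[Y^{mcip}]$ into singleton queries fails.

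The paper's proof supplies exactly this missing content by a short case analysis on a pair $v_1,v_2\in Y^{mcip}$: if both lie in $Y$, then any bidirected edge $\{y_i,y_j\}$ of $\G$ is \emph{replaced} in step (e) of $\mathcal{T}_2$ by the length-two path through the new vertex $y_{ij}^b\notin Y^{mcip}$, and no other step introduces a direct bidirected edge between two vertices of $Y$; if at least one of them is an auxiliary vertex $y_j^{ij}\in V^{\mathcal{H}}_{bot}$ added in step (f), then its only bidirected neighbours are vertices of the form $y_{kj}^{ij}$, none of which belong to $Y^{mcip}$. To repair your proposal you would need to carry out this inspection of steps (a)--(f) of the construction rather than appeal to the convention that a one-vertex graph is connected.
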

\begin{proof}
    It suffices to show that for every pair of $v_1 , v_2 \in Y^{mcip}$ there is no bidirected edge between them in $\mathcal{H}$.
    Suppose first that $v_1,v_2\in Y$.
    Any bidirected edge between $v_1$ and $v_2$ in $\G$ (if it exists) is removed in step (e) of the transformation, and none of the steps (a) through (f) add a bidirected edge between them.
    Otherwise, at least one of $v_1, v_2$, w.l.o.g. $v_1$, is in $Y^{mcip}\setminus Y$.
    Suppose w.l.o.g. that $v_1=y_j^{ij}$.
    From step (f) of the transformation $\mathcal{T}$, we know that $v_1$ has bidirected edges only to vertices $y_{kj}^{ij}$, where none of them is a member of $Y^{mcip}$.
\end{proof}

\begin{lemma}\label{lem:exclude1}
    Suppose $\G$ is an ADMG, $Y$ is a set of its vertices, and $(\mathcal{H}, Y^{mcip})=\mathcal{T}_2(\G, Y)$.
    Suppose there is a hedge formed for $Q[y]$ in $\mathcal{H}$, where $y\in Y$.
    Let $H$ denote the set of vertices of this hedge.
    $H$ does not include any of the vertices added in the step (f) of the transformation.
    That is, $H\cap V^\mathcal{H}_{bot}=\emptyset$.
\end{lemma}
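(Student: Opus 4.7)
The plan is to exclude the vertices added in step (f) in an order that mirrors the directed-edge structure of that step, and to handle the auxiliary vertices $y_{kl}^{ij}$ last by means of the district (bidirected-connectivity) condition. Fix a hedge $H$ for $Q[\{y\}]$ in $\mathcal{H}$ with $y\in Y$; by Def.~\ref{def:hedge}, $H$ is a district in $\mathcal{H}[H]$ and $\Anc{\{y\}}{\mathcal{H}[H]}=H$. Since $y\in Y\subseteq V^\mathcal{H}_{top}$, no bottom vertex coincides with $y$, so any bottom vertex placed in $H$ must admit a directed path to $y$ within $\mathcal{H}[H]$ and must lie in the same bidirected-connected component as $y$ in $\mathcal{H}[H]$.

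The first ingredient is a direct inspection of steps (a)--(e) of the transformation $\mathcal{T}_2$, which shows that none of those steps introduces any edge incident to a bottom vertex; every edge touching $V^\mathcal{H}_{bot}$ comes from step (f). With that in hand, I would proceed by backward induction along the directed-edge structure of step (f). The vertex $y_j^{ij}$ has no outgoing directed edge at all in $\mathcal{H}$ (its only incident directed edges are the incoming $(y_j,y_j^{ij})$ and $(y_i^{ij},y_j^{ij})$), so it has no directed path to $y$ and cannot belong to $H$. The vertex $y_i^{ij}$ has a single outgoing directed edge $(y_i^{ij},y_j^{ij})$; since $y_j^{ij}\notin H$, that edge is absent in $\mathcal{H}[H]$, so $y_i^{ij}$ cannot reach $y$ and is excluded. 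For each $i<k<j$, the unique outgoing directed edge of $y_k^{ij}$ is $(y_k^{ij},y_i^{ij})$, which is likewise unavailable in $\mathcal{H}[H]$; hence $y_k^{ij}\notin H$.

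It remains to rule out $y_{kl}^{ij}$. Although $y_{kl}^{ij}$ has an outgoing directed edge to $y_{ij}^{b}\in V^\mathcal{H}_{top}$ and could in principle satisfy the ancestor condition, its only bidirected neighbors in $\mathcal{H}$ are $y_k^{ij}$ and $y_l^{ij}$, both of which were just excluded from $H$. Consequently $y_{kl}^{ij}$ has no bidirected neighbor in $\mathcal{H}[H]$ and would form its own singleton bidirected component, which cannot coincide with the bidirected component of $y\neq y_{kl}^{ij}$; this contradicts the district requirement on $H$. Combining the four exclusions yields $H\cap V^\mathcal{H}_{bot}=\emptyset$.

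The main obstacle I anticipate is edge-accounting: one must verify exhaustively that steps (a)--(e) never introduce any edge incident to a bottom vertex, and that the listings of outgoing directed edges and of bidirected neighbors for each type of bottom vertex produced by step (f) are complete. Once that bookkeeping is settled, the remainder is the short backward induction sketched above.
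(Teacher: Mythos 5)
Your proposal is correct and follows essentially the same route as the paper's proof: the paper first excludes the single-subscript vertices $y_k^{ij}$ by noting their outgoing directed edges never leave $V^{\mathcal{H}}_{bot}$ (so they cannot be ancestors of $y\in V^{\mathcal{H}}_{top}$), and then excludes the $y_{kl}^{ij}$ vertices because their only bidirected neighbors are the already-excluded $y_k^{ij}, y_l^{ij}$, violating the district condition. Your backward induction is just a more granular phrasing of the first step, and the edge-accounting you flag as the remaining bookkeeping is indeed immediate from the construction of $\mathcal{T}_2$.
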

\begin{proof}
    Define $V_1=\{y_{kl}^{ij}\in V^\mathcal{H}_{bot}, \forall i,j,k,l\}$, and $V_2=V^\mathcal{H}_{bot}\setminus V_1$.
    By construction of $\mathcal{H}$, the vertices of $V_2$ have directed edges only to vertices in $V_2$.
    Therefore, for each vertex $v \in V_2$, we have $v \notin \Anc{y}{\mathcal{H}[H]}$.
    As a result, $V_2 \cap H = \emptyset$, since by definition of hedge, any vertex of $H$ is an ancestor of $y$ in $\mathcal{H}[H]$.
    Now, consider an arbitrary vertex $v \in V_1$. 
    By construction of $\mathcal{H}$, if there exists a bidirected edge $\{v, v'\} \in E_b^\mathcal{H}$, we must have that $v' \in V_2$. 
    Therefore, if $v \in H$, there must be at least one vertex $v' \in V_2\cap H$.
    Since we proved $V_2 \cap H = \emptyset$, $v$ cannot be in $H$.
    Consequently, $V_1 \cap H = \emptyset$.
    
\end{proof}
\begin{lemma}\label{lem:exclude2}
    Suppose $\G$ is an ADMG, $Y$ is a set of its vertices, and $(\mathcal{H}, Y^{mcip})=\mathcal{T}(\G, Y)$.
    Suppose there is a hedge formed for $Q[y_j^{ij}]$ in $\mathcal{H}$, where $y_i, y_j\in Y$ and $y_j^{ij}$ is the vertex corresponding to the pair $(y_i,y_j)$ added in step (f) of the transform $\mathcal{T}$.
    Let $H$ denote the set of vertices of this hedge.
    If $v\in H\cap V^\mathcal{H}_{bot}$, then $v$ has the superscript $ij$, that is, v is either one of the vertices $y_k^{ij}$, or one of the vertices $y_{kl}^{ij}$, where $i\leq k,l\leq j$.
    In the latter case, $y_{kl}^{b}\in H$.
\end{lemma}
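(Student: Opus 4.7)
The plan is to exploit the highly constrained local structure that step~(f) of the transformation introduces in $V^\mathcal{H}_{bot}$: each superscript group is almost self-contained, with only a few directed exits per vertex and a single bidirected edge to $V^\mathcal{H}_{top}$. Combined with the two defining properties of a hedge ($H$ must be a single district in $\mathcal{H}[H]$, and every vertex of $H$ must be an ancestor of $y_j^{ij}$ in $\mathcal{H}[H]$), this should force any $V^\mathcal{H}_{bot}$ vertex in $H$ to share the superscript $ij$.

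First I will handle the single-subscript case $v = y_k^{i'j'}$. Cataloging the directed edges in step~(f), the unique outgoing directed edge from $y_k^{i'j'}$ goes to $y_{i'}^{i'j'}$ (if $i' < k < j'$), to $y_{j'}^{i'j'}$ (if $k = i'$), or nowhere (if $k = j'$). Iterating, every directed descendant of $v$ in $\mathcal{H}$ carries the superscript $i'j'$; that is, $\De{\{v\}}{\mathcal{H}} \subseteq \{y_k^{i'j'}, y_{i'}^{i'j'}, y_{j'}^{i'j'}\}$. Since ancestry in $\mathcal{H}[H]$ implies ancestry in $\mathcal{H}$, the vertex $y_j^{ij}$ must coincide with one of these three; comparing subscripts and superscripts while using $i < j$ (from step~(f)'s topological ordering, so $i \neq j$) forces $(i',j') = (i,j)$.

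Next I will dispatch the double-subscript case $v = y_{kl}^{i'j'}$. Here I invoke the district property: since $|H| \geq 2$ and $H$ is a single district in $\mathcal{H}[H]$, $v$ must have at least one bidirected neighbor in $H$. By construction, the only bidirected neighbors of $v$ anywhere in $\mathcal{H}$ are $y_k^{i'j'}$ and $y_l^{i'j'}$, so at least one of them lies in $H$, and the previous paragraph immediately forces $(i',j') = (i,j)$. For the second conclusion, I observe that the unique directed child of $v = y_{kl}^{ij}$ in $\mathcal{H}$ is $y_{kl}^b$; therefore any directed path from $v$ to $y_j^{ij}$ inside $\mathcal{H}[H]$ must begin with the edge $v \to y_{kl}^b$, which places $y_{kl}^b \in H$.

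The main obstacle is purely bookkeeping: one must exhaustively verify, from the dense construction in step~(f) (and cross-check against steps~(a)--(e)), that no directed or bidirected edge connects a $V^\mathcal{H}_{bot}$ vertex in one superscript group to any vertex of a different superscript group, so that exits from group $i'j'$ truly funnel through $y_{i'}^{i'j'}\!-\!y_{j'}$ (bidirected) or $y_{kl}^{i'j'} \to y_{kl}^b$ (directed). Once this enumeration is confirmed, the three-element descendant set for $y_k^{i'j'}$ closes the single-subscript case, and the district-property reduction for $y_{kl}^{i'j'}$ piggybacks on exactly the same fact.
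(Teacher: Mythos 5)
Your proposal is correct and follows essentially the same route as the paper's proof: excluding wrongly-superscripted single-subscript vertices of $V^\mathcal{H}_{bot}$ via the ancestor condition (their directed descendants never leave their own superscript group), then handling double-subscript vertices via the district property (their only bidirected neighbors are single-subscript vertices of the same group), and finally obtaining $y_{kl}^{b}\in H$ from the fact that $y_{kl}^{b}$ is the unique child of $y_{kl}^{ij}$. The only difference is presentational: you phrase the first step as an explicit three-element descendant set rather than the paper's ``no child outside the group'' formulation, which is the same argument.
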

\begin{proof}
    Define $V_1=\{y_{kl}^{mn}\in V^\mathcal{H}_{bot}, \forall m,n,k,l\}$, and $V_2=V^\mathcal{H}_{bot}\setminus V_1$.
    Suppose $V_1^*=\{v_{kl}^{ij}\in V^\mathcal{H}_{bot}, \forall k,l\}$ and $V_2^*=\{v_k^{ij}\in V^\mathcal{H}_{bot}, \forall k\}$.
    Also define $V_{1}'=V_1\setminus V_1^*$, $V_{2}'=V_2\setminus V_2^*$.
    For the first part of the claim, it suffices to show that $V_1'\cap H=\emptyset, V_2'\cap H=\emptyset$.
    By construction of $\mathcal{H}$, the vertices of $V_{2}^{'}$ do not have any child out of $V_{2}^{'}$.
    Therefore, $V_2'\cap \Anc{y_{j}^{ij}}{\mathcal{H}[H]}=\emptyset$.
    This implies that $V_{2}^{'} \cap H = \emptyset$.
    Now let $v_{1}^{i^{'}j^{'}}$ be an arbitrary vertex in $V_{1}^{'}$.
    By construction of $\mathcal{H}$, $v_{1}^{i^{'}j^{'}}$ has bidirected edges only to vertices of $V_{2}'$.
    This implies that if $v_{1}^{i^{'}j^{'}} \in H$, there must be at least one vertex of  $V_{2}'$ in $H$ which is in contradiction with $V_{2}' \cap H = \emptyset$. 
    Therefore, $v_{1}^{i^{'}j^{'}} \notin H$.
    Since $v_{1}^{i^{'}j^{'}}$ is an arbitrary vertex in $V_1'$, we conclude $V_{1}' \cap H=\emptyset$.
    
    Now, we prove that if $v\in H$ is one of the vertices $y_{kl}^{ij}$, we have $y_{kl}^{b}\in H$. 
    Since $y_{kl}^{ij} \in H$, there exists a directed path from $y_{kl}^{ij}$ to $y_j^{ij}$ in $\mathcal{H}[H]$.
    Since $y_{kl}^{b}$ is the only child of $y_{kl}^{ij}$, the aforementioned path passes through $y_{kl}^{b}$.
    Therefore, $y_{kl}^{b}\in H$. 
    
\end{proof}
\begin{lemma}\label{lem:ystarisdistrict}
    Suppose $\G'=(V^{\G'},E_d^{\G'},E_b^{\G'})$ is an ADMG, $Y\subseteq V^{\G'}$ is a set of its vertices, and $(\mathcal{H}', Y^{mcip})=\mathcal{T}(\G', Y)$. 
    Let $E_d''\subseteq E_d^{\G'}$ and $E_b''\subseteq E_b^{\G'}$ be arbitrary edges of $\G$, and define $E_d^\G=E_d^{\G'}\setminus E_d''$, $E_b^\G =E_b^{\G'}\setminus E_b''$.
    Define $\G=(V^\G, E_d^\G, E_b^\G)$ and $\mathcal{H}=\mathcal{H}'[V^{\mathcal{H}'}\setminus V']$, where $V^\G= V^{\G'}$ and $V'=\{v\in V^{\mathcal{H}'}\vert \exists e\in E_b''\cup E_d'', v=\mathcal{T}_2(e)\}$. 
    Suppose there is a hedge formed for $Q[y_j^{ij}]$ in $\mathcal{H}$ for some $i,j$.
    Let $H$ denote the set of vertices of this hedge in $\mathcal{H}$.
    The set of vertices $Y^*=\{y_k\vert y_k^{ij}\in H\}$ is a district in $\G[Y]$.
    Moreover, $H_{top}= Anc_{\mathcal{H}[H_{top}]}(Y^*)$, where $H_{top}=H\cap V^{\mathcal{H}}_{top}$.
\end{lemma}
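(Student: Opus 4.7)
The plan is to establish both conclusions via a structural analysis of paths in $\mathcal{H}[H]$, leveraging two features of the transformation $\mathcal{T}_2$: (i) the only bidirected edge connecting any bot-world vertex with superscript $ij$ to a top-world vertex is the bridge $\{y_j, y_i^{ij}\}$ introduced in step (f); and (ii) the only directed edges from $V^{\mathcal{H}}_{top}$ into $V^{\mathcal{H}}_{bot}$ are $(y_k, y_k^{ij})$ for $i \leq k \leq j$, also introduced in step (f).

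For the first claim, I would take arbitrary $y_k, y_l \in Y^*$ and exhibit a bidirected path between them in $\G_\leftrightarrow[Y^*]$. Since $H$ is a district in $\mathcal{H}[H]$, there is a bidirected walk $y_k^{ij} = v_0, v_1, \ldots, v_n = y_l^{ij}$ in $\mathcal{H}[H]$. By observation (i), any excursion of this walk into top-world must both enter and leave $V^{\mathcal{H}}_{top}$ through the edge $\{y_j, y_i^{ij}\}$, and thus traces a closed loop at $y_i^{ij}$; each such excursion can be short-circuited, leaving a walk supported entirely on bot-world vertices with superscript $ij$, of the form $y_k^{ij}, y_{k p_1}^{ij}, y_{p_1}^{ij}, y_{p_1 p_2}^{ij}, \ldots, y_{p_t}^{ij} = y_l^{ij}$. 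By Lemma~\ref{lem:exclude2}, each $y_{p_s p_{s+1}}^{ij} \in H$ implies $y_{p_s p_{s+1}}^b \in H$; as $y_{p_s p_{s+1}}^b$ was not removed in passing from $\mathcal{H}'$ to $\mathcal{H}$, the bidirected edge $\{y_{p_s}, y_{p_{s+1}}\}$ belongs to $E_b^\G$. This induces a bidirected walk $y_k = y_{p_0}, y_{p_1}, \ldots, y_{p_t} = y_l$ in $\G[Y]$ whose intermediate vertices all lie in $Y^*$, establishing that $Y^*$ is a district in $\G[Y]$.

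For the second claim, the inclusion $\Anc{Y^*}{\mathcal{H}[H_{top}]} \subseteq H_{top}$ holds immediately, since ancestors computed inside a vertex-induced subgraph remain in that subgraph. For the reverse inclusion, fix $v \in H_{top}$; because $H$ is a hedge for $Q[y_j^{ij}]$, $v$ has a directed path to $y_j^{ij}$ in $\mathcal{H}[H]$. Using observation (ii), I would identify the first edge on this path that crosses from $V^{\mathcal{H}}_{top}$ into $V^{\mathcal{H}}_{bot}$; it must be of the form $(y_k, y_k^{ij})$ for some $i \leq k \leq j$. The prefix of the path from $v$ up to $y_k$ lies entirely in $V^{\mathcal{H}}_{top} \cap H = H_{top}$, and $y_k^{ij} \in H$ yields $y_k \in Y^* \cap H_{top}$. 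Hence $v$ is an ancestor of $y_k$ within $\mathcal{H}[H_{top}]$, completing the proof.

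The principal technical obstacle is justifying observations (i) and (ii) and formalizing the short-circuit procedure; both require an exhaustive but routine case analysis over the edges introduced in steps (a)--(f) of $\mathcal{T}_2$. Once these structural features are verified, both parts reduce to path tracing inside $\mathcal{H}[H]$.
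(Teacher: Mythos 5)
Your proof is correct and follows essentially the same route as the paper's: both parts rest on the same structural observations (the unique bidirected bridge $\{y_j, y_i^{ij}\}$ isolating the superscript-$ij$ bottom layer, and the fact that the only top-to-bottom directed edges are $(y_k, y_k^{ij})$), combined with Lemma~\ref{lem:exclude2} to restrict $H\cap V^{\mathcal{H}}_{bot}$ to superscript-$ij$ vertices and to pull each $y_{kl}^{ij}\in H$ back to a surviving vertex $y_{kl}^{b}$ and hence to an edge $\{y_k,y_l\}\in E_b^{\G}$. The only cosmetic difference is that you trace the directed path forward to its first top-to-bottom crossing while the paper walks backward from $y_j^{ij}$ through its parents.
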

\begin{proof}
    First we prove that $Y^*$ is a district in $\G[Y]$.
    Consider an arbitrary vertex $y_k^{ij}$ in $H$.
    By definition of hedge, there exists a bidirected path, $p_1$, between $y_k^{ij}$ and $y_j^{ij}$ in $\mathcal{H}[H]$.
    Let $Y^{ij}$ denotes the set of vertices in $ H$ such that their superscript is $ij$.
    Lemma \ref{lem:exclude2} implies that $H \subseteq V_{top}^{\mathcal{H}} \cup Y^{ij}$.
    Furthermore, by construction of $\mathcal{H}$, there is only one bidirected edge between $Y^{ij}$ and $H \setminus Y^{ij}$, which is $\{ y_{j}, y_{i}^{ij}\}$.
    Therefore, all of the vertices on the path $p_1$ are in $Y^{ij}$.
    Now, we define $Y_{1}^{'}=\{y_k \vert y_k^{ij} \in p_1\}$ and $Y_{2}'=\{y_{kl}^{b} \vert y_{kl}^{ij} \in p_1\}$, i.e., the $V^\mathcal{H}_{top}$ counterparts of the vertices in $p_1$.
    Since the vertices on $p_1$ are in $H$, $Y_{1}'\subseteq Y^*$.
    From Lemma \ref{lem:exclude2}, we know that if $y_{kl}^{ij} \in H$, then, $y_{kl}^{b} \in H$.
    It implies that $Y_{2}^{'} \subseteq H$.
    As a result, $Y_1'$ and $Y_2'$ are both vertices of $\mathcal{H}$.
    Now if we replace all the vertices in $p_1$ with their corresponding counterpart in $Y_1'\cup Y_2'$, we arrive at a bidirected path $p_2$ between $y_k$ and $y_j$ in $\mathcal{H}[Y_1'\cup Y_2']$ (as by construction the same edges exist in $V^\mathcal{H}_{top}$).
    By definition of $\G$ and $\mathcal{H}$, if a vertex $y_{kl}^{b}$ exists in $\mathcal{H}$, the corresponding edge $\{y_k,y_l\}$ exists in $\G$.
    As a result, a bidirected path between $y_k$ and $y_l$ exists in $\G[Y_1']$.
    Noting that $y_k$ is an arbitrary vertex in $Y^*$ and $Y_1'\subseteq Y^*$, this implies that all of the vertices of $Y^*$ are in the same district as $y_j$ in $\G[Y^*]$, which completes the proof.
    
    Next, we prove that $H_{top}= Anc_{\mathcal{H}[H_{top}]}(Y^*)$.
    To this end, it suffices to show that there is a directed path form an arbitrary vertex $v \in H_{top}$ to $Y^*$ in $\mathcal{H}[H_{top}]$.
    Since $H$ forms a hedge for $Q[y_j^{ij}]$ in $\mathcal{H}$, there exists a directed path from $v$ to $y_{j}^{ij}$ in $\mathcal{H}[H]$.
    This path must go through the only parent of $y_{j}^{ij}$, which is $y_{i}^{ij}$.
    Then, the last vertex on the path is one of the parents of $y_{i}^{ij}$.
    If this parent is $y_i$, we are done as we have a directed path from $v$ to $y_i$, where $y_i\in Y^*$ and it has no ancestors in $H\setminus H_{top}$.
    Otherwise, let this parent be $y_k^{ij}$ for some $i<k<j$.
    Now the last vertex on the path before $y_k^{ij}$ must be $y_k$, which is the only parent of $y_k^{ij}$.
    Note that by definition of $Y^*$, $y_k\in Y^*$.
    Therefore, $v$ has a directed path to $Y^*$ in $\mathcal{H}[H_{top}]$.
\end{proof}
\begin{lemma}\label{lem:intersectX}
    Suppose $\G=(V^\G, E_d^\G, E_b^\G)$ is an ADMG, $Y$ is a set of its vertices, and $(\mathcal{H}, Y^{mcip})=\mathcal{T}_2(\G, Y)$.
    Suppose there is a hedge formed for $Q[y]$ in $\mathcal{H}$ for some $y\in Y^{mcip}$.
    Let $H$ denote the set of vertices of this hedge. 
    Then $H\cap X\neq\emptyset$, where $X=V^\G\setminus Y$.
\end{lemma}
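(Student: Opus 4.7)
The plan is a proof by contradiction: assume $H \cap X = \emptyset$ and derive an absurdity by tracing through the bidirected structure at $y$ in $\mathcal{H}$. By Lemma~\ref{lem:vertexdistrict}, $\{y\}$ is already a single-vertex district, so the hedge condition $\{y\} \subsetneq H$ forces $|H| \geq 2$. Since $H$ is a district in $\mathcal{H}[H]$, the vertex $y$ must have at least one direct bidirected neighbor $v$ in $\mathcal{H}[H]$, and the task reduces to ruling out every candidate for $v$.

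The natural split is on whether $y \in Y$ (Case 1) or $y \in Y^{mcip}\setminus Y$, i.e., $y = y_j^{ij}$ for some $i<j$ (Case 2). In Case 1, Lemma~\ref{lem:exclude1} restricts $H \subseteq V^\mathcal{H}_{top}$, so $v$ lies in $V^\mathcal{H}_{top}$. Inspecting steps (b), (d), and (e) of the transformation $\mathcal{T}_2$ shows that every bidirected neighbor of $y \in Y$ in $V^\mathcal{H}_{top}$ falls into one of three families: auxiliary vertices $v_{ij}^d$ corresponding to outgoing directed edges $(y, v_j) \in E_d^\G$, vertices $z_{ij}^b$ corresponding to bidirected edges $\{x_i, y\} \in E_b^\G$ with $x_i \in X$, and vertices $y_{ij}^b$ corresponding to bidirected edges $\{y, y'\}\in E_b^\G$ with $y' \in Y$. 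For $z_{ij}^b$ and $y_{ij}^b$, as well as for $v_{ij}^d$ with $v_j \in X$, the only outgoing directed edges of $v$ land in $X$, so the requirement $v \in \Anc{y}{\mathcal{H}[H]}$ forces some $x \in X$ into $H$, contradicting $H \cap X = \emptyset$. The remaining subcase is $v = y_{ij}^d$ with $v_j \in Y$; here $y \to y_{ij}^d \to v_j$ together with $v \in \Anc{y}{\mathcal{H}[H]}$ would create a directed cycle in $\mathcal{H}$, which is impossible.

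In Case 2, the bidirected neighbors of $y_j^{ij}$ in $\mathcal{H}$ lie entirely in $V^\mathcal{H}_{bot}$: by step (f) they are precisely the vertices of the form $y_{kj}^{ij}$ or $y_{jl}^{ij}$ introduced for bidirected edges $\{y_k, y_j\}, \{y_j, y_l\} \in E_b^\G$ with indices in $[i,j]$. Lemma~\ref{lem:exclude2} permits such a neighbor $v = y_{kl}^{ij}$ in $H$ and moreover forces the corresponding top-copy $y_{kl}^b \in V^\mathcal{H}_{top}$ into $H$. By step (e), $y_{kl}^b$ has all of its outgoing directed edges landing in $X$, so the directed path from $y_{kl}^b$ to $y_j^{ij}$ guaranteed by $y_{kl}^b \in \Anc{y_j^{ij}}{\mathcal{H}[H]}$ must begin with an edge into some $x \in X$ with $x \in H$, yielding the desired contradiction.

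The main obstacle is the case analysis in Case 1: one must accurately enumerate the bidirected neighbors of $y$ and notice the single subcase $v = y_{ij}^d$ with $v_j \in Y$ in which the ``every child lies in $X$'' argument fails. That subcase must be discharged via acyclicity of $\mathcal{H}$ rather than via the edge-to-$X$ argument that handles every other configuration, and omitting it would leave a gap in the proof.
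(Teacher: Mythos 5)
Your proof is correct and follows essentially the same route as the paper's: locate a bidirected neighbor of $y$ inside $H$, reduce (via Lemmas~\ref{lem:exclude1} and~\ref{lem:exclude2}) to a vertex all of whose children lie in $X$, and invoke the ancestrality condition of the hedge to force a vertex of $X$ into $H$. If anything, your Case~1 is more thorough than the paper's, which asserts that the bidirected neighbor must be of the form $y_{ij}^b$ and thereby skips the neighbors $z_{ij}^b$ and $v_{ij}^d$ created in steps (b) and (d) of $\mathcal{T}_2$; your handling of these (including the acyclicity argument for $v_{ij}^d$ with $v_j\in Y$) closes that small gap.
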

\begin{proof}
    Since $H$ forms a hedge for $Q[y]$ in $\mathcal{H}$, there exists a vertex $h \in H$ such that $\{y, h\} \in  E_b^{\mathcal{H}}$.
    There are two possibilities for $y\in Y^{mcip}$:
    \begin{itemize}
        \item $y=y_i\in Y$. 
        From Lemma \ref{lem:exclude2} we have $h \notin V_{bot}^{\mathcal{H}}$.
        Therefore, by construction of $\mathcal{H}$, $h=y_{ij}^b$ for some $j$.
        \item $y=y_j^{ij}\in V^\mathcal{H}_{bot}$.
        By construction of $\mathcal{H}$, $h=y_{kj}^{ij}$ for some $k$.
        Vertex $h$ must have a directed path to $y$ in $H$ by definition of hedge, which must go through the only child of $h$, i.e., $y_{kl}^b$.
    \end{itemize}
    In both cases, we showed that there exists a vertex $v=y_{ij}^b\in H$ for some $i,j$.
    By definition of hedge, there is a bidirected path, $p$, from $v$ to $y$ in $\mathcal{H}$ because $v \in \Anc{y}{\mathcal{H}}$.
    Since all of the children of $v$ are in $X$, there is at least one vertex in $X$ on path $p$.
    Therefore, $H$ includes at least one vertex of $X$.
    
\end{proof}

\begin{lemma}\label{lem:preserve}[Inverse transform preserves hedges.]
    Suppose $\G'=(V^{\G'},E_d^{\G'},E_b^{\G'})$ is an ADMG, $Y\subseteq V^{\G'}$ is a set of its vertices, and $(\mathcal{H}', Y^{mcip})=\mathcal{T}_2(\G', Y)$. 
    Let $E_d''\subseteq E_d^{\G'}$ and $E_b''\subseteq E_b^{\G'}$ be arbitrary edges of $\G$, and define $E_d^\G=E_d^{\G'} \setminus E_d''$, $E_b^{\G}=E_b^{\G'}\setminus E_b''$.
    Define $\G=(V^\G, E_d^\G, E_b^\G)$ and $\mathcal{H}=\mathcal{H}'[V^{\mathcal{H}'}\setminus V']$, where $V^\G = V^{\G'}$ and $V'=\{v\in V^{\mathcal{H}'}\vert \exists e\in E_b''\cup E_d'', v=\mathcal{T}_2(e)\}$. 
    Let $W\subseteq V^\mathcal{H}_{top}$ be a set of vertices of $\mathcal{H}$.
    Let $W_s\subseteq W\cap V^\G$ be a subset of $W$ such that $W_s$ are vertices of $\G$ as well.
    Consider the inverse transform of $\mathcal{H}[W]$ in the ADMG $\G$, i.e., for any $v=v_{ij}^b\in W$, delete $v$ and all edges incident to it and draw a bidirected edge between $v_i$ and $v_j$, and for any $v=v_{ij}^d$, delete $v$ and all edges incident to it and draw a directed edge from $v_i$ to $v_j$.
    Let the resulting subgraph (which is a subgraph of $\G$) be denoted by $\G[W^{-1}]$ with the set of vertices $W^{-1}\subseteq V^\G$.
    If $Anc_{\mathcal{H}[W]}(W_s)=W$, then $Anc_{\G[W^{-1}]}(W_s)=W^{-1}$. 
    Moreover, if $W$ is a district in $\mathcal{H}[W]$, then $W^{-1}$ is a district in $\G[W^{-1}]$.
\end{lemma}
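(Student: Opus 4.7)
The plan is to interpret $\mathcal{T}_2$ as a subdivision of each edge of $\G'$ through a dedicated edge-vertex in $\mathcal{H}'$, so that the inverse transform described in the lemma is precisely a contraction of those gadgets back into edges. Under this lens, both claims become statements that directed paths (for the ancestors claim) and bidirected connectivity (for the district claim) are preserved under such contractions. A key structural observation is that in $\mathcal{H}$ no two original vertices of $V^\G$ are directly adjacent by either a directed or a bidirected edge; every edge of $\mathcal{H}$ is incident to at least one edge-vertex, so any path in $\mathcal{H}[W]$ between originals visits an edge-vertex at each transition.

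First I would make $W^{-1}$ precise: it equals $(W\cap V^\G)$ together with the endpoints of every edge produced by contracting an edge-vertex of $W$. Since $\mathcal{H}$ is obtained from $\mathcal{H}'$ by removing exactly those edge-vertices encoding edges of $E_d''\cup E_b''$, every surviving edge-vertex in $W$ encodes an edge still present in $\G$, so each contraction produces a genuine edge of $\G$ and $\G[W^{-1}]$ is indeed a subgraph of $\G$.

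For the ancestors claim, I take any $v\in W^{-1}$ and trace it back to some $\tilde v\in W$ (either $v$ itself, or an edge-vertex with $v$ as endpoint). By hypothesis there is a directed path $\pi$ from $\tilde v$ to some $w\in W_s$ in $\mathcal{H}[W]$. Walking $\pi$ and replacing each subpath $v_i\to v_{ij}^d\to v_j$ by the contracted edge $v_i\to v_j$, and absorbing any bidirected-edge-vertex prefix ($x_{ij}^b$, $z_{ij}^b$, or $y_{ij}^b$) into the starting point of the next directed subpath, yields a directed walk from $v$ to $w$ in $\G[W^{-1}]$, so $v\in Anc_{\G[W^{-1}]}(W_s)$. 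The reverse inclusion is immediate from the definition of $W^{-1}$. For the district claim, the same alternation argument applies to bidirected paths: each of $x_{ij}^b$, $z_{ij}^b$, $y_{ij}^b$ carries bidirected edges to both endpoints of the edge it encodes, so contracting every such edge-vertex along a bidirected path of $\mathcal{H}[W]$ inserts the corresponding bidirected edge of $\G$ into $\G[W^{-1}]$, producing a bidirected path between the associated originals; piecing these together gives bidirected connectivity of $W^{-1}$.

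The main obstacle is the case analysis across the four flavors of edge-vertex $v_{ij}^d,\,x_{ij}^b,\,z_{ij}^b,\,y_{ij}^b$, each with a slightly different incident-edge pattern. In particular, $y_{ij}^b$ from step (e) emits directed edges to every $x\in X$, creating directed connectivity in $\mathcal{H}$ with no counterpart in $\G$, and these extraneous edges must be shown not to contribute a directed path in $\G[W^{-1}]$ that cannot be realized by a concatenation of contracted gadgets (they can always be rerouted through the bidirected gadget since $W_s\subseteq V^\G$). The hypothesis $W\subseteq V^\mathcal{H}_{top}$ is essential throughout, since it rules out step-(f) vertices whose more intricate connectivity pattern would otherwise obscure the clean subdivision picture on which the entire translation rests.
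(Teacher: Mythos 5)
Your proposal follows the same route as the paper's proof: read $\mathcal{T}_2$ as an edge subdivision, the inverse transform as a contraction of the edge-gadgets, and translate directed and bidirected paths gadget by gadget; the core of the argument is sound. Two points need tightening. First, your resolution of the $y_{ij}^{b}$ complication is off: the reason the extraneous directed edges from $y_{ij}^{b}$ to $X$ are harmless is not that they ``can be rerouted through the bidirected gadget,'' but that $y_{ij}^{b}$ (like $x_{ij}^{b}$ and $z_{ij}^{b}$) has no incoming directed edges inside $V^{\mathcal{H}}_{top}$; hence no directed path that starts at an original vertex of $W$ can ever pass through such a vertex, and every such path necessarily alternates between original vertices and $v_{ij}^{d}$-vertices, which is exactly the pattern that contracts cleanly. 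Second, your definition of $W^{-1}$ --- $(W\cap V^{\G})$ together with all endpoints of contracted edge-vertices --- is broader than the one the paper's proof uses (it asserts $W^{-1}\subseteq W$), and under your reading the ancestor claim can fail: if $x_{ij}^{b}\in W$ but $x_i\notin W$, and the directed path out of $x_{ij}^{b}$ in $\mathcal{H}[W]$ exits via $x_j$, then your ``absorb the prefix'' step yields a directed path from $x_j$ but none from $x_i$, even though $x_i$ would belong to your $W^{-1}$. Restricting $W^{-1}$ to $W\cap V^{\G}$ --- which suffices for the application in Proposition \ref{prp:edgeidmcip}, since a hedge in $\mathcal{H}$ contains the endpoints its edge-vertices rely on --- removes this gap and recovers the paper's argument.
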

\begin{proof}
    First, we show that if $Anc_{\mathcal{H}[W]}(W_s)=W$, then $Anc_{\G[W^{-1}]}(W_s)=W^{-1}$.
    Let $v$ be an arbitrary vertex in $ W^{-1}$. 
    Vertex $v$ is in $W$ because $W^{-1} \subseteq W$.
    Since $v \in W$ and $v \in \Anc{W_s}{\mathcal{H}[W]}$, $v$ has a directed path $v \to \dots v_i \to v_{ij}^{d} \to v_j \dots \to w$, denoted by $l$, to a vertex $w \in W_s$ in $\mathcal{H}[W]$.
    For each vertex $v_{ij}^d$ on path $l$, we have $v_i, v_j \in \G[W^{-1}]$ and since $v_{ij}^d\in V^\mathcal{H}$,
    by definition of $\G$ and $\mathcal{H}$, there exists $(v_i, v_j) \in E_d^{\G}$ s.t. $i \prec j $, and consequently, $(v_i, v_j) \in E_d^{\G[W^{-1}]}$.
    Therefore, there exists a directed path from $v$ to $w$ in $\G[W^{-1}]$.
    Noting that $v$ is an arbitrary vertex in $W^{-1}$, we conclude that $Anc_{\G[W^{-1}]}(W_s)=W^{-1}$.

    Now, we prove that if $W$ is a district in $\mathcal{H}[W]$, then $W^{-1}$ is a district in $\G[W^{-1}]$.
    Consider two vertices $v_1 , v_2 \in W^{-1}$.
    Since $v_1 , v_2 \in W$ and $W$ is a district, there exists a bidirected path $v_1 \leftrightarrow \dots v_i \leftrightarrow v_{ij}^{b} \leftrightarrow v_j \dots \leftrightarrow v_2 $, denoted by $p$, between $v_1$ and $v_2$ in $\mathcal{H}[W]$. 
    Each vertex $v_{ij}^b$ on path $p$ is in $\mathcal{H}$ and $v_i , v_j \in \G[W^{-1}]$.
    By definition of $\G$ and $\mathcal{H}$, we have $\{v_i , v_j \} \in E_b^{\G}$.
    Therefore,  $\{v_i , v_j \} \in E_b^{\G[W^{-1}]}$.
    Then, there is a bidirected path between $v_1$ and $v_2$ in $\G[W^{-1}]$.
    Since $v_1$ and $v_2$ are two arbitrary vertices in $W^{-1}$, it implies that $W^{-1}$ is a district in $\G[W^{-1}]$.
\end{proof}

\begin{lemma}\label{lem:preserveh}[Transform preserves hedges.]
    Suppose $\G'=(V^{\G'},E_d^{\G'},E_b^{\G'})$ is an ADMG, $Y\subseteq V^{\G'}$ is a set of its vertices, and $(\mathcal{H}', Y^{mcip})=\mathcal{T}_2(\G', Y)$. 
    Let $E_d''\subseteq E_d^{\G'}$ and $E_b''\subseteq E_b^{\G'}$ be arbitrary edges of $\G$, and define $E_d^{\G}=E_d^{\G'}\setminus E_d''$, $E_b^{\G}=E_b^{\G'}\setminus E_b''$.
    Define $\G=(V^{\G}, E_d^{\G}, E_b^{\G})$ and $\mathcal{H}=\mathcal{H}'[V^{\mathcal{H}'}\setminus V']$, where $V^{\G}=V^{\G'}$ and $V'=\{v\in V^{\mathcal{H}'}\vert \exists e\in E_b''\cup E_d'', v=\mathcal{T}_2(e)\}$. 
    Let $W\subseteq V^\G$ be a set of vertices of $\G$ such that $W\setminus Y\neq\emptyset$.
    Let $W_s\subseteq W$ be a subset of $W$.
    Let the transformed graph of $\G[W]$ under $\mathcal{T}_2$ be denoted by $\mathcal{H}''$, where $\mathcal{H}''\subseteq\mathcal{H}$.
    Define $W^* = V^{\mathcal{H}''}_{top}$.
    If $Anc_{\G[W]}(W_s)=W$, then $Anc_{\mathcal{H}[W^*]}(W_s)=W^*$. 
    Moreover, if $W$ is a district in $\G[W]$, then $W^*$ is a district in $\mathcal{H}[W^*]$.
\end{lemma}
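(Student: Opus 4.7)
\textbf{Proof plan for Lemma \ref{lem:preserveh}.} This lemma is the ``forward'' companion to Lemma~\ref{lem:preserve}: the previous result pulled hedge structure down from $\mathcal{H}$ to $\G$ via the inverse transform, whereas here I must push it up from $\G$ to $\mathcal{H}$ through $\mathcal{T}_2$. The central observation is that $\mathcal{T}_2$ behaves like an edge subdivision: every edge $e$ of $\G[W]$ is replaced in $\mathcal{H}''$ by a single vertex $\mathcal{T}_2(e)$ together with the small gadget prescribed by steps (a)--(e), so $W^* = W \cup \{\mathcal{T}_2(e) : e \text{ an edge of } \G[W]\}$. Because $\G[W] \subseteq \G$, none of the corresponding edge-vertices are deleted in passing from $\mathcal{H}'$ to $\mathcal{H}$, hence $\mathcal{H}'' \subseteq \mathcal{H}$ and every path I exhibit will live inside $\mathcal{H}[W^*]$.

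For the ancestor claim I will show, by case analysis on the type of $v$, that every $v \in W^*$ has a directed path to $W_s$ in $\mathcal{H}[W^*]$. If $v \in W$, the hypothesis yields a directed path $v = u_0 \to u_1 \to \cdots \to u_k \in W_s$ in $\G[W]$, which I lift via step (b) to
\[ v = u_0 \to \mathcal{T}_2\big((u_0,u_1)\big) \to u_1 \to \cdots \to u_k \]
in $\mathcal{H}[W^*]$. If $v = v_{ij}^d$, step (b) gives $v \to v_j \in W$ and I reduce to the previous case; the same reduction works for $x_{ij}^b$ and $z_{ij}^b$ via the directed edges supplied by steps (c) and (d). The delicate case is $v = y_{ij}^b$: by step (e), all its children in $\mathcal{H}$ lie in $X = V^\G \setminus Y$, so its children inside $W^*$ are precisely $W \cap X = W \setminus Y$. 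The hypothesis $W \setminus Y \neq \emptyset$ supplies such an $x$, and once again I reduce to the first case.

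For the district claim, given any two vertices of $W^*$ I exhibit a bidirected path between them in $\mathcal{H}[W^*]$. For any $w, w' \in W$, the bidirected path $w = w_0 \leftrightarrow w_1 \leftrightarrow \cdots \leftrightarrow w_n = w'$ in $\G[W]$ (which exists since $W$ is a district) lifts edge-by-edge to
\[ w_0 \leftrightarrow \mathcal{T}_2\big(\{w_0,w_1\}\big) \leftrightarrow w_1 \leftrightarrow \cdots \leftrightarrow w_n \]
in $\mathcal{H}[W^*]$ by steps (c)--(e). Any edge-vertex $\mathcal{T}_2(e) \in W^* \setminus W$ is in turn bidirectionally adjacent to at least one endpoint of $e$: step (b) draws $\{v_i, v_{ij}^d\}$ for directed $e$, while steps (c)--(e) draw bidirected edges from each bidirected edge-vertex to both endpoints it represents. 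Composing these links shows any pair of vertices in $W^*$ is connected by a bidirected path, so $W^*$ is a district in $\mathcal{H}[W^*]$.

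The main obstacle is the $v = y_{ij}^b$ case in the ancestor argument, and it is precisely what forces the hypothesis $W \setminus Y \neq \emptyset$: without at least one $x \in W \cap X$ to serve as a bridge, a $y_{ij}^b$ vertex would be stranded in $W^*$ with no descendants in $W_s$ and the claim would fail. The remainder of the argument reduces to matching each of the six construction steps of $\mathcal{T}_2$ with the appropriate directed or bidirected link needed to translate paths from $\G[W]$ into $\mathcal{H}[W^*]$.
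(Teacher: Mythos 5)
Your proposal is correct and follows essentially the same route as the paper's proof: you reduce every edge-vertex of $W^*\setminus W$ to a vertex of $W$ (as a parent for the ancestor claim, via a bidirected edge for the district claim), invoke $W\setminus Y\neq\emptyset$ exactly where the paper does (the $y_{ij}^b$ case), and then lift paths from $\G[W]$ to $\mathcal{H}[W^*]$ by routing each edge through its corresponding edge-vertex. No gaps.
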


\begin{proof}
    First, we prove that if $Anc_{\G[W]}(W_s)=W$, then $Anc_{\mathcal{H}[W^*]}(W_s)=W^*$. 
    Take an arbitrary vertex $v \in W^*$.
    There are two possibilities for $v$:
    \begin{itemize}
        \item $v \in W$. 
        That is, vertex $v$ is in $\G[W]$.
        \item $v \notin W$. 
        This implies that $v$ represents an edge $e$ between two vertices $v_i$ and $v_j$ in $\G[W]$.
        There are three possibilities for $e$:
        \begin{itemize}
            \item $e=(v_i, v_j)$. 
            By construction of $\mathcal{H}$, $v$ is parent of $v_j$ in $\mathcal{H}[W^*]$, where $v_j$ is a vertex of $\G[W]$.
            \item $e=\{v_i, v_j\}$ and $v_i \in X$ or $v_j \in X$.
            In this case, $v$ is parent of at least one of $v_i$ and $v_j$ in $\mathcal{H}[W^*]$, w.l.o.g. $v_i$, where $v_i$ is a vertex of $\G[W]$.
            \item $e=\{v_i, v_j\}$ and $v_i, v_j \in Y$.
            By construction of $\mathcal{H}$, $v$ is parent of all vertices in $V^\G\setminus Y$.
            Since $W\setminus Y\neq\emptyset$, there exists a vertex $x$ in $\G[W]$ such that $v$ is a parent of $x$.
        \end{itemize}
        In all three cases above, we proved that there exists a vertex $x\in W$ such that $v$ is a parent of $x$.
    \end{itemize}
    Therefore, we showed that any vertex $v\in W^*$ either is itself a vertex in $W$ or is a parent of a vertex in $W$.
    As a result, it suffices to show that every $w\in W$ has a directed path to $W_s$ in $\mathcal{H}[W^*]$.
    We know that $w$ has a directed path to $W_s$ in $\G[W]$ such as $p$.
    Take an arbitrary pair of consecutive vertices on this path, such as $v_1$ and $v_2$.
    The directed edge $(v_1,v_2)$ exists in $\G[W]$.
    As a result, the directed path $v_1\to v_{12}^d\to v_2$ exists in $\mathcal{H}[W^*]$.
    Starting at $w$ and repeating this argument for every pair of consecutive vertices on $p$, we conclude that there exists a directed path from $w$ to $W_s$, which completes the proof.
    
    Now, we show that if $W$ is a district in $\G[W]$, then $W^*$ is a district in $\mathcal{H}[W^*]$.
    Take an arbitrary vertex $v \in W^*$.
    There are two possibilities for $v$:
    \begin{itemize}
        \item $v \in W$. 
        That is, $v$ is a vertex in $\G[W]$.
        \item $v \notin W$.
        In this case, at least one of the vertices $v$ represents an edge $e$ between two vertices $v_i$ and $v_j$ in $\G[W]$.
        By construction of $\mathcal{H}$, $v$ is connected to at least one of $v_i$ or $v_j$, w.l.o.g. $v_i$, by a bidirected edge, where $v_i\in W$.
    \end{itemize}
    We showed that any vertex $v\in W^*$ either is in $W$, or is connected to a vertex in $W$ through a bidirected edge.
    Therefore, it suffices to show that for any two vertices $w_1,w_2\in W$ there exists a bidirected path between $w_1$ and $w_2$ in $\mathcal{H}[W^*]$.
    Since $w_1 , w_2 \in W$, there is a bidirected path, $p$, between $w_1$ and $w_2$ in $\G[W]$.
    Take an arbitrary pair of consecutive vertices on this path, such as $v_1$ and $v_2$.
    The bidirected edge $\{v_1,v_2\}$ exists in $\G[W]$.
    As a result, the bidirected path $v_1\leftrightarrow v_{12}^b\leftrightarrow v_2$ exists in $\mathcal{H}[W^*]$.
    Starting at $w$ and repeating this argument for every pair of consecutive vertices on $p$, we conclude that there exists a bidirected path from $w_1$ to $w_2$, which completes the proof.
\end{proof}

\begin{lemma}\label{lem:hcapx}
    Suppose $\G$ is an ADMG, and $Y$ is a subset of its vertices.
    Also let $Y^*$ be a district in $\G[Y]$.
    If the set of vertices $H$ form a hedge for $Q[Y^*]$, then $H\setminus Y\neq\emptyset$. 
\end{lemma}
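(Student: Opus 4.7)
The plan is to argue by contradiction. Suppose $H\subseteq Y$. By the hedge definition (Def.~\ref{def:hedge}), $Y^{*}\subsetneq H$, so $H\setminus Y^{*}\neq\emptyset$, and $H$ is a district in $\G[H]$, i.e., $\G_{\leftrightarrow}[H]$ is connected using only bidirected edges among vertices of $H$. I want to push these two facts into $\G[Y]$ and derive a violation of $Y^{*}$ being a district there.

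First I would observe that since $H\subseteq Y$, every bidirected edge of $\G[H]$ is also a bidirected edge of $\G[Y]$, so the bidirected-connectedness of $H$ survives verbatim when we view $H$ as a subset of $\G[Y]$. Next, I would walk along a bidirected path in $\G[H]$ from an arbitrary $u\in H\setminus Y^{*}$ to some vertex of $Y^{*}$, and pick the first edge on this path that crosses into $Y^{*}$: this yields an edge $\{u',v\}\in E_b^{\G}$ with $u'\in H\setminus Y^{*}\subseteq Y$ and $v\in Y^{*}$. Finally, I would conclude that $Y^{*}\cup\{u'\}$ is bidirectedly connected inside $\G[Y]$ and strictly larger than $Y^{*}$, so $Y^{*}$ cannot be a c-component of $\G[Y]$, producing the desired contradiction.

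The main obstacle I anticipate is not the combinatorics but pinning down the precise reading of ``district in $\G[Y]$''. Under a purely literal reading of the paper's definition (any bidirectedly-connected subset, with no maximality clause), the lemma would fail: one can take $Y^{*}=\{c\}$ sitting inside a larger bidirected component $\{a,b,c\}$ of $\G[Y]$ with directed edges $a\to c$ and $b\to c$, so that $\{a,b,c\}$ is a hedge for $Q[\{c\}]$ contained entirely in $Y$. The argument above therefore requires interpreting ``district'' in the standard Shpitser--Pearl c-component sense of a \emph{maximal} bidirectedly-connected subset, which is the reading implicitly used elsewhere in the paper (for instance in how $Y^{*}$ is constructed in Lemma~\ref{lem:ystarisdistrict}); once this is made explicit, the three-step argument above closes the proof.
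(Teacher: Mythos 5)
Your proof is correct and follows essentially the same route as the paper's: assume $H\subseteq Y$, use the hedge definition to get $H\setminus Y^*\neq\emptyset$ and the bidirected connectedness of $H$, and extract a bidirected path from a vertex of $H\setminus Y^*$ to $Y^*$ lying entirely inside $\G[Y]$, contradicting $Y^*$ being a district of $\G[Y]$. Your side remark is also well taken: the paper's definition of district carries no maximality clause, its own proof of this lemma silently invokes maximality at exactly the point you identify, and your $\{a,b,c\}$ counterexample shows the statement genuinely requires the c-component (maximal bidirected-connected) reading --- which is indeed how the lemma is applied in the proof of Proposition~\ref{prp:edgeidmcip}, where $Y^*$ is a maximal district of $\G[Y]$.
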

\begin{proof}
    Assume by contradiction $H \setminus Y = \emptyset $, i.e., $H\subseteq Y$.
    By definition of hedge, we know $H \setminus Y^* \neq \emptyset$.
    Take an arbitrary vertex $v \in H \setminus Y^*$.
    Furthermore, $v \in Y\setminus Y^*$ because $H \subseteq Y$.
    Since $H$ forms a hedge for $Q[Y^*]$, $H$ is a district in $\G[H]$.
    Therefore, there exists a bidirected path between $v$ and a vertex $y^* \in Y^*$ in $Q[Y]$ which is in contradiction with the assumption that $Y^*$ is a district in $\G[Y]$.
\end{proof}
\begin{restatable}{proposition}{prpreductiontomcip}\label{prp:edgeidmcip}
    Suppose $\G'=(V^{\G'},E_d^{\G'},E_b^{\G'})$ is an ADMG, $Y\subseteq V^{\G'}$ is a set of its vertices, and $(\mathcal{H}', Y^{mcip})=\mathcal{T}_2(\G', Y)$. 
    Let $E_d''\subseteq E_d^{\G'}$ and $E_b''\subseteq E_b^{\G'}$ be arbitrary edges of $\G$, and define $E_d^\G=E_d^{\G'}\setminus E_d''$, $E_b^\G=E_b^{\G'}\setminus E_b''$.
    $Q[Y]$ is identifiable in $\G=(V^\G, E_d^\G, E_b^\G)$ if and only if $Q[Y^{mcip}]$ is identifiable in $\mathcal{H}=\mathcal{H}'[V^{\mathcal{H}'}\setminus V']$, where $V^\G =V^{\G'}$ and $V'=\{v\in V^{\mathcal{H}'}\vert \exists e\in E_b''\cup E_d'', v=\mathcal{T}_2(e)\}$. 
\end{restatable}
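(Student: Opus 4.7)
\textbf{Proof plan for Proposition \ref{prp:edgeidmcip}.}
The plan is to argue via the hedge criterion \cite{shpitser2006identification} and prove the contrapositive in both directions. Since, by Lemma \ref{lem:vertexdistrict}, every vertex of $Y^{mcip}$ forms its own district in $\mathcal{H}$, the query $Q[Y^{mcip}]$ is identifiable in $\mathcal{H}$ if and only if $Q[\{y^{mcip}\}]$ is identifiable for every $y^{mcip}\in Y^{mcip}$, i.e., iff no hedge is formed for any such $Q[\{y^{mcip}\}]$. On the other side, $Q[Y]$ is identifiable in $\G$ iff no hedge is formed for $Q[Y^*]$ for any district $Y^*\subseteq Y$ of $\G[Y]$. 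Both directions thus reduce to transferring hedges between $\G$ and $\mathcal{H}$.

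For the ``only if'' direction, I would suppose $Q[Y^{mcip}]$ is not identifiable in $\mathcal{H}$ and obtain a hedge $H$ formed for some $Q[\{y^{mcip}\}]$. Two cases arise depending on whether $y^{mcip}\in Y$ or $y^{mcip}\in V^{\mathcal{H}}_{bot}$. In the first case, Lemma \ref{lem:exclude1} guarantees $H\subseteq V^{\mathcal{H}}_{top}$, and Lemma \ref{lem:intersectX} ensures $H\cap X\neq\emptyset$, so the inverse transform of $\mathcal{H}[H]$ is well-defined and yields, via Lemma \ref{lem:preserve}, a subgraph $\G[H^{-1}]$ of $\G$ in which $H^{-1}$ is a district with $H^{-1}=Anc_{\G[H^{-1}]}(\{y^{mcip}\})$; i.e., $H^{-1}$ forms a hedge for $Q[\{y^{mcip}\}]\subseteq Q[Y]$ in $\G$, contradicting identifiability. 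In the second case, $y^{mcip}=y_j^{ij}$ for some $i<j$, and Lemma \ref{lem:ystarisdistrict} produces a set $Y^*=\{y_k\mid y_k^{ij}\in H\}$ which is a district in $\G[Y]$, together with a corresponding ``top part'' $H_{top}=H\cap V^{\mathcal{H}}_{top}$ whose ancestors in $\mathcal{H}[H_{top}]$ cover all of $H_{top}$. Applying Lemma \ref{lem:preserve} to $H_{top}$ with $W_s=Y^*$ produces a subgraph $\G[H_{top}^{-1}]$ that is a district and satisfies $H_{top}^{-1}=Anc_{\G[H_{top}^{-1}]}(Y^*)$, i.e., a hedge for $Q[Y^*]$ in $\G$, again contradicting identifiability.

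For the ``if'' direction, I would assume $Q[Y]$ is not identifiable in $\G$, obtaining a district $Y^*\subseteq Y$ of $\G[Y]$ and a set $W\supseteq Y^*$ forming a hedge for $Q[Y^*]$, i.e., $W$ is a district in $\G[W]$ with $W=Anc_{\G[W]}(Y^*)$. Lemma \ref{lem:hcapx} gives $W\setminus Y\neq\emptyset$, so the hypothesis of Lemma \ref{lem:preserveh} is met, and the lemma produces $W^*\subseteq V^{\mathcal{H}}_{top}$, the set of top vertices of the transform $\mathcal{T}_2(\G[W],Y)$, which is a district in $\mathcal{H}[W^*]$ and satisfies $W^*=Anc_{\mathcal{H}[W^*]}(Y^*)$. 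If $|Y^*|=1$, say $Y^*=\{y\}$, then $W^*$ already forms a hedge for $Q[\{y\}]$ with $y\in Y^{mcip}$ in $\mathcal{H}$. If $|Y^*|\geq 2$, I pick any two $y_i,y_j\in Y^*$ with $i<j$ in the topological order, and enlarge $W^*$ by appending the bottom-layer vertices associated with the $ij$ index: namely the chain $y_i^{ij},\ldots,y_j^{ij}$ together with the vertices $y_{kl}^{ij}$ for each bidirected edge $\{y_k,y_l\}\in E_b^{\G[Y^*]}$. By construction of $\mathcal{T}_2$, the resulting set is a district in $\mathcal{H}$ (the bidirected edges among the $y_k^{ij}$ mirror those among the $y_k\in Y^*$ via the vertices $y_{kl}^{ij}$, and bidirected edges to $W^*$ travel through $\{y_j,y_i^{ij}\}$), and its vertices are ancestors of $y_j^{ij}\in Y^{mcip}$ within this induced subgraph. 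This produces a hedge for $Q[\{y_j^{ij}\}]$ in $\mathcal{H}$, contradicting identifiability of $Q[Y^{mcip}]$.

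The main obstacle I anticipate is the careful bookkeeping in the ``if'' direction when $|Y^*|\geq 2$: one must verify that the augmented set remains a district and that its induced subgraph has the correct ancestor structure, exploiting precisely the edges introduced in step (f) of the transformation. The ``only if'' direction and the singleton case of the ``if'' direction are essentially direct applications of Lemmas \ref{lem:exclude1}--\ref{lem:preserveh}.
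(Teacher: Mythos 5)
Your overall strategy mirrors the paper's proof exactly: prove the contrapositive in both directions, use Lemma \ref{lem:vertexdistrict} to reduce to single target vertices in $\mathcal{H}$, and transfer hedges back and forth via Lemmas \ref{lem:exclude1}--\ref{lem:preserveh}. The direction starting from a hedge in $\mathcal{H}$, and the singleton case of the other direction, are essentially right modulo two small omissions: a hedge for $Q[\{y\}]$ with $y\in Y$ only certifies non-identifiability of $Q[Y]$ after you enlarge it by the district $Y^*$ of $\G[Y]$ containing $y$ (the paper passes to $W^{-1}\cup Y^*$); and before invoking the district-preservation half of Lemma \ref{lem:preserve} on $H_{top}$ you must first argue that $H_{top}$ is a district in $\mathcal{H}[H_{top}]$ --- Lemma \ref{lem:ystarisdistrict} only gives the ancestral property, and the paper supplies the missing piece by observing that $\{y_j,y_i^{ij}\}$ is the unique bidirected edge between $H_{top}$ and $H\setminus H_{top}$.

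The genuine gap is in your construction for the case $\vert Y^*\vert\geq 2$. First, you cannot pick ``any two'' $y_i,y_j\in Y^*$: the gadget indexed by $ij$ contains mirror vertices $y_k^{ij}$ only for $i\leq k\leq j$ and vertices $y_{kl}^{ij}$ only for bidirected edges with both endpoints in that range, so if $Y^*$ contains a vertex outside $[i,j]$ in the topological order, the bidirected path in $\G[Y^*]$ connecting $y_j$ to $y_i$ may have no mirror in $V^{\mathcal{H}}_{bot}$, and your candidate set fails to be bidirected-connected to the top layer through $\{y_j,y_i^{ij}\}$. You must take $y_i$ and $y_j$ to be the first and last vertices of $Y^*$ in the topological order, as the paper does. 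Second, appending the entire chain $y_i^{ij},\ldots,y_j^{ij}$ breaks the district property: a chain vertex $y_k^{ij}$ with $y_k\notin Y^*$ has bidirected edges only to vertices of the form $y_{kl}^{ij}$, none of which you include (you only add those arising from $E_b^{\G[Y^*]}$), so such a vertex is isolated in the bidirected subgraph of your candidate set and the set cannot be a district. The correct augmentation is $W^*\cup\{y_k^{ij}\mid y_k\in Y^*\}\cup\{y_{kl}^{ij}\mid y_k,y_l\in Y^*,\ \{y_k,y_l\}\in E_b^{\G}\}$, after which the district and ancestral properties follow as you sketch.
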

\begin{proof}
We prove the contrapositive, i.e., $Q[Y]$ is not identifiable in $\G$ iff $Q[Y^{mcip}]$ is not identifiable in $\mathcal{H}$.

\textit{If part.}
Suppose $Q[Y^{mcip}]$ is not identifiable in $\mathcal{H}$.
That is, there exists a hedge formed for $Q[Y^{mcip}]$ in $\mathcal{H}$.
From Lemma \ref{lem:vertexdistrict}, this hedge is formed for $Q[y']$ for some $y'\in Y^{mcip}$.
Denote the set of vertices of this hedge by $H$.
We consider two possibilities separately:
\begin{itemize}
    \item $y'=y_i$, where $y_i\in Y$.
    From Lemma \ref{lem:exclude1}, $H\subseteq V^\mathcal{H}_{top}$.
    Taking $W=H$ in Lemma \ref{lem:preserve}, $W^{-1}$ is a set of vertices in $\G$ such that $Anc_{\G[W^{-1}]}(y)=W^{-1}$, and $W^{-1}$ is a district in $\G$.
    Now take $Y^*$ to be the district of $\G[Y]$ that includes $y_i$.
    By definition of hedge, $\G[W^{-1}\cup Y^*]$ forms a hedge for $Q[Y^*]$ in $\G$.
    Note that from Lemma \ref{lem:intersectX}, $W^{-1}\setminus Y\neq\emptyset$.
    As a result, $Q[Y]$ is not identifiable in $\G$.
    \item $y'=y_j^{ij}$, where $y_i, y_j\in Y$ and $y'$ is one of the vertices added to $\mathcal{H}$ in the last step of the transformation $\mathcal{T}$ (step (f)).
    Define the set $Y^*=\{y_k\vert y_k^{ij}\in H\}$.
    From Lemma \ref{lem:ystarisdistrict}, $Y^*$ is a district in $\G$, and therefore a district in $\G[Y]$.
    As a result, it suffices to show that there exists a hedge formed for $Q[Y^*]$ in $\G$.
    Now define $H_{top}=H\cap V^\mathcal{H}_{top}$.
    By definition of hedge, $H$ is a district in $\mathcal{H}[H]$, i.e., it is connected over its bidirected edges.
    By construction of $\mathcal{H}$, there is only one bidirected edge between the vertices in $H_{top}$ and $H\setminus H_{top}$, which is the bidirected edge between $y_j$ and $y_{i}^{ij}$.
    Therefore, this edge is a cut set that partitions the graph $\mathcal{H}[H]$ into two connected components $\mathcal{H}[H_{top}]$ and $\mathcal{H}[H\setminus H_{top}]$.
    That is, $\mathcal{H}[H_{top}]$ is connected over its bidirected edges and therefore $H_{top}$ is a district in $\mathcal{H}[H_{top}]$.
    Further, from Lemma \ref{lem:ystarisdistrict}, $H_{top}= Anc_{\mathcal{H}[H_{top}]}(Y^*)$.
    Noting that $H_{top}\subseteq V^\mathcal{H}_{top}$, taking $W=H_{top}$ in Lemma \ref{lem:preserve}, $W^{-1}$ is a district in $\G$ and $Anc_{\G[W^{-1}]}(Y^*)=W^{-1}$.
    Note that from Lemma \ref{lem:intersectX}, $W^{-1}\setminus Y\neq\emptyset$.
    Therefore, the set of vertices $W^{-1}$ form a hedge for $Q[Y^*]$ in $\G$.
    Hence, $Q[Y]$ is not identifiable in $\G$.
    
\end{itemize}
\textit{Only if part.}
Suppose $Q[Y]$ is not identifiable in $\G$. 
It implies that there exists a district of $\G[Y]$ such as $Y^*$ such that there is a hedge formed for $Q[Y^*]$ in $\G$.
Let $H$ denote the set of vertices of this hedge.
From Lemma \ref{lem:hcapx}, $H\setminus Y\neq\emptyset$.
Define $W^*$ as in Lemma \ref{lem:preserveh}, that is the transform $\mathcal{T}(\G[H], Y^*)$ without step (f) (only on the vertices of $V^\mathcal{H}_{top}$).
Note that $Y^*\subseteq W^*$.
We consider the following two cases separately:
\begin{itemize}
    \item $Y^*=\{y\}$, that is, $Y^*$ is a single vertex.
    From Lemma \ref{lem:preserveh}, $W^*$ is a district in $\mathcal{H}[W^*]$, and $Anc_{\mathcal{H}[W^*]}(y)=W^*$.
    By definition of hedge, the vertices $W^*$ form a hedge for $Q[y]$ in $\mathcal{H}$.
    Note that $y\in Y^{mcip}$, and from Lemma \ref{lem:vertexdistrict} it is a district of $\mathcal{H}[Y^{mcip}]$.
    As a result, $Q[Y^{mcip}]$ is not identifiable in $\mathcal{H}$.
    
    \item $\vert Y^*\vert\geq2$.
    Let $y_i$ and $y_j$ be the first and the last vertices of $Y^*$ in the topological order.
    Define $Y^{ij*}=\{y_k^{ij}\vert y_k\in Y^*\}\cup\{y_{kl}^{ij}\vert y_k,y_l\in Y^*\}$. 
    $Y^{ij*}$ are the vertices in $V^{\mathcal{H}}_{bot}$ with superscript ${ij}$ corresponding to the vertices in $Y^*$.
    Note that $y_i^{ij},y_j^{ij}\in Y^{ij*}$, since $y_i,y_j\in Y^*$.
    Since $y_j^{ij}\in Y^{mcip}$ and from Lemma \ref{lem:vertexdistrict} $y_j^{ij}$ is a district in $\mathcal{H}[Y^{mcip}]$, it suffices to show that there is a hedge formed for $y_j^{ij}$ in $\mathcal{H}$.
    We show that the vertices $W=W^*\cup Y^{ij*}$ form a hedge for $y_j^{ij}$ in $\mathcal{H}$.
    From Lemma \ref{lem:preserveh}, $Anc_{\mathcal{H}[W^*]}(Y^*)=W^*$, 
    that is, all of the vertices in $W^*$ are ancestors of $Y^*$ in $\mathcal{H}[W^*]$, and therefore in $\mathcal{H}[W]$.
    Also, the vertices ${y_{kl}^{ij}}$ in $Y^{ij*}$ have a direct edge to their corresponding vertex in $W^*$, i.e., $y_{kl}^{b}$, and therefore are ancestors of $Y^*$ in $\mathcal{H}[W]$ as well.
    Further, each vertex in $Y^*$ such as $y_k$ is a parent of $y_k^{ij}$, which is in turn a parent of $y_i^{ij}$ (or is $y_i^{ij}$ itself if $k=i$.)
    Finally, $y_i^{ij}$ has a directed edge to $y_j^{ij}$ by construction.
    As a result, all of the vertices $W$ have a direct path to $y_j^{ij}$ in $\mathcal{H}[W]$.
    That is, $Anc_{\mathcal{H}[W]}(y_j^{ij})=W$.
    It now remains to show that $W$ is a district in $\mathcal{H}[W]$.
    From Lemma \ref{lem:preserveh}, $W^*$ is a district in $\mathcal{H}[W^*]$.
    As a result, the vertices $W^*$ are connected through bidirected edges in $\mathcal{H}[W]$.
    There is a bidirected edge between $y_j$ and $y_i^{ij}$ by construction of $\mathcal{H}$.
    It suffices to show that for any $v\in Y^{ij*}$, there exists a bidirected path between $v$ and $y_i^{ij}$ in $\mathcal{H}[W]$.
    A vertex $y_{kl}^{ij}\in Y^{ij*}$ (with double subscript, which are due to the bidirected edges among $Y^*$) has bidirected edges to $y_k^{ij}$ and $y_l^{ij}$, which are both in $Y^{ij*}$ by definition.
    Now take an arbitrary vertex $y_k^{ij}\in Y^{ij*}$ (with single subscript, due to vertices in $Y^*$).
    We know $y_k\in Y^*$, as $y_k^{ij}\in Y^{ij*}$, by definition of $Y^{ij*}$.
    $Y^*$ is a district in $\G[Y^*]$.
    That is, there exists a bidirected path from $y_k$ to $y_i$ in $\G[Y^*]$.
    From Lemma \ref{lem:preserveh} by taking $W=Y^*$, there is a bidirected path $p$ from $y_k$ to $y_i$ in $\mathcal{H}[Y^*\cup\{y_{lm}\vert y_l,y_m\in Y^*\}]$.
    By construction of $\mathcal{H}$, if we replace each vertex $v$ on $p$ by $v^{ij}$, we achieve a bidirected path $p'$ with vertices in $Y^{ij*}$ from $y_k^{ij}$ to $y_i^{ij}$, which completes the proof.   
\end{itemize}
\end{proof}

\begin{proof}[Proof of Proposition \ref{prp:reduc}]
    The reduction from the edge ID problem to MCIP was shown through the proof of Proposition \ref{prp:edgeidmcip}.
    The opposite direction is an immediate corollary of Proposition \ref{prp:mcipedgeid}.
\end{proof}
\begin{corollary}\label{cor:eq}
    The edge ID problem and MCIP are equivalent.
\end{corollary}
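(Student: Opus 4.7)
The plan is to observe that Corollary \ref{cor:eq} is essentially a restatement of Proposition \ref{prp:reduc} in the language of problem equivalence. Two decision/optimization problems are called (polynomial-time) equivalent when each reduces to the other in polynomial time, and this is exactly what Proposition \ref{prp:reduc} has just established. So the proof is a short bookkeeping argument rather than any new construction.

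First I would invoke Proposition \ref{prp:reduc} to get both directions of the reduction: the forward direction (edge ID $\to$ MCIP) comes from the transformation $\mathcal{T}_2$ together with Proposition \ref{prp:edgeidmcip}, which shows that removing a set of edges $E^*$ from $\G$ makes $Q[Y]$ identifiable if and only if intervening on the set $\mathcal{T}_2(E^*)$ in $\mathcal{H}$ makes $Q[Y^{mcip}]$ identifiable; matching costs $C(\mathcal{T}_2(e)) = w_e$ ensures the objective values align. The reverse direction (MCIP $\to$ edge ID) comes from the transformation $\mathcal{T}_1$ and Proposition \ref{prp:mcipedgeid}, which analogously matches intervention choices to edge-removal choices with equal total cost. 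Both transformations were shown to be computable in polynomial time in $|V^\G|, |E_d^\G|, |E_b^\G|$ (and, for $\mathcal{T}_2$, also in $|Y|$).

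From these two polynomial-time cost-preserving reductions, the optimal value of one problem can be computed from the optimal value of the other, so they lie in the same complexity class under polynomial-time reductions. In particular, combining with Theorem \ref{thm:hardness}, both are NP-complete, and any algorithm (exact or approximation with multiplicative factor) for one yields a corresponding algorithm for the other with at most polynomial overhead. This is the precise sense in which edge ID and MCIP are equivalent.

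I do not anticipate a real obstacle here, since all the work has already been done in Propositions \ref{prp:mcipedgeid} and \ref{prp:edgeidmcip}; the only thing to be careful about is to make explicit that the reductions are cost-preserving (not merely feasibility-preserving), so that optimal solutions map to optimal solutions, which is already guaranteed by the construction of the weights/costs in $\mathcal{T}_1$ and $\mathcal{T}_2$.
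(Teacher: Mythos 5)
Your proposal is correct and matches the paper's route exactly: the corollary is an immediate consequence of Proposition \ref{prp:reduc}, whose proof in turn rests on the two cost-preserving transformations $\mathcal{T}_1$ (via Proposition \ref{prp:mcipedgeid}) and $\mathcal{T}_2$ (via Proposition \ref{prp:edgeidmcip}). Your added remark that the reductions preserve objective values, not just feasibility, is the right point to make explicit and is indeed guaranteed by the weight/cost assignments in both constructions.
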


\section{Maximal hedge}\label{apx:MH}
\begin{algorithm}
    \caption{Maximal Hedge.}
    \label{alg:mh}
    \begin{algorithmic}[1]
        \Function {\bfseries MH}{$\mathcal{G}, Y$}
            \State Initialize $M\gets\emptyset$
            \For{$Y_i$ in districts of $\G[Y]$}
                \State{$M\gets M\cup \mathbf{HHull}(\G,Y_i)$ }
            \EndFor
            \State\Return $\G[M]$
        \EndFunction
        \hrulefill\medskip\setcounter{ALG@line}{0}
        \Function{\textbf{HHull}}{$\mathcal{G}, Y_i$}
            \State Initialize $H\gets V^\G$
            \While{True}
                \State $C\gets$ connected component (district) of $Y_i$ via bidirected edges in $\G[H]$
                \State $A\gets$ ancestors of $Y_i$ in $\G[C]$
                \If{$C\neq A$}
                    \State $H\gets A$
                \Else
                    \State {\bfseries break}
                \EndIf
            \EndWhile
            \State\Return $H$
        \EndFunction
    \end{algorithmic}
\end{algorithm}
Herein, we present the algorithm for recovering the maximal hedge formed for $Q[Y]$ in a given ADMG $\G$ (see Definition \ref{def:mh}).
Maximal hedge was initially defined in \cite{akbari2022minimum} under the name \emph{hedge hull}.
We adopt the same definition, and when $\G[Y]$ comprises several districts, we define the maximal hedge as the union of the hedge hulls formed for each district of $\G[Y]$.
As a result, the complete procedure of recovering the maximal hedge for a query $Q[Y]$, summarized in Algorithm \ref{alg:mh}, finds the maximal hedge formed for each district $Y_i$ of $\G[Y]$ and returns the union of them.
This procedure is used as a subroutine \textbf{MH} in Algorithm \ref{alg:brute}.
The function \textbf{HHull} is in fact Algorithm 1 borrowed from \cite{akbari2022minimum}. This function is proven to recover the union of all hedges formed for $Y_i$, where $Y_i$ is one of the districts of $\G[Y]$ (see Lemma 6 of \cite{akbari2022minimum}).

\section{Towards generalizing Assumption \ref{ass:indep}}\label{apx:corr}
Lemma \ref{lem:eq} states the equivalence of Problems \ref{pb:1} and \ref{pb:2} with the edge ID problem under Assumption \ref{ass:indep}.
However, as mentioned in the main text, this equivalence holds in the more general setting where we allow for perfect negative correlations among edges.
As an example, consider the graph of Figure \ref{fig:correlation}.
Suppose that the performed statistical independence tests show that the two variables $z$ and $y$ are independent of each other with high confidence.
As a result, the expert believes that the edges $(z,x)$ and $(x,y)$ must not exist simultaneously, as otherwise the causal path from $z$ to $y$ would make them dependent.
In such cases, the belief of the expert can be modeled as probabilities $p$ and $q$ assigned to the existence of the edges $(z,x)$ and $(x,y)$, respectively, as well as a perfect negative correlation between them.

Note that the aforementioned constraint, i.e., that the edges do not exist simultaneously, can be specified for any number of edges, not limited to two edges only.
For instance, the expert might believe at least one of the edges along a causal path of length $n$ must not exist in the true ADMG describing the system.
This belief can be modeled as an extra constraint in the optimization of Equations \ref{eq:p1} and \ref{eq:p2}.
\begin{figure}
    \centering
    \begin{tikzpicture}[scale=1, every node/.style={scale=1}]
        \tikzset{edge/.style = {->,> = latex'}}
        \node (X) at (0,0) {$x$};
        \node (Z) at (-1.6,0) {$z$};
        \node (Y) at (1.6,0) {$y$};
        \node (zx) at ($(Z)!0.5!(X)+ (0.07, 0.15)$){\tiny $p$};
        \node (xy) at ($(X)!0.5!(Y)+ (0.07, 0.15)$){\tiny $q$};
        \path[->]
        (Z) edge[style = {->}] (X);
        \path[->]
        (X) edge[style = {->}] (Y);
    \end{tikzpicture}
    \caption{An example where the expert is aware that there is no causal path from $z$ to $y$, e.g., because $z\independent y$ with high confidence.}
    \label{fig:correlation}
\end{figure}
We show that with the specification of such negative correlations, Problems \ref{pb:1} and \ref{pb:2} can still be cast as an instance of the edge ID problem.
Therefore, the results presented in this work are valid in this setting as well.

\begin{assumption}\label{ass:perfect}
    The edges in $\G$ are assigned probabilities $p_e, \forall e\in\G$, and perfect negative correlations are defined among subsets of edges.
    More precisely, for any subset $E\subseteq E_d^\G\cup E_b^\G$, there is either 1) no constraint (mutually independent), or 2) the constraint that at least one of the edges in $E$ must not exist in the true ADMG (perfect negative correlation).
\end{assumption}

\begin{proposition}
    Under Assumption \ref{ass:perfect}, there exists a reduction from Problems \ref{pb:1} and \ref{pb:2} to the edge ID problem and vice versa with the time complexity in the order of $\mathcal{O}(\vert C\vert\cdot\vert V^\G\vert + \vert E_d^\G\cup E_b^\G\vert)$, where $C$ is the set of perfect correlation constraints.
\end{proposition}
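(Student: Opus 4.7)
The plan is to extend Lemma \ref{lem:eq} by constructing, for every perfect negative correlation constraint, an auxiliary gadget that encodes the constraint directly into the edge ID objective. First I would apply the weight transformation of Lemma \ref{lem:eq} to every edge of $\G$, obtaining initial weights $w_e$ (log-propensity ratios for Problem \ref{pb:1} and $-\log(1-p_e)$ for Problem \ref{pb:2}). This already captures the independence part of Assumption \ref{ass:perfect}, so the only additional work is to force each constraint $E_c = \{e_1, \dots, e_k\} \in C$ to be ``hit'' by any feasible edge removal set.

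Next, for each $E_c$, I would augment the graph with a constraint gadget $\mathcal{D}_c$ that introduces a new hedge for the query. The gadget is designed so that (i) every edge of $\mathcal{D}_c$ other than the original $e_1, \dots, e_k$ has weight $\infty$ and is therefore never removed in any finite-cost solution, and (ii) the newly introduced hedge can be broken only by removing at least one of $e_1, \dots, e_k$. A concrete realization adds a fresh ``witness'' vertex $y_c$ bound into the $Y$-district via infinite-weight bidirected edges, together with auxiliary vertices placed along the endpoints of the constraint edges to make each $e_i$ individually critical to the district/ancestor structure of the new hedge. Because such a gadget may need to traverse up to all of $V^\G$ to reach the relevant endpoints, its size is $O(|V^\G|)$, giving the overall complexity $\mathcal{O}(|C|\cdot|V^\G| + |E_d^\G \cup E_b^\G|)$.

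The correctness step then shows a weight-preserving correspondence (up to the additive constants handled in Lemma \ref{lem:eq}) between feasible solutions of the constrained Problems \ref{pb:1}/\ref{pb:2} and feasible solutions of the resulting edge ID instance. In one direction, any $\G_s$ respecting every correlation constraint deletes at least one edge in each $E_c$ and therefore breaks every gadget hedge by condition (ii), so $Q[Y]$ remains identifiable in the augmented graph. In the other direction, condition (i) forces any optimal edge ID solution to break every gadget hedge by deleting an edge in $E_c$, so the corresponding removal set is feasible for the original problem. The reverse reduction (edge ID into the constrained Problems \ref{pb:1}/\ref{pb:2}) is immediate: set $C = \emptyset$ and recover the edge probabilities as in the proof of Lemma \ref{lem:eq}.

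The main obstacle will be engineering $\mathcal{D}_c$ so that it is sensitive \emph{exclusively} to the edges in $E_c$: it must neither be defeated by some unrelated deletion in $\G$ nor interact with hedges already present in $\G$ or with other gadgets. This requires each $e_i$ to appear as an ancestral or district-connectivity bottleneck of $\mathcal{D}_c$, and the gadget as a whole to remain a single c-component whose ancestors coincide with its vertex set, regardless of whether $e_i$ is directed or bidirected. Once the gadget is described explicitly (with slightly different sub-constructions for the two edge types), the rest of the argument follows the template of Lemma \ref{lem:eq}.
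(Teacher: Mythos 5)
Your high-level strategy---absorb each perfect-negative-correlation constraint $E_c$ into an auxiliary gadget that spawns a fresh hedge for a new witness root, breakable only by ``hitting'' $E_c$---is the same idea the paper uses. The difference is where the gadget lives: you attach it directly to the \emph{edges} $e_1,\dots,e_k$ of $\G$, whereas the paper first pushes the constrained edge ID instance through the edge-to-vertex transformation $\mathcal{T}_2$ into MCIP, so that each constrained edge becomes a dedicated proxy \emph{vertex}, and only then attaches a chain gadget $x_1',\dots,x_m',\hat y$ (with infinite-cost auxiliaries) to those proxies before translating back to unconstrained edge ID via the MCIP/edge-ID equivalence. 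This is not a cosmetic difference, and it is where your proposal has a genuine gap: the gadget $\mathcal{D}_c$, which carries the entire weight of the argument, is never constructed, and there is a structural obstruction to constructing it at the edge level. A constraint edge $e_i=(u,v)$ joins two pre-existing vertices whose other adjacencies in $\G$ you do not control; a hedge is a property of a vertex set (district connectivity plus ancestrality), so after deleting $e_i$ an alternative hedge for $Q[y_c]$ can simply reroute through any other directed path from $u$ to $v$ (or, for a bidirected $e_i$, any other bidirected path) already present in $\G$ whose intermediate vertices can be absorbed into the hedge. Since you may only \emph{add} vertices and edges---you cannot subdivide $e_i$ without changing the identity of the deletable object, and you cannot delete or reweight the bypassing edges---condition (ii) of your gadget (``the new hedge can be broken \emph{only} by removing some $e_i$'') cannot be enforced for arbitrary $\G$. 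This is precisely what the paper's detour buys: under $\mathcal{T}_2$ each edge is represented by a vertex with fully controlled adjacencies, removal of that vertex is absolute (it kills every path through it), and the constraint becomes a clean ``intervene on at least one vertex of $X$'' condition for which the chain gadget works.

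Two smaller points. First, your reverse direction (take $C=\emptyset$ and invert the weight map of Lemma~\ref{lem:eq}) and your complexity accounting $\mathcal{O}(\vert C\vert\cdot\vert V^\G\vert+\vert E_d^\G\cup E_b^\G\vert)$ agree with the paper. Second, you correctly flag the gadget-isolation issue as ``the main obstacle,'' but in a complete proof that obstacle \emph{is} the proof; to close it you would essentially have to re-derive a local version of $\mathcal{T}_2$ (replace each $e_i$ by a proxy vertex whose presence is equivalent to the presence of $e_i$), at which point you have reproduced the paper's route through MCIP. I would recommend restructuring the argument as: constrained Problem~\ref{pb:1}/\ref{pb:2} $\to$ constrained edge ID (Lemma~\ref{lem:eq}) $\to$ constrained MCIP (Proposition~\ref{prp:edgeidmcip}) $\to$ unconstrained MCIP (vertex chain gadget) $\to$ unconstrained edge ID (Corollary~\ref{cor:eq}).
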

\begin{proof}
    First note that we proved the equivalence of Problems \ref{pb:1} and \ref{pb:2} with the edge ID problem without the perfect correlation constraints in Lemma \ref{lem:eq}.
    As a result, under assumption \ref{ass:perfect}, i.e., by adding the perfect correlation constraints, Problems \ref{pb:1} and \ref{pb:2} are equivalent to a modified edge ID problem with those constraints.
    But we claim that there exists and instance of the original unconstrained edge ID problem which is equivalent to these problems.
    To see this, first note that we know from Corollary \ref{cor:eq} that the edge ID problem is equivalent to MCIP.
    Therefore, it suffices to show that there exists an instance of MCIP which is equivalent to the constrained edge ID mentioned above.
    To this end, consider the transform $\mathcal{T}_2(\G, Y)$ introduced in Section \ref{apx:edgeidmcip}.
    This transformation maps an instance of the edge ID problem to an instance of MCIP.
    Applying this transformation to the constrained edge ID problem, we can map the constrained edge ID to an instance of MCIP with extra constraints, with transforming the constraints as well.
    That is, if for instance, there is a perfect negative correlation among the edges $e_1,e_2$ in $\G$, this constraint is mapped to a negative perfect correlation on the corresponding vertices in $\mathcal{H}$, namely $\mathcal{T}_2(e_1),\mathcal{T}_2(e_2)$.
    In words, this constraint would be that at least one of $\mathcal{T}_2(e_1)$ and $\mathcal{T}_2(e_2)$ must be intervened upon.
    We show that such constraints can be integrated into the original definition of MCIP.
    
    Suppose we have an MCIP problem in ADMG $\G$ with query $Q[Y]$, with the extra constraint that at least one of the vertices $X\subseteq V^\G$ must be intervened upon.
    Consider the example of $X=\{x_1,x_2,x_3\}$ in Figure \ref{fig:negativecorr}.
    We build a new ADMG $\G'$ by adding vertices $\{x'\vert x\in X\}$, i.e., a new vertex corresponding to each vertex in $X$, along with an auxiliary vertex $\hat{y}$ to $\G$.
    We fix a random ordering over the vertices of $X$, and denote the set of vertices of $X$ as $x_1,...,x_m$.
    We add the directed edges $(x_i,x_i')$ to $\G'$, as well as the bidirected edges $\{x_i, x_i'\}$.
    Further, we draw directed edges $(x_i',x_{i+1}')$ for every $1\leq i<m$.
    Finally, we draw the directed edge $(x_m',\hat{y})$ and the bidirected edge $\{x_1,\hat{y}\}$.
    Refer to the graph in Figure \ref{fig:negativecorr} for an example with $X=\{x_1,x_2,x_3\}$.
    Note that the set $X\cup X'\cup \{\hat{y}\}$ forms a hedge for $Q[\hat{y}]$, where $X'=\{x'\vert x\in X\}$
    Now it suffices to set the cost of intervention on vertices of $X'$ to infinity, and consider MCIP for the query $Q[Y\cup\{\hat{y}\}]$ in $\G'$.
    It is straightforward to see that the objective of this problem would be to find the minimum cost intervention for identification of $Q[Y]$, with the constraint that at least one of the vertices in $X$ must be intervened on.
    Note that as soon as one vertex in $X$ gets intervened upon, there is no hedge left for $Q[\hat{y}]$.
    Also it is noteworthy that adding this structure does not add any new hedges formed for $Q[Y]$, since the structure only includes new descendants for $X$ which have no directed paths to $Y$.
    Also note that the vertices $X'$ and $\hat{y}$ are specific to the very constraint corresponding to the set of vertices $X$.
    For any constraint, we add such a structure to $\G$.
    The number of vertices (and therefore the time complexity) is at most in the order $\mathcal{O}(\vert C\vert\cdot\vert V^\G\vert)$, where $C$ is the set of constraints.
    
\end{proof}

\begin{figure}[t] 
    \centering
    \tikzstyle{block} = [circle, inner sep=1.3pt, fill=black]
		\tikzstyle{input} = [coordinate]
		\tikzstyle{output} = [coordinate]
	    \begin{tikzpicture}
            \tikzset{edge/.style = {->,> = latex'}}
            \node (x1) at (0,0) {$x_1$};
            \node (x2) at ($(x1)+(1.5,0)$) {$x_2$};
            \node (x3) at ($(x2)+(1.5,0)$) {$x_3$};
            \node (x1') at ($(x1)+(0,-1.5)$) {$x'_1$};
            \node (x2') at ($(x2)+(0,-1.5)$) {$x'_2$};
            \node (x3') at ($(x3)+(0,-1.5)$) {$x'_3$};
            \node (y) at ($(x2')+(0,-1.5)$) {$\hat{y}$};
           
            \draw[->] (x1) to (x1');
            \draw[->] (x2) to (x2');
            \draw[->] (x3) to (x3');
            \draw[->] (x1') to (x2');
            \draw[->] (x2') to (x3');
            \draw[->] (x3') to (y);
  
            \path[->]
            (x1) edge[blue,dashed, style = {<->}, bend right=30](x1');
            \path[->]
            (x1') edge[blue,dashed, style = {<->}, bend left=20](x2);
            \path[->]
            (x2) edge[blue,dashed, style = {<->}, bend right=30](x2');
            \path[->]
            (x2') edge[blue,dashed, style = {<->}, bend left=20](x3);
            \path[->]
            (x3) edge[blue,dashed, style = {<->}, bend right=30](x3');
            \path[->]
            (x1) edge[blue,dashed, style = {<->}, bend right=80](y);
            
        \end{tikzpicture}
    \caption{Integrating the perfect negative correlation constraint into MCIP.}
    \label{fig:negativecorr}
\end{figure}
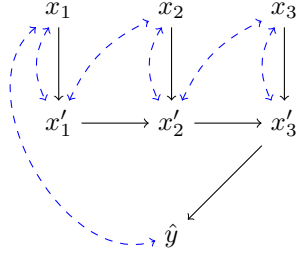 

\subsection{Further applications}\label{apx:corr2}
The relaxation provided in this Appendix allows the approaches proposed in this work to be applicable to a more general set of problems.
Herein, we discuss one such application.

Let us assume we run our algorithm which returns the subgraph with the highest probability, $\mathcal{G}_1$. 
However, the probability that $\mathcal{G}_1$ is the true causal structure describing the system might be too low.
In such a case, the researcher might be interested in having a ranking of most probable graphs (for instance, the 10 most probable graphs), rather than only one most probable graph.
This could be helpful for instance, when a unique identification formula is valid in a few of these graphs, or the researcher is interested in identifying more than one causal query.
The methods discussed in this work along with the relaxation proposed in this appendix provide the tools to recover such a ranking (of the most probable graphs in which a query is identifiable).
To see this, note that based on what we proposed in this Appendix, perfect negative correlation constraints can be added to the edge ID problem without additional computational cost. 
So we begin by solving the original problem, which yields a graph $\mathcal{G}_1$.
We then solve it for a second time (i.e., re-run the algorithm), with the only difference that we add the perfect negative correlation constraint over the set of all edges of $\mathcal{G}_1$ (i.e., we force the algorithm to exclude at least one of the edges of $\mathcal{G}_1$.) 
The solution to this problem (let us call it $\mathcal{G}_2$) is the highest probability graph among all subgraphs except $\mathcal{G}_1$, i.e., it is the second highest probability graph in which the query is identifiable. 
Continuing in this manner, running the algorithm $n$ times would give us a ranking of the $n$ highest probability graphs.

\section{Heuristic algorithms} \label{apx:heuristic}

Algorithm \ref{alg:heur1} was devised considering the fact that every hedge formed for $Q[Y]$ must include a vertex that has a bidirected edge to $Y$.
As mentioned in Section \ref{sec:heu}, an analogous approach, summarized in Algorithm \ref{alg:heur2}, uses the fact that any hedge formed for $Q[Y]$ must include a parent of $Y$.

Let $Y\subseteq V^\G$ be a set of vertices of $\G$ such that $\G[Y]$ comprises of only one district.
Let $Z:=\{z \in V^{\G} \vert \exists\ y \in Y: (z,y) \in E_{d}^{\G} \}\setminus Y$ denote the set of vertices that have at least one directed edge to a vertex in $Y$, i.e., the parents of $Y$ excluding $Y$.
Any hedge formed for $Q[Y]$ contains at least one vertex of $Z$. 
As a result, in order to eliminate all the hedges formed for $Q[Y]$, it suffices to ensure that none of the vertices in $Z$ appear in  the final hedge.
To this end, for any $z\in Z$, it suffices to either remove all the directed edges between $z$ and $Y$, or eliminate all the bidirected paths from $z$ to $Y$.
The problem of eliminating all bidirected paths from $Z$ to $Y$ can be cast as a minimum cut problem between $Z$ and $Y$ in the edge-induced subgraph of $\G$ over its bidirected edges.
To add the possibility of removing the directed edges between $Z$ and $Y$, we add an auxiliary vertex $z^*$ to the graph and draw a bidirected edge between $z^*$ and every $z\in Z$ with weight $w=\sum_{y\in Y}w_{(z,y)}$, i.e., the sum of the weights of all directed edges between $z$ and $Y$.
Note that $z$ can have directed edges to multiple vertices in $Y$.
We then solve the minimum cut problem for $z^*$ and $Y$.
If an edge between $z^*$ and $z\in Z$ is in the solution to this min-cut problem, it translates to removing all the directed edges from $z$ to $Y$ in the original problem.
Note that we can run the algorithm on the maximal hedge formed for $Q[Y]$ in $\G$ rather than $\G$ itself.

\begin{algorithm}
    \caption{Heuristic algorithm 2.}
    \label{alg:heur2}
    \begin{algorithmic}[1]
        \Function {\bfseries HEID2}{$\mathcal{G}, Y,W_{\G}$}
            \State $\G' \gets \textbf{MH}(\mathcal{G}, Y)$
            \State $Z \gets \{z \in V^{\G'} \vert \exists y \in Y: (z,y) \in E_{d}^{\G'} \}\setminus Y$
            \State $\mathcal{H} \gets$ The induced subgraph of $\G'$ on its bidirected edges.
            \State $W_\mathcal{H}\gets \{ w_{e} \in W_{\G} \vert e \in \mathcal{H} \}$
            \State $V^\mathcal{H} \gets V^\mathcal{H}\cup\{y^*,z^*\}$
            \For{$z\in Z$}
                \State $E^\mathcal{H}\gets E^\mathcal{H}\cup \{z^*,z\}$
                \State $W_\mathcal{H}\gets W_\mathcal{H}\cup\{w_{\{z^*,z\}}=\sum_yw_{(z,y)}\}$
            \EndFor
            \For{$y\in Y$}
                \State $E^\mathcal{H}\gets E^\mathcal{H}\cup \{y,y^*\}$ 
                \State $ W_\mathcal{H}\gets W_\mathcal{H}\cup\{w_{\{y,y^*\}}= \infty \}$
            \EndFor
            \State $E \gets MinCut(\mathcal{H},  W_{\mathcal{H}}, z^*, y^*)$
            \State\Return $(E, \sum_{e \in E} w_e)$
        \EndFunction
    \end{algorithmic}
\end{algorithm}

\section{Experiments}\label{apx:exp}
Noting that the synthetic/simulation results in the main paper were for graphs with a $\log(n)/n$ sparsity constraint, we begin this section by providing a set a results on the simulated graphs without the sparsity penalty for comparison. Then, we provide information about the causal discovery algorithm used to derive the psychology `Psych' real-world graph. 
We also provide experimental results for Problem \ref{pb:2} formulation in Section \ref{apx:pb2}.

\begin{figure*}[t]
    \centering
    \begin{subfigure}[b]{0.49\textwidth}
        \centering
        \includegraphics[width=.9\textwidth]{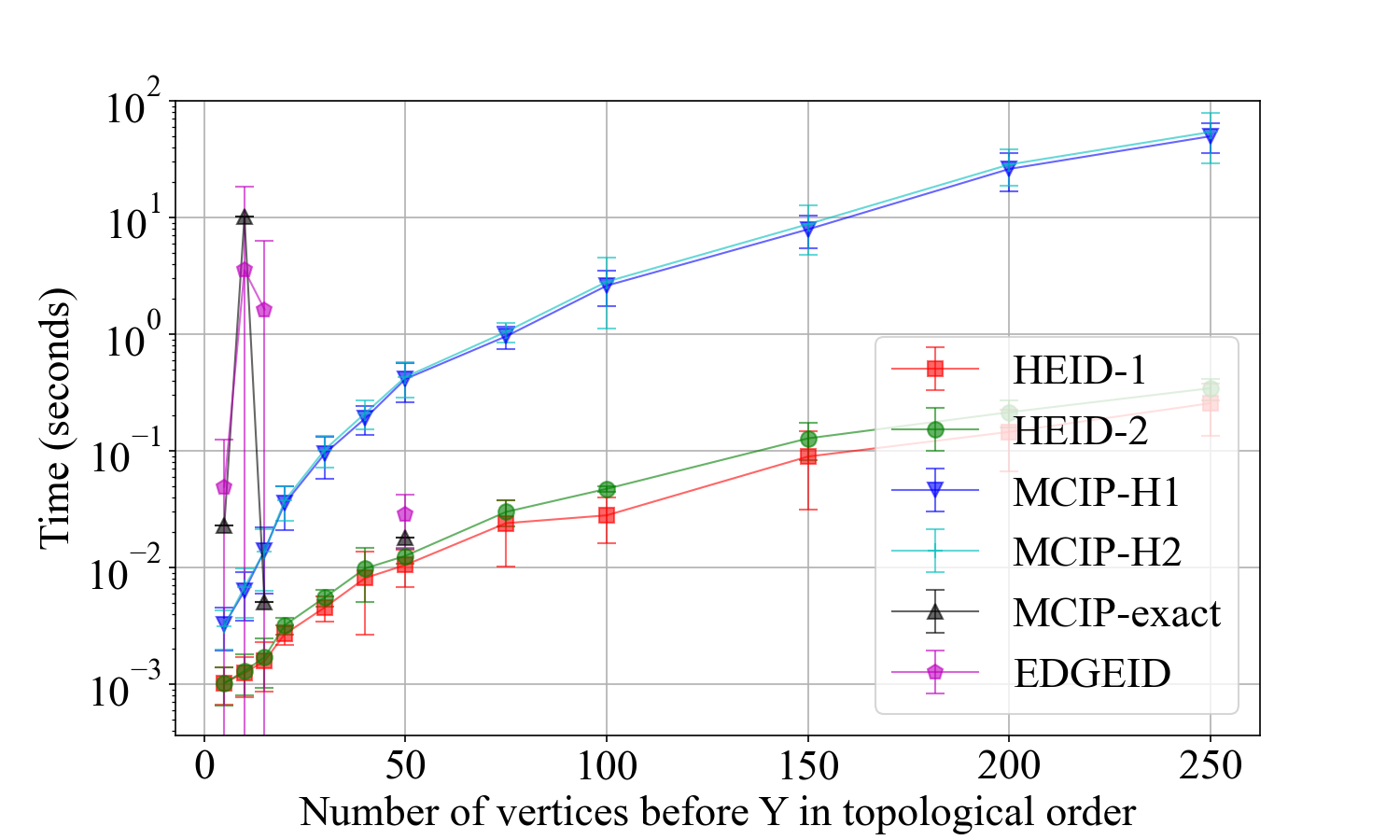}
        \caption[Network2]%
        {{\small Runtimes.}}    
        \label{fig:runtimes_nospars}
    \end{subfigure}
    \hfill
    \begin{subfigure}[b]{0.49\textwidth}  
        \centering 
        \includegraphics[width=.9\textwidth]{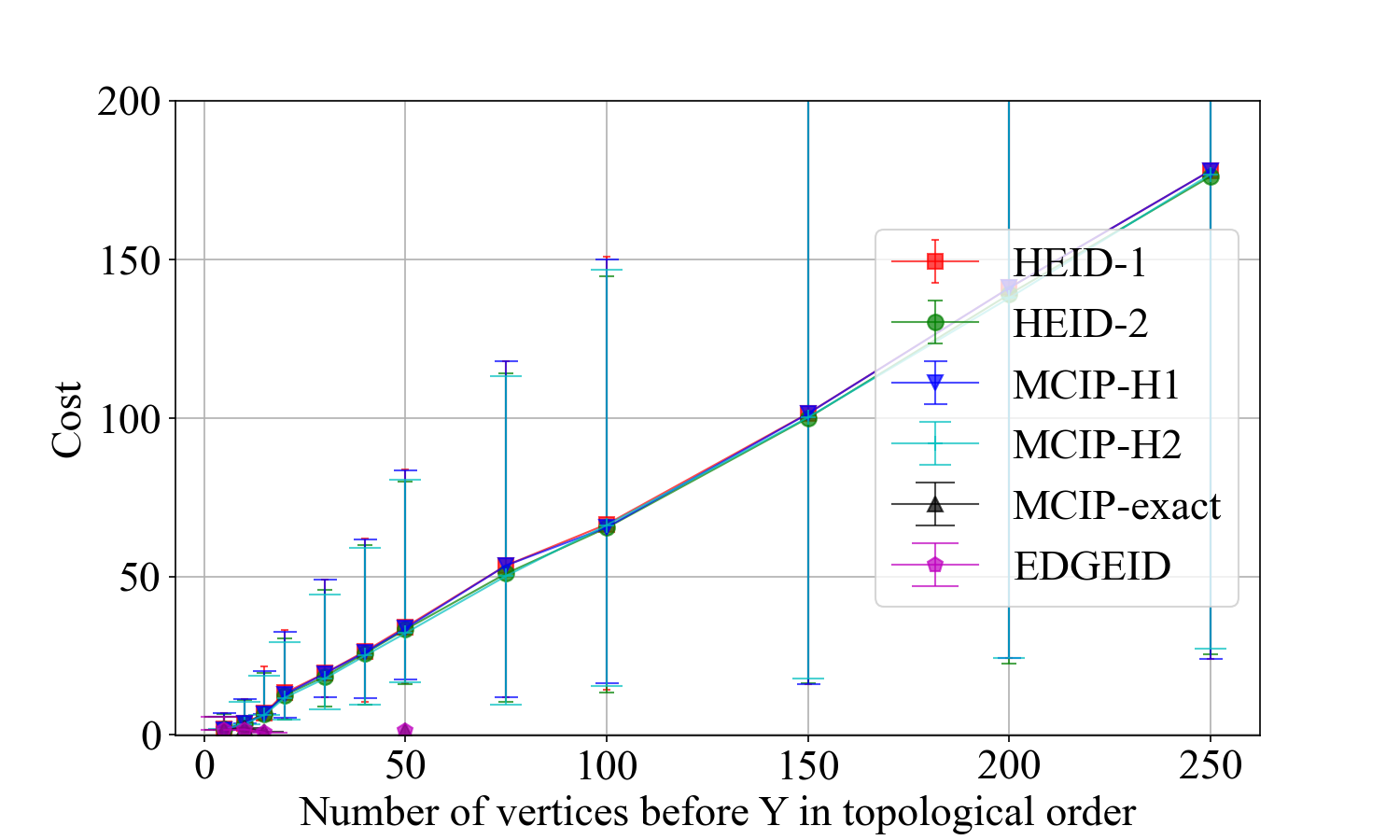}
        \caption[]%
        {{\small Solution costs.}}    
        \label{fig:costs_nospars}
    \end{subfigure}
    \begin{subfigure}[b]{0.49\textwidth}   
        \centering 
        \includegraphics[width=.9\textwidth]{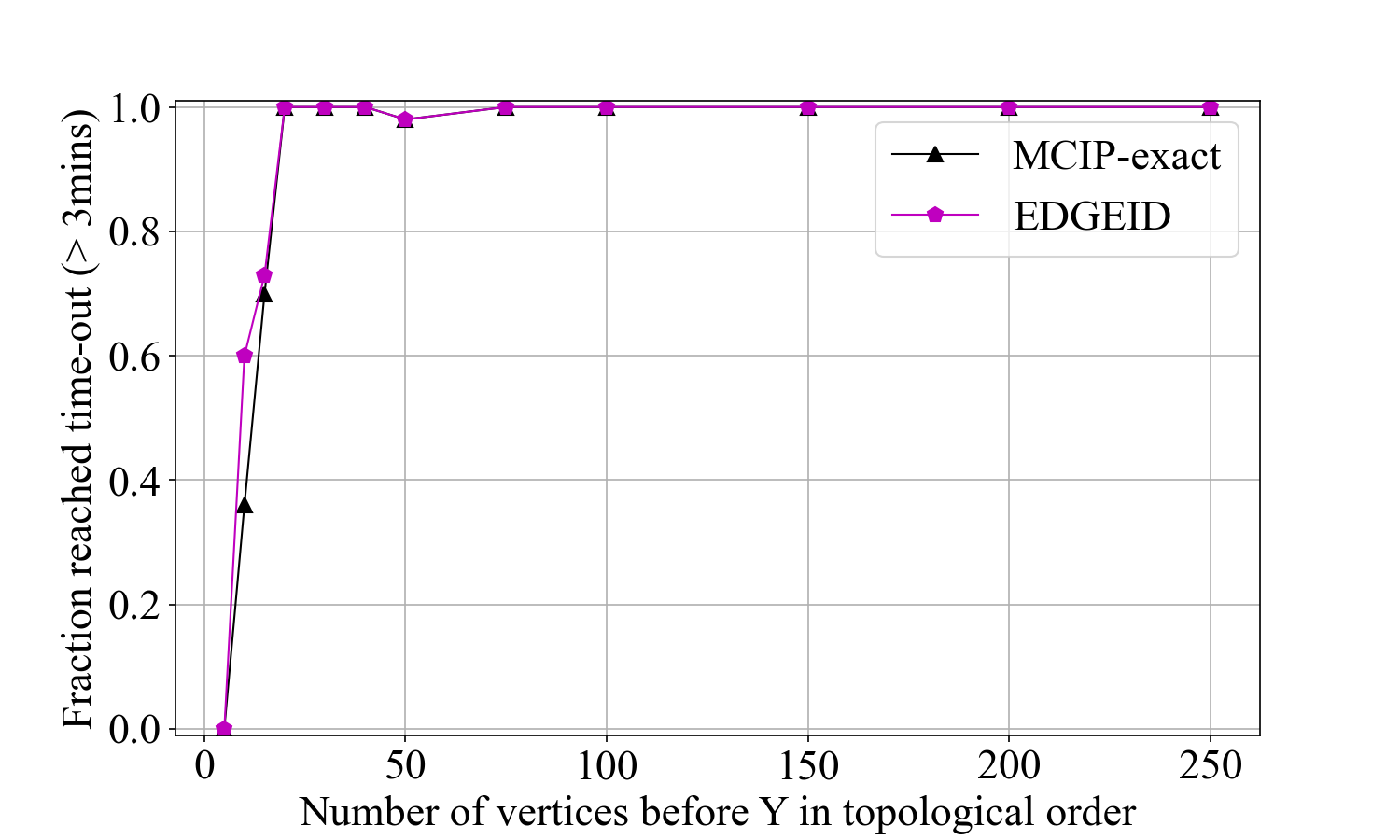}
        \caption[]%
        {{\small Fraction for which runtime of 3 minutes exceeded.}}
        \label{fig:runtimexceeded_nospars}
    \end{subfigure}
    \caption{Experimental results (for graphs generated without the sparsity constraint) for runtime, solution costs, fraction of graphs for which no solution was found, and fraction of graphs for which runtime limit of 3 minutes was exceeded. Error bars for runtime and cost graphs indicate 5th and 95th percentiles. Best viewed in color.}
    \label{fig:nosparsity}
\end{figure*}

\subsection{Additional simulation results without sparsity constraint}
The simulation results for graphs generated without the sparsity constraint are shown in Figure~\ref{fig:nosparsity}. These results illustrate monotonic increases in runtime and cost as the number of nodes increases. Our proposed heuristic algorithms (HEID-1 and HEID-2) maintain runtimes less than 0.5 seconds even for 250 nodes. In contrast, the two exact algorithms (MCIP-exact and EDGEID) exceed the three minute runtime limit at only 20 nodes, and the MCIP heuristic variants (MCIP-H1 and MCIP-H2) have runtimes which increase exponentially with the number of nodes. These results highlight the efficiency of our proposed heuristic algorithms to find solutions with equivalent cost with significantly faster runtimes.

\subsection{Psychology graph discovery}
The settings for deriving the putative structure used on the psychology real-world graph are provided in Table~\ref{tab:samsettings}.

\begin{table*}[ht]
\centering
\caption{Hyperparameter settings for the Structural Agnostic Model used to generate the putative (directed) structure for the `Psych' real-world dataset.}
\small \begin{tabular}{ll}
\toprule
\textbf{Parameter }         & \textbf{Value} \\ \hline
Learning Rate      & 0.01  \\
DAG Penalty        & True  \\
DAG Penalty Weight & 0.05  \\
Number of Runs     & 50    \\
Train Epochs       & 3000  \\
Test Epochs        & 800   \\
Mixed Data         & True  \\
hlayers            & 2     \\
dhlayers           & 2     \\
lambda1            & 10    \\
lambda2            & 0.001 \\
dlr                & 0.001 \\
linear             & False \\
nh                 & 20    \\
dnh                & 200 \\
\hline 
\bottomrule
\end{tabular}
\label{tab:samsettings}
\end{table*}

\subsection{Simulation results for Problem \ref{pb:2} formulation}\label{apx:pb2}
The experimental setup is exactly as in the main text (the results depicted in Figure \ref{fig:allresults}), except that the probabilities are chosen in the range $[0.01,1]$ instead of $[0.51,1]$, and we use the weight mapping corresponding to Problem \ref{pb:2} as described in Lemma \ref{lem:eq}.
That is, we map each probability $p_e$ to the weight $-\log(1-p_e)$ in the corresponding edge ID problem.

The simulation results are presented in Figure \ref{fig:allresultsapx}.
Runtimes and costs are shown for the subset of graphs for which all algorithms found a solution (to facilitate comparison). 
Runtimes for each algorithm are shown in Fig.~\ref{fig:runtimesapx}, where it can be seen that our proposed HEID-1 and HEID-2 heuristic algorithms have negligible runtime.
In contrast, EDGEID had large runtime variance which depended heavily on the specifics of the graph under evaluation, particularly for graphs with fewer vertices. 
The costs for each graph are shown in Fig.~\ref{fig:costsapx}.
Figure~\ref{fig:runtimexceededapx} shows the fraction of evaluations for which the runtime exceeded 3 minutes (applicable to the exact algorithms). 
In general, and owing to the sparsity penalty in our graph generation mechanism, the cost of identified solutions falls with the number of vertices. 
Overall, HEID-1 was both the most consistent in terms of finding a solution, having a short runtime, and achieving a close to optimal cost.
\begin{figure*}[t]
    \centering
    \begin{subfigure}[b]{0.49\textwidth}
        \centering
        \includegraphics[width=.9\textwidth]{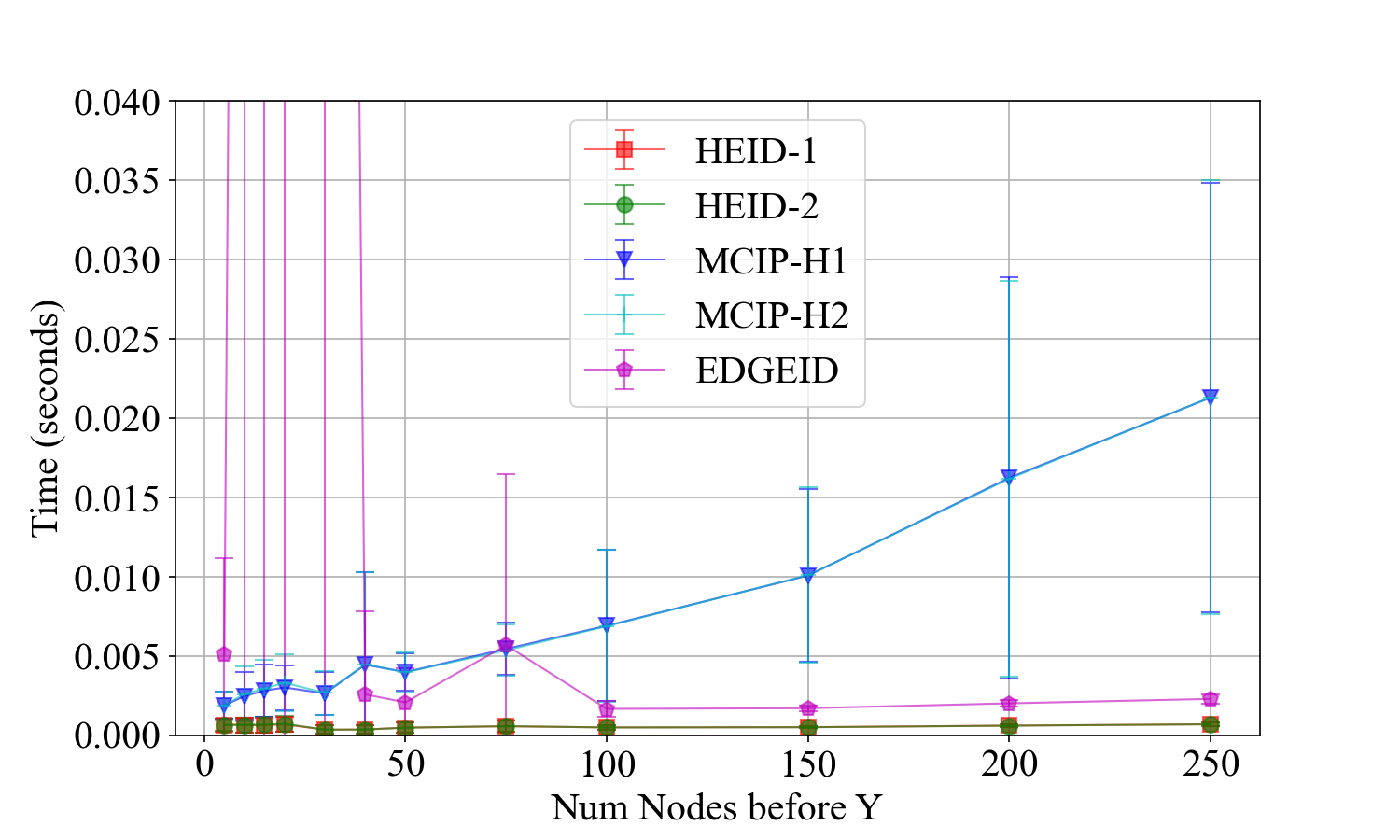}
        \caption[Network2]%
        {{\small Runtimes.}}    
        \label{fig:runtimesapx}
    \end{subfigure}
    \hfill
    \begin{subfigure}[b]{0.49\textwidth}  
        \centering 
        \includegraphics[width=.9\textwidth]{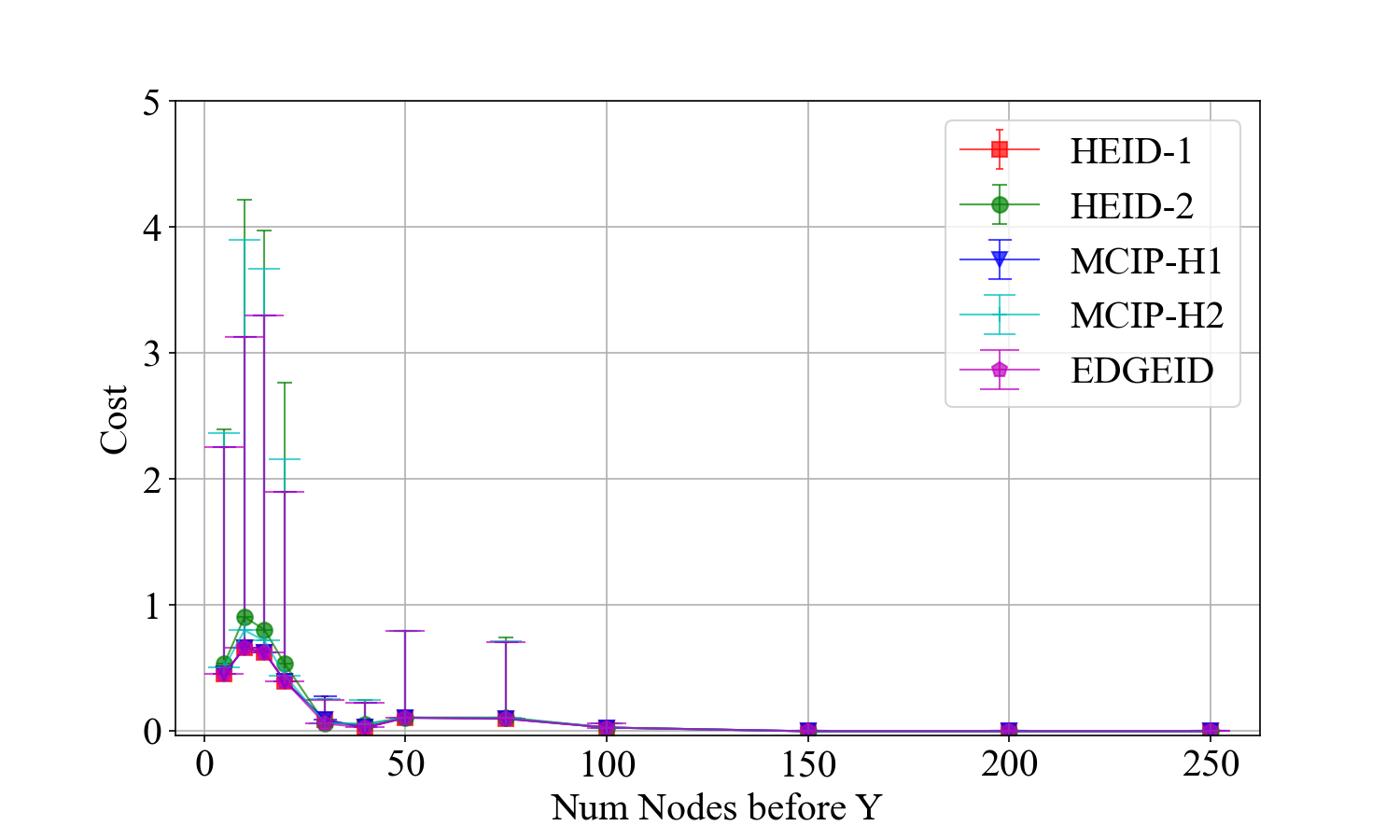}
        \caption[]%
        {{\small Solution costs.}}    
        \label{fig:costsapx}
    \end{subfigure}
    \begin{subfigure}[b]{0.49\textwidth}   
        \centering 
        \includegraphics[width=.9\textwidth]{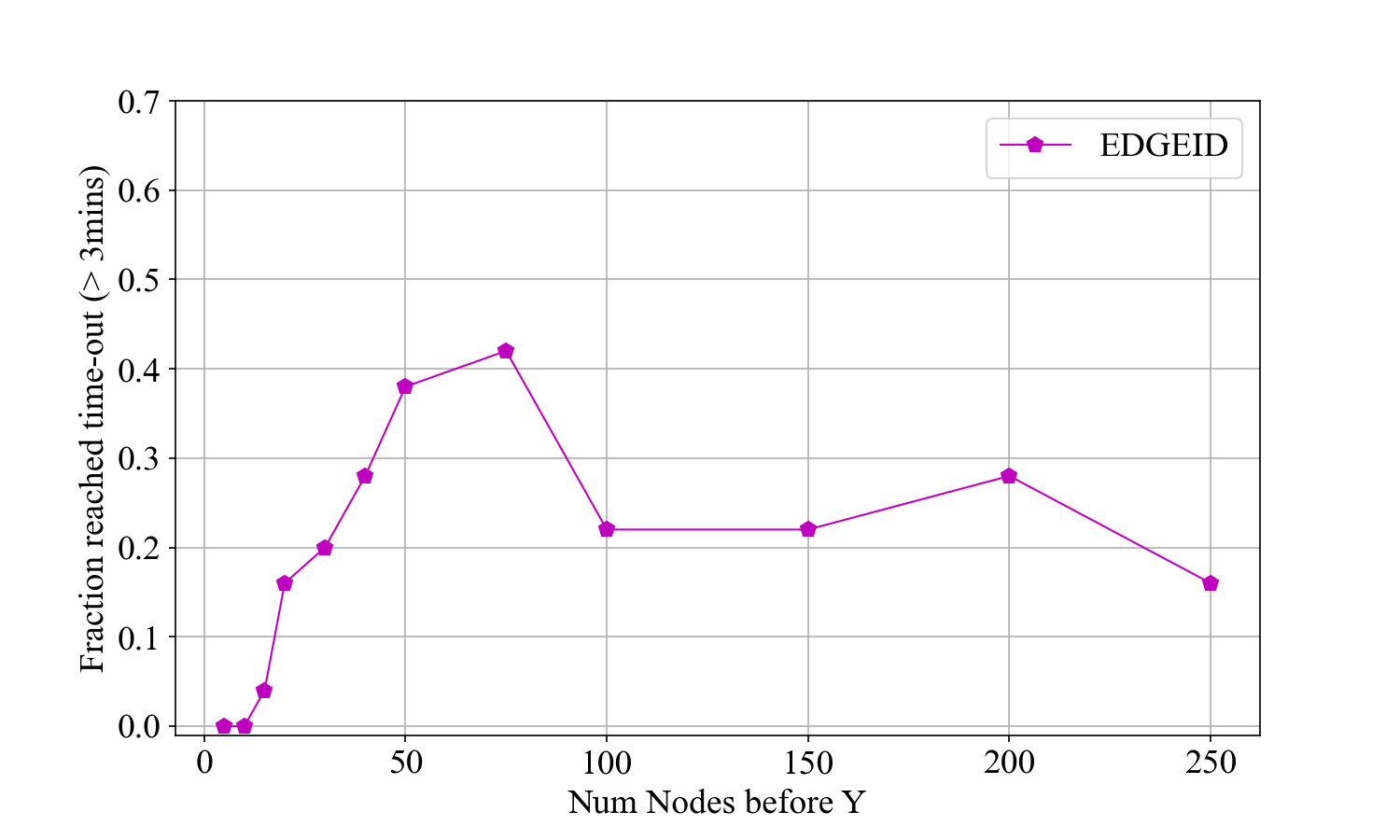}
        \caption[]%
        {{\small Fraction runtime exceeded 3 min.}}
        \label{fig:runtimexceededapx}
    \end{subfigure}
    \caption{Experimental results for runtime, solution costs, fraction of graphs for which no solution was found, and fraction of graphs for which runtime limit of 3 minutes was exceeded. Error bars for runtime and cost graphs indicate 5th and 95th percentiles. Best viewed in color.}
    \label{fig:allresultsapx}
\end{figure*}

\end{document}